\documentclass{article}

\usepackage{microtype}
\usepackage{graphicx}
\usepackage{subcaption}
\usepackage{booktabs} 

\usepackage{hyperref}

\newcommand{\mintheta}{\thetab^{-}}

\newcommand{\minH}{\underline{\mathbf{H}}}

\usepackage[accepted]{icml2026}

\usepackage{amsmath}

    \newcommand\numberthis{\addtocounter{equation}{1}\tag{\theequation}}
\usepackage[capitalize,noabbrev]{cleveref}

\usepackage{amssymb}
\usepackage{mathtools}
\usepackage{amsthm} 
\usepackage{abbreviations}
\usepackage{natbib} 
\usepackage{thmtools}

\addtocontents{toc}{\protect\setcounter{tocdepth}{0}} 

\icmltitlerunning{Generalized Linear Bandits with Memory}

\begin{document}

\twocolumn[
  \icmltitle{Generalized Linear Bandits with Memory}

  \icmlsetsymbol{equal}{*}

  \begin{icmlauthorlist}
    \icmlauthor{Heesang Ann}{equal,snu}
    \icmlauthor{Hyunjun Choi}{equal,snu}
    \icmlauthor{Taehyun Hwang}{equal,snu}
    \icmlauthor{Younghoon Shin}{snu}
    \icmlauthor{Haeju Cheong}{ssg}
    \icmlauthor{Min-hwan Oh}{snu}
  \end{icmlauthorlist}

  \icmlaffiliation{snu}{Seoul National University, Seoul, Republic of Korea}
  \icmlaffiliation{ssg}{Shinsegae, Seoul, Republic of Korea}

  \icmlcorrespondingauthor{Min-hwan Oh}{minoh@snu.ac.kr}

  \icmlkeywords{Machine Learning, ICML}

  \vskip 0.3in
]

\printAffiliationsAndNotice{\icmlEqualContribution}

\begin{abstract}
We study generalized linear bandits with memory, an endogenous non-stationary setting in which rewards depend on past actions through a finite memory matrix. Building on prior work for linear models~\citep{clerici2024linear}, we show that the previously known $\tilde{\mathcal{O}}(T^{3/4})$ regret bound stems from a loose analysis, and we provide a sharpened analysis that recovers a $\tilde{\mathcal{O}}(\sqrt{T})$ regret rate in the linear case. We then extend this improvement to generalized linear models and propose a block-wise algorithm based on shrunken confidence bounds.
Our algorithm achieves a regret bound of $\tilde{\mathcal{O}}\left(\sqrt{mT} + d\sqrt{T} + \sqrt{\kappa}\, d^{2} m^{1/4} T^{1/4} + \kappa d^{2} \right)$, where $d$ denotes the feature dimension, $m$ the memory length, and $\kappa$ a curvature parameter of the link function. This attains a $\sqrt{T}$-type rate despite nonlinear rewards and memory effects. To the best of our knowledge, this analysis provides a unified treatment of memory-induced non-stationarity and nonlinear link functions, while ensuring that the leading regret term is independent of the curvature of the link function. We conduct numerical experiments that are consistent with our theoretical findings.
\end{abstract}

\section{Introduction}
In real-world recommendation systems, users' preferences are often inherently nonstationary and may evolve through repeated interactions.
For example, repeatedly recommending similar items may induce fatigue, making even initially preferred items less effective over time.
Such action-induced nonstationarity has been studied through rotting bandits~\citep{Bouneffouf2016multi-armed, levine2017rotting, seznec2019rotting}, rising bandits~\citep{metelli2022stochastic}, and related models where past actions affect future rewards~\citep{kleinberg2018recharging, basu2019blocking, cella2020stochastic, leqi2021rebounding, simchi2021dynamic}.
However, these models are mostly developed for finite-arm settings, and the reward evolution is typically tied to the selected arm itself.
They therefore do not fully capture feature-level cross-effects, where one action can alter the rewards of similar or related future actions.

A recent work on linear bandits with memory~\citep{clerici2024linear} provides a principled framework for modeling such endogenous non-stationarity.
In this model, the expected reward is linear in the current action, while the effective preference parameter is transformed by a memory matrix determined by previously selected actions (Figure~\ref{fig:BM}).
This formulation captures both rotting and rising effects over structured action spaces, allowing past actions to influence the rewards of related future actions.
To handle the resulting long-term dependence, \citet{clerici2024linear} reduce the problem to that of learning a cyclic block policy: the learner selects a block of actions, executes it, and updates its estimate only after observing the rewards from the block.
This reduction reveals a key structural difficulty of bandits with memory: the learner must commit to multiple actions before receiving feedback and updating the model.

The best known regret guarantee for this linear bandit with memory model is of order \(\widetilde{\Ocal}(T^{3/4})\)~\citep{clerici2024linear}.
This rate is obtained by balancing two sources of error.
The first is the approximation error incurred by restricting the policy class to cyclic block policies, which decreases as the block length grows.
The second is the estimation error for learning a good block, which was previously bounded by a term that scales linearly with the block length.
This raises our first research question:
    \textit{Is the \(\widetilde{\Ocal}(T^{3/4})\) regret barrier intrinsic to linear bandits with memory, or an artifact of loose block-wise uncertainty control?}

Our observation is that this suboptimal regret rate is not intrinsic to memory-induced non-stationarity.
Rather, it stems from a mismatch in the existing analysis.
The estimator is updated only after a block is executed, but the previous analysis controls within-block uncertainty using an auxiliary Gram matrix that grows inside the block.
This treats the actions in a block as sequentially adaptive decisions, even though they are chosen jointly from the same pre-block information.
Since the learner commits to the entire block before receiving feedback, a block is better viewed as a single combinatorial decision whose components are the actions placed in the block.
This block-level viewpoint allows us to control within-block uncertainty using the information structure available before execution and removes the loose linear dependence on the block length.

Answering the first question is not only important for sharpening the regret guarantee of the linear model.
It also identifies the main principle needed for learning with memory: one should exploit the joint structure of actions within a block while respecting the fact that feedback and parameter updates are delayed until the block is completed.
This principle becomes substantially more delicate once the reward model is no longer linear.
Indeed, linear rewards are often too restrictive for applications such as clicks, conversions, or ratings, where the expected reward is bounded and nonlinear in a latent preference score~\citep{filippi2010parametric, li2017provably, faury2020improved}.
\textit{Generalized linear models} (GLMs) provide a natural extension, allowing the reward mean to be represented through a link function.
However, they also change the nature of uncertainty control.

In linear models, confidence widths depend only on feature geometry.
In GLMs~\citep{faury2020improved, zhang2025generalized}, they also depend on the local slope, or curvature, of the link function.
This dependence is commonly captured through a curvature parameter, which may significantly affect the regret bound.
In the memory setting, this issue is further complicated by block-wise decision making.
Since the learner constructs an entire block before observing feedback, it cannot update either the parameter estimate or the curvature-weighted Gram matrix while choosing actions within the block.
Thus, the same limited-adaptivity structure that causes the loose analysis in the linear case now interacts with curvature-dependent confidence geometry.
This motivates our second research question:
    \textit{Can block-wise uncertainty control be extended to GLMs when curvature information is unavailable within a block?}

In this work, we answer these questions in two steps. 
First, for linear bandits with memory, we revisit the regret analysis of the \texttt{OFUL-memory} algorithm~\citep{clerici2024linear} and reinterpret each block as a single combinatorial decision.
This removes the artificial within-block design mismatch in the previous analysis and improves the regret guarantee from \(\widetilde \Ocal(T^{3/4})\) to \(\widetilde \Ocal(\sqrt T)\), without modifying the original algorithm.
Second, we extend this block-level principle beyond linear rewards by introducing generalized linear bandits with memory and developing a block-wise algorithm based on \textit{online mirror descent} (OMD) estimation and a block-adaptive confidence bonus.
Rather than assigning each action in a block a confidence width computed only from pre-block data, our algorithm refines the uncertainty calculation as the block is constructed.
In particular, it incorporates the feature information of actions already placed in the current block, while relying on auxiliary estimators to handle the curvature-dependent confidence geometry of GLMs without observing within-block feedback.
This design yields a \(\widetilde{\Ocal}(\sqrt T)\)-type regret bound whose leading term does not depend on the curvature parameter of the link function.
Our main contributions are summarized as follows.

\begin{figure*}[t]
    \centering
    \includegraphics[width=0.95\textwidth]{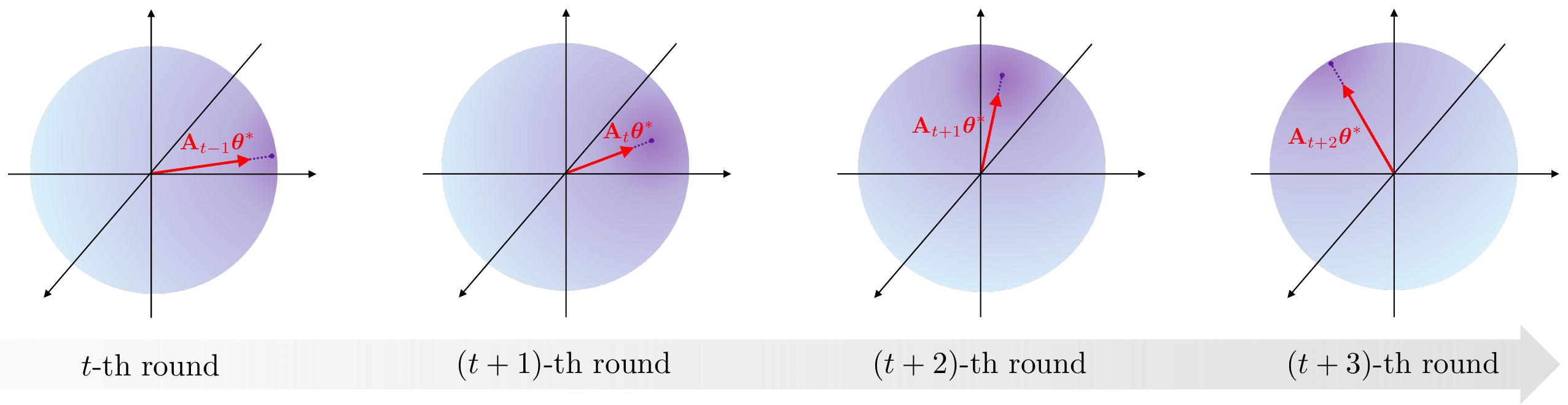}
    \caption{
    Illustration of how nonstationary preferences are modeled in the bandit with memory setting. The sphere represents the action space, and the red arrow denotes the preference vector $\Ab\thetab^*$. The preference vector is transformed by a memory matrix that depends on previously selected arm features, resulting in different optimal arms at each round. The color gradient visualizes the distribution of expected rewards over actions at each round: actions in darker purple regions yield higher rewards, whereas those in lighter blue regions yield lower rewards.}
    \label{fig:BM}
\end{figure*}

\begin{itemize}
    \item 
    We provide a sharper regret analysis of the \texttt{OFUL-memory} algorithm for linear bandits with memory~\citep{clerici2024linear}.
    Without modifying the algorithm, our analysis improves its regret guarantee from
    \(\widetilde O(\sqrt{dm}\,T^{3/4})\) to
    \(\widetilde O(\sqrt{dmT}+d\sqrt T)\),
    where \(T\) is the horizon, \(d\) is the feature dimension, and \(m\) is the memory length.
    The key observation is that the previous analysis treats actions within a block as if they were sequentially controlled, leading to an estimation error that scales linearly with the block length.
    Instead, by viewing each block as a single combinatorial decision and adapting techniques from combinatorial bandits~\citep{jin2021shrinking}, we obtain tighter block-wise uncertainty bounds.
    
    \item 
    We introduce generalized linear bandits with memory, extending the linear memory model to nonlinear reward distributions through a link function.
    This framework captures both memory-induced non-stationarity and nonlinear reward feedback, such as binary clicks or bounded ratings.

    \item 
    We propose a block-wise confidence-bound algorithm, \(\algname\), for generalized linear bandits with memory.
    The proposed algorithm estimates the unknown parameter using OMD and uses auxiliary estimators to approximate the curvature-dependent confidence geometry during block construction.
    Based on this estimated geometry, it shrinks the confidence bonus by incorporating the feature information of actions already placed in the current block, thereby exploiting within-block feature information without requiring within-block feedback or parameter updates.
    We prove that the proposed algorithm achieves a \(\widetilde{\Ocal}(\sqrt T)\) regret bound whose leading term is independent of the curvature parameter of the link function.
    To the best of our knowledge, this is the first such guarantee for action-induced non-stationary generalized linear bandits.

    \item 
    We provide numerical experiments that support our theoretical findings and illustrate the empirical performance of the proposed methods.
\end{itemize}

\section{Preliminaries}

\paragraph{Notations.} 
For a vector \(\xb \in \RR^d\), we write \(\|\xb\|_2\) for the Euclidean norm and
\(\|\xb\|_{\Ab} = \sqrt{\xb^\top \Ab \xb}\) for the weighted norm induced by a positive semi-definite matrix \(\Ab \in \RR^{d \times d}\). We denote a $d$-dimensional ball with radius $r$ by $\Bcal_d(r)$. 
For a symmetric matrix \(\Ab\), we denote its spectral norm by \(\|\Ab\|_*\), i.e.,
$
    \|\Ab\|_* = \max_i |\lambda_i(\Ab)|,
$
where \(\lambda_i(\Ab)\) are the eigenvalues of \(\Ab\).
For a function \(\mu\), we denote its first and second derivatives by
\(\dot{\mu}\) and \(\ddot{\mu}\), respectively.

\subsection{Generalized Linear Bandits with Memory}
In this section, we introduce the model of generalized linear bandits with memory, which extends the linear bandit with memory framework of~\citet{clerici2024linear} to generalized linear reward models.
At each round $t \in [T]$, the agent selects an action $\xb_t$ from a (possibly infinite) action set $\Xcal \subset \RR^d$, and observes a stochastic reward $y_t$.
We consider a generalized linear model with a canonical exponential family reward distribution~\citep{mccullagh1989generalized}, where the conditional mean is parameterized by an \textit{unknown} parameter $\thetab^* \in \mathbb{R}^d$ through a known, strictly increasing link function $\mu(\cdot)$.
In contrast to stationary generalized linear bandits~\citep{li2017provably}, the expected reward at time $t$ is influenced not only by the current action but also by previously selected actions, capturing non-stationary effects induced by memory.
Formally, this dependence is modeled through a memory matrix $\Ab_{t-1} := \Ab(\xb_{t-m}, \ldots, \xb_{t-1})$ where $m \ge 0$ denotes the memory length.
For notational convenience, we set \(\xb_j=\mathbf 0\) for all \(j\le 0\).
Then, the conditional expectation of the reward satisfies
\begin{equation} \label{eq:glm-memory-mean}
    \EE\left[y_t \mid \Fcal_{t-1}, \xb_t \right]
    =
    \mu\left( \langle \xb_t, \Ab_{t-1} \thetab^* \rangle \right),
\end{equation}
where $\Fcal_{t-1}$ is the filtration generated by the sequence of past actions and rewards up to round $t-1$.
We define the noise term as
\(
\eta_t := y_t - \mu(\langle \xb_t, \Ab_{t-1}\thetab^* \rangle),
\)
which satisfies
\(
\mathbb E[\eta_t \mid \Fcal_{t-1}] = 0.
\)
Throughout the main text, we work with the conditional mean model in Eq.~\eqref{eq:glm-memory-mean}; a canonical exponential-family specification that induces this model is provided in Appendix~\ref{appx:canonical-family}.

To be specific, we define the memory matrix $\Ab(\cdot)$ as
\begin{equation*}
    \Ab(\xb_1, \ldots, \xb_m):= \left( \Ib_d + \sum_{s=1}^m \xb_{s} \xb_{s}^\top \right)^\gamma \, ,
\end{equation*}
where $\gamma \in \RR$ controls both the strength and the nature of the memory effect. 
In particular, $\gamma > 0$ corresponds to rising (or excitation) effects, when $\gamma < 0$ captures rotting (or satiation) effects, and $\gamma=0$ recovers the stationary generalized linear bandit model.
The memory matrix $\Ab_{t-1}$ aggregates information from previously selected actions and naturally encodes directions along which excitation or satiation occurs via its eigenstructure. 

\begin{remark}
The above definition is adopted without loss of generality. More generally, one can define and analyze a model with the memory matrix $\left( \Ab_0 + \sum_{s=1}^m \xb_{s} \xb_{s}^\top \right)^\gamma$, provided that $\|(\Ab_0)^\gamma\|_* \le C$ for some $C \in \mathbb{R}$.
\end{remark}

In this setting, the goal of the agent is to minimize the expected cumulative regret over \(T\) rounds, defined as
\begin{equation*} 
    \regret_T
    =
    \mathrm{OPT}(T)
    -
    \EE\left[ \sum_{t=1}^T y_t \right] \, ,
\end{equation*}
where \(\mathrm{OPT}(T)\) denotes the maximum expected cumulative reward achieved by an optimal sequence of actions:
\begin{equation*}
    \mathrm{OPT}(T)
    :=
    \max_{\xb_1,\ldots,\xb_T \in \Xcal}
    \sum_{t=1}^T
    \mu\!\left(
        \left\langle
            \xb_t,
            \Ab(\xb_{t-m},\ldots,\xb_{t-1})\thetab^*
        \right\rangle
    \right) \, .
\end{equation*}

Following standard assumptions in the generalized linear bandit literature~\citep{li2017provably, russac2020algorithms, zhang2023online, wang2023revisiting, zhang2025generalized}, we make the following assumptions and introduce the curvature-related parameter of the link function.

\begin{assumption}[Boundedness] \label{assm:bounded domain}
    The action set \(\Xcal \subset \RR^d\) is closed and satisfies
    \(\|\xb\|_2 \le 1\) for all \(\xb \in \Xcal\).
    There exists \(C_\theta > 0\) such that \(\|\thetab^*\|_2 \le C_\theta\).
\end{assumption}

Under Assumption~\ref{assm:bounded domain}, the memory matrix is uniformly bounded.
Indeed, for any history \((\xb_{t-m},\ldots,\xb_{t-1})\), we have
$
    \left\|
    \Ib + \sum_{s=1}^m \xb_{t-s}\xb_{t-s}^{\top}
    \right\|_*
    \le 1 + \sum_{s=1}^m \|\xb_{t-s}\|_2^2
    \le m+1 \, .
$
Hence, with \(\gamma^+ := \max\{\gamma,0\}\), 
$
    \|\Ab_{t-1}\|_*
    \le (m+1)^{\gamma_+} \, .
$
We denote this quantity by
$
    R := (m+1)^{\gamma_+} \, .
$
Consequently, the effective preference parameter satisfies
\[
    \|\Ab_{t-1}\thetab^*\|_2
    \le
    \|\Ab_{t-1}\|_* \|\thetab^*\|_2
    \le R C_\theta .
\]
In particular, in the rotting case \(\gamma < 0\), we have \(R=1\).

\begin{assumption}[Link function] \label{assm:bounded link}
    The link function $\mu$ is twice differentiable over its feasible domain. 
    There exist constants $l_\mu > 0$ and $U_\mu > 0$ such that $l_\mu \le \dot \mu(z) \le U_\mu$ for all $z \in [-RC_\theta, RC_\theta]$.
\end{assumption}

\begin{assumption}[Self-concordance] \label{assm:self concordance}
The link function satisfies $\lvert \ddot \mu(z)\rvert \le \nu \cdot \dot \mu(z)$ for all $z\in \RR$.
\end{assumption}

\begin{definition}[Curvature-related nonlinearity parameter]
    We quantify the degree of nonlinearity of the link function over the feasible
    decision set by
    $
        \kappa
        :=
        \sup_{\xb \in \Xcal,\, \thetab \in \Dcal}
        1/\dot{\mu}(\xb^\top \thetab) ,
    $
    where
    $
        \Dcal
        :=
        \left\{
        \thetab \in \RR^d : \|\thetab \|_2 \le R C_\theta
        \right\} \, .
    $
\end{definition}

\subsection{Approximating Cyclic Policy}

A natural but naive approach is to select actions greedily with respect to the current expected reward.
However, because rewards depend on past actions, a currently optimal action may deteriorate the future reward structure through rotting effects or excessively reinforce certain directions through rising effects, potentially leading to substantial long-term losses.
For the linear reward model, \citet{clerici2024linear} show that even an oracle greedy policy, which has full knowledge of \(\thetab^*\), can suffer linear regret in the worst case.
This obstruction is not specific to linear rewards.
In generalized linear models, the link function is strictly increasing, so myopically maximizing
\(\mu(\langle \xb,\Ab_{t-1}\thetab^*\rangle)\)
is equivalent to myopically maximizing the latent score
\(\langle \xb,\Ab_{t-1}\thetab^*\rangle\).
Moreover, under our boundedness and link-function assumptions, the derivative
of \(\mu\) is bounded away from zero on the feasible domain.
Hence, a constant gap in latent scores translates into a constant gap in
expected rewards.
The following proposition formalizes that oracle greedy can suffer linear regret even in generalized linear bandits with memory.

\begin{restatable}[Failure of oracle greedy]{proposition}{FailureOfOracleGreedy}
    \label{prop:failure-of-oracle-greedy}
    Let \(\mu\) be a continuous and strictly increasing link function on the feasible score domain.
    Then, for generalized linear bandits with memory, there exist both rotting and rising instances in which the oracle greedy policy
    $
        \pi_t^{\mathrm{greedy}}
        \in
        \argmax_{\xb \in \Xcal}
        \mu\!\left(\langle \xb, \Ab_{t-1}\thetab^* \rangle\right)
    $
    suffers linear regret.    
\end{restatable}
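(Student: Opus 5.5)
Because $\mu$ is strictly increasing, the oracle greedy policy for the GLM instance selects exactly the same actions as oracle greedy for the linear instance with the same $(\Xcal,\thetab^*,\gamma,m)$, since $\argmax_{\xb}\mu(\langle \xb,\Ab_{t-1}\thetab^*\rangle)=\argmax_{\xb}\langle \xb,\Ab_{t-1}\thetab^*\rangle$. The plan is therefore to construct explicit rotting and rising instances in which greedy's latent score sequence is worse than that of a simple alternative (e.g.\ cyclic) policy by a constant on a constant fraction of rounds. We then transfer this score gap to a reward gap using the mean value theorem, $\mu(a)-\mu(b)\ge l_\mu(a-b)$ for $a,b\in[-RC_\theta,RC_\theta]$. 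Here $l_\mu$ is the lower bound on $\dot\mu$ from Assumption~\ref{assm:bounded link}; if only continuity and strict monotonicity are assumed, we instead use compactness: $\mu(a)-\mu(b)\ge \delta>0$ whenever $a\ge b+c$ on the compact score interval, by uniform continuity/strict monotonicity. Since $\mathrm{OPT}(T)$ is at least the reward of the alternative policy, this gives regret at least $\delta\,\Omega(T)$.

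\emph{Rotting instance} ($\gamma<0$, say $\gamma=-1$, $m=1$, $d=2$). Take $\Xcal=\{\mathbf e_1,\mathbf e_2\}$ and $\thetab^*=(1,\beta)$ with $\beta<1$ slightly. After playing $\mathbf e_1$, the memory matrix is $\Ab=\mathrm{diag}(2^{\gamma},1)$, so the scores are $2^{\gamma}$ for $\mathbf e_1$ and $\beta$ for $\mathbf e_2$. Choosing $\beta<2^{\gamma}$ makes greedy repeat $\mathbf e_1$ forever (the first round has $\Ab=\Ib$ and picks $\mathbf e_1$), with per-round score $2^{\gamma}$. Alternating $\mathbf e_1,\mathbf e_2$ yields scores $1$ and $\beta$, for an average of $(1+\beta)/2$. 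Picking $\beta\in\bigl(2\cdot 2^{\gamma}-1,\,2^{\gamma}\bigr)$, e.g.\ $\gamma=-1$ and $\beta=0.4$, gives a per-pair score advantage. Alternatively, and more cleanly, compare per-round outcomes: over each two rounds the alternating policy's score multiset $\{1,\beta\}$ versus greedy's $\{2^\gamma,2^\gamma\}$ gives, after applying $\mu$, a sum gap $\mu(1)+\mu(\beta)-2\mu(2^\gamma)$. This is not automatically positive for nonlinear $\mu$, so I will instead take $\beta$ close to $2^{\gamma}$ from below and $\gamma$ very negative so that $2^\gamma\approx 0$: then $\mu(1)+\mu(\beta)-2\mu(2^\gamma)\approx\mu(1)-\mu(0)>0$. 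The needed $\beta<2^\gamma$ is fine, and $\|\thetab^*\|$ stays bounded.

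\emph{Rising instance} ($\gamma>0$). Use $\Xcal=\{\mathbf e_1,\mathbf e_2\}$, $\thetab^*=(1,1-\epsilon)$, $m=1$. Greedy first picks $\mathbf e_1$ and then keeps picking it, because its score $2^\gamma$ exceeds $1-\epsilon$; this is in fact optimal, so a different design is needed. I would make greedy alternate: place the highest static score on an action whose self-excitation is weak compared to the excitation it would give another action. For instance, take actions $\mathbf e_1$ and $\ub=(\mathbf e_1+\mathbf e_2)/\sqrt2$ with $\thetab^*$ aligned so that greedy is lured to a myopically better but non-reinforcing direction, while repeatedly playing one fixed action earns $(1+\|\xb\|^2)^\gamma\langle \xb,\thetab^*\rangle$, a constant fraction more. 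I expect the main obstacle to be tuning this rising example, i.e.\ verifying greedy's trajectory stays in a cycle that is strictly suboptimal. My fallback is to take the known linear counterexample of \citet{clerici2024linear} with unit-length memory, where greedy enters a periodic pattern, and check that the constant score gaps survive the transfer through $\mu$ via the compactness bound $\delta$.
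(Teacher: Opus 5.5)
Your rotting half is correct and is essentially the paper's construction. The paper takes $\thetab^*=\eb_1$ (your $\beta=0$) and sends $\gamma\to-\infty$, so that $\mu(2^{\gamma})<\tfrac12(\mu(1)+\mu(0))$ by continuity. The genuine gap is the rising half, which the statement requires and which you never construct. Your concrete suggestion cannot work as described. Take the isotropic model with $m=1$ and unit-norm actions such as $\eb_1$ and $(\eb_1+\eb_2)/\sqrt2$. Greedy's first choice is the action $\mathbf a$ of largest static score $s_a=\langle\mathbf a,\thetab^*\rangle$. Since $(\Ib+\mathbf a\mathbf a^\top)^\gamma=\Ib+(2^\gamma-1)\mathbf a\mathbf a^\top$, every action $\mathbf c$ afterwards scores $\langle\mathbf c,\thetab^*\rangle+(2^\gamma-1)\langle\mathbf c,\mathbf a\rangle s_a\le 2^\gamma s_a$ when $s_a>0$. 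So greedy can keep $\mathbf a$ and earn $2^\gamma s_a$, which is exactly the value of the best fixed-action policy. In fact, if all static scores are nonnegative, the same bound caps every policy's round-$t$ score, so greedy is optimal. Comparing against \emph{repeatedly playing one fixed action} therefore produces no gap.

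Your fallback is not a proof either: an unspecified periodic linear counterexample of \citet{clerici2024linear}, transferred through the compactness constant $\delta$. The constant $\delta$ only converts per-round score dominance into reward dominance. As you noticed yourself in the rotting case, an advantage that holds only on average over a cycle need not survive a nonlinear $\mu$.

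The missing idea is a rising instance where the reference policy's latent score beats greedy's at every round. The paper does this with a non-isotropic initial memory $\Ab_0=\eb_1\eb_1^\top$, which the remark on general $\Ab_0$ permits. It takes $\gamma=1$, $m=1$, $\Xcal=\{\eb_1,\eb_2\}$, and $\thetab^*=(\epsilon,\sqrt{1-\epsilon^2})$ with $2\epsilon<\sqrt{1-\epsilon^2}$. Then $\eb_2$ has myopic score $0$ unless it was played in the previous round, so greedy keeps $\eb_1$ at score $2\epsilon$. Always playing $\eb_2$ scores $\sqrt{1-\epsilon^2}$ from round $2$ on, and strict monotonicity of $\mu$ alone gives $\Omega(T)$ regret.

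If you prefer to stay in the isotropic model, your own $(1+\|\xb\|^2)^\gamma$ remark points to a fix via a short action. Take $\Xcal=\{\tfrac12\eb_1,\eb_2\}$, $\gamma=1$, and $\thetab^*=(1,0.4)$. Greedy picks and keeps $\tfrac12\eb_1$, with score $0.5$ and then $0.625$, against $0.4$ for $\eb_2$. Always playing $\eb_2$ scores $0.8$ from round $2$ on, again with per-round dominance.
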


To address this challenge, \citet{clerici2024linear} restrict the policy class to cyclic policies, which repeatedly execute a fixed action block of length \(m+L\).
For any \(m,L\ge1\) and any action block \(\xb=(\xb_1,\ldots,\xb_{m+L})\), define its proxy reward as
\begin{equation}
\label{eq:proxy reward}
    \tilde r(\xb)
    :=
    \sum_{t=m+1}^{m+L}
    \mu\!\left(
        \left\langle
            \xb_t,
            \Ab(\xb_{t-m},\ldots,\xb_{t-1})\thetab^*
        \right\rangle
    \right).
\end{equation}
They show that the approximation error incurred by repeatedly playing an optimal proxy block
$
    \tilde{\xb}
    \in
    \argmax_{\xb\in\Xcal^{m+L}} \tilde r(\xb)
$
in a cyclic manner is of order
$
    \Theta\!\left(\frac{mT}{m+L}\right)
$
up to lower-order boundary terms (Appendix~\ref{appx:approximation}).

With this cyclic approximation, the memory dependence is localized within each block, and the problem of identifying an optimal block can be reduced to a stationary linear bandit problem.
The regret analysis then balances two terms: the approximation error induced by restricting to cyclic policies and the estimation error incurred in learning the optimal block.
Optimizing the block length under this trade-off yields the \(\widetilde{\Ocal}(T^{3/4})\) regret bound of~\citet{clerici2024linear}.

In the following section, we build on this block-wise reduction and study how to learn the optimal block more efficiently.
For simplicity, we assume throughout that \(T\) is a multiple of \(m+L\); this only affects lower-order boundary terms and does not change the regret order.
We show that the same linear-bandit algorithm admits a sharper \(\widetilde{\Ocal}(\sqrt T)\) regret bound under a refined analysis, and then extend the block-wise learning principle to generalized linear rewards.

\section{Main Results}

\subsection{Improved Regret Bound for Linear Rewards}

We first revisit the linear bandit with memory model, which corresponds to the identity-link special case of our generalized linear reward model.
Our goal in this section is to show that the previously known \(\widetilde{\Ocal}(T^{3/4})\) regret bound for \texttt{OFUL-memory}~\citep{clerici2024linear} is not intrinsic to the memory structure.
Rather, the suboptimal rate comes from a loose treatment of within-block uncertainty.
We show that, without modifying the algorithm, a refined block-level analysis yields a \(\widetilde{\Ocal}(\sqrt T)\)-type regret bound.

\paragraph{Existing analysis by~\citet{clerici2024linear}.}
\citet{clerici2024linear} reduce the problem of identifying an optimal block maximizing the proxy reward in Eq.~\eqref{eq:proxy reward} to a stationary linear bandit problem, and propose a block variant of OFUL, referred to as \texttt{OFUL-memory} (Algorithm~\ref{alg:OM}).
Their analysis obtains an estimation regret bound of order \(\widetilde{\Ocal}(Ld\sqrt T)\) with respect to the optimal block.
Balancing this estimation error with cyclic approximation error of order \(\Ocal(T/L)\) leads to the choice \(L=\Ocal(T^{1/4})\), and to the regret bound \(\widetilde{\Ocal}(T^{3/4})\).

The key source of looseness is how the within-block uncertainty is aligned with the confidence radius.
In this block-wise algorithm, the reward parameter is estimated only after each block is completed, so the confidence set is constructed using a block-level Gram matrix.
However, to apply an elliptical-potential argument to individual actions inside a block, the previous analysis introduces an auxiliary Gram matrix that is incrementally updated within the block.

To make this mismatch explicit, let \(\bb_{\tau,i}:=\Ab_{\tau,i-1}\xb_{\tau,i}\) be the memory-dependent feature of the \(i\)-th action in block \(\tau\).
Let \(\Vb_\tau\) denote the Gram matrix formed from past blocks, and let \(\Vb_{\tau,i}:=\Vb_\tau+\sum_{j=m+1}^{i}\bb_{\tau,j}\bb_{\tau,j}^{\top}\) be the auxiliary Gram matrix that also includes actions already placed in the current block.
Thus, \(\Vb_\tau\) is used to construct the confidence set at the beginning of the block, whereas \(\Vb_{\tau,i}\) is used to sum within-block uncertainties.
The previous analysis controls the prediction error as
\[
    \langle \bb_{\tau,i}, \widetilde{\thetab}_{\tau}-\thetab^* \rangle
    \le
    \|\widetilde{\thetab}_{\tau}-\thetab^*\|_{\Vb_{\tau,i-1}}
    \|\bb_{\tau,i}\|_{\Vb_{\tau,i-1}^{-1}},
\]
where \(\widetilde{\thetab}_{\tau}\) is the optimistic parameter selected at the beginning of block \(\tau\).
The issue is that \(\widetilde{\thetab}_{\tau}\) is certified only in the block-level norm induced by \(\Vb_\tau\), whereas the bound above requires certification in the within-block norm induced by \(\Vb_{\tau,i-1}\).
Since \(\Vb_{\tau,i-1}\) contains actions from the current block, the confidence radius must be enlarged, and this enlargement scales with the block length \(L\).
Consequently, the estimation regret is bounded as
$
    \widetilde{\Ocal}(\beta L\sqrt{dT}),
$
so that \(L\) appears multiplicatively in the leading term.

\paragraph{Block-level combinatorial viewpoint.}
Our analysis is based on the observation that actions within a block are chosen jointly before any feedback from that block is observed.
Thus, a block is more naturally viewed as a single combinatorial decision~\citep{qin2014contextual, wen2015efficient, jin2021shrinking} rather than as a sequence of adaptively selected actions.
This viewpoint suggests that one should not first move to the within-block Gram matrix and then enlarge the confidence radius.
Instead, we keep the confidence radius with respect to the block-level Gram matrix \(\Vb_\tau\), which is the matrix used to construct the confidence set at the beginning of block \(\tau\), and handle the remaining mismatch separately.

Concretely, for the memory-dependent feature \(\bb_{\tau,i}\), our analysis controls the prediction error as
\begin{align*}
    & \ips{\bb_{\tau,i}}{\tilde{\thetab}_{\tau}-\thetab^*}
     \le
    \|
        \widetilde{\thetab}_{\tau}-\thetab^*
    \|_{\Vb_\tau}
    \left\|
        \bb_{\tau,i}
    \right\|_{\Vb_\tau^{-1}}
    \\
    &
    \le
    \beta
    \left\|
        \bb_{\tau,i}
    \right\|_{\Vb_{\tau,i-1}^{-1}}
    +
    \beta
    (
        \left\|
            \bb_{\tau,i}
        \right\|_{\Vb_\tau^{-1}}
        -
        \left\|
            \bb_{\tau,i}
        \right\|_{\Vb_{\tau,i-1}^{-1}}
    ) \, .
\end{align*}

The first term is controlled by the standard elliptical-potential argument and contributes
\(\widetilde{\Ocal}(\beta\sqrt{dT})\).
The second term captures the discrepancy between the block-level and within-block Gram matrices; using an eigenvalue telescoping argument from~\citet{jin2021shrinking}, it can be bounded by \(\widetilde{\Ocal}(\beta Ld)\).
Consequently, the block length \(L\) no longer multiplies the \(\sqrt T\) term.
This is the key sense in which the combinatorial viewpoint decouples the leading estimation term from the block length, yielding a sharper regret bound without modifying the original algorithm.

\begin{theorem}[Improved Regret of~\texttt{OFUL-memory}~\citep{clerici2024linear}, informal] \label{thm:improved regret for linear model informal}
    Suppose that Assumption~\ref{assm:bounded domain} holds. By setting the block length $L = \lfloor m^{1/2} d^{-1/2}T^{1/2} -m \rfloor$, 
    \texttt{OFUL-memory} (Algorithm~\ref{alg:OM}) achieves a cumulative regret bound of order
    \begin{equation*}
        \regret_T = \widetilde{\Ocal}\left(R \sqrt{m d T} + \sqrt{d \max\{d, R^2\}T} \right) \, .
    \end{equation*}
\end{theorem}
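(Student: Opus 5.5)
We split the regret into a cyclic approximation error and a block-wise estimation error, and bound each separately. Let $\tilde\xb\in\argmax_{\xb\in\Xcal^{m+L}}\tilde r(\xb)$ be an optimal proxy block (identity link). By the approximation result quoted from \citet{clerici2024linear} (Appendix~\ref{appx:approximation}), $\mathrm{OPT}(T)-\frac{T}{m+L}\tilde r(\tilde\xb)=\Ocal\big(\frac{RmT}{m+L}\big)$ up to boundary terms. Inside each executed block the first $m$ rounds are warm-up, so their rewards are at least $-RC_\theta$ per round and cost at most $\Ocal(Rm)$ per block. That is another $\Ocal(RmT/(m+L))$ overall, since the memory within block $\tau$ depends only on actions of block $\tau$. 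It therefore remains to bound $\sum_{\tau=1}^{T/(m+L)}\big(\tilde r(\tilde\xb)-\tilde r(\xb_\tau)\big)$, where $\xb_\tau$ is the block chosen by \texttt{OFUL-memory}.

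For the estimation term we use the stationary linear reduction. Write $\tilde r(\xb)=\sum_{i=m+1}^{m+L}\langle\bb_i(\xb),\thetab^*\rangle$ with $\bb_{i}=\Ab_{i-1}\xb_i$ and $\|\bb_i\|_2\le R$. The least-squares estimator is built from all past per-round features $\bb_{s,i}$, $s<\tau$, giving Gram matrix $\Vb_\tau=\lambda\Ib+\sum_{s<\tau}\sum_i\bb_{s,i}\bb_{s,i}^\top$. The standard self-normalized bound of Abbasi-Yadkori et al.\ gives $\|\hat\thetab_\tau-\thetab^*\|_{\Vb_\tau}\le\beta$ for all $\tau$ with high probability, where $\beta=\widetilde\Ocal\big(\sqrt{d}+\sqrt\lambda C_\theta\big)$. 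We choose $\lambda=\max\{1,R^2\}$ so that $\|\bb\|_{\Vb^{-1}}\le1$. Optimism then gives per-block regret at most $2\beta\sum_{i}\|\bb_{\tau,i}\|_{\Vb_\tau^{-1}}$. As in the displayed decomposition, we split this into $\sum_i\|\bb_{\tau,i}\|_{\Vb_{\tau,i-1}^{-1}}$ plus the discrepancy $\sum_i\big(\|\bb_{\tau,i}\|_{\Vb_\tau^{-1}}-\|\bb_{\tau,i}\|_{\Vb_{\tau,i-1}^{-1}}\big)$.

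The first sum runs over all $T_{\mathrm{eff}}\le T$ rounds with a single incrementally updated Gram matrix. Cauchy–Schwarz and the elliptical potential lemma therefore give $\widetilde\Ocal\big(\sqrt{dT\log(1+TR^2/(d\lambda))}\big)$, i.e.\ a contribution $\widetilde\Ocal(\beta\sqrt{dT})$.

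The main obstacle is the discrepancy term. Following \citet{jin2021shrinking}, we bound $\|\bb\|_{\Vb_\tau^{-1}}-\|\bb\|_{\Vb_{\tau,i-1}^{-1}}\le\|\bb\|_{\Vb_\tau^{-1}}^2-\|\bb\|_{\Vb_{\tau,i-1}^{-1}}^2$, which is valid because both norms are at most $1$ and $a-b\le(a^2-b^2)/(a+b)$ fails only when $a+b<1$. In that case we instead use $a-b\le a^2/(a+b)$, which is harmless up to constants after the appropriate case split; alternatively we use the ratio bound $\|\bb\|_{\Vb_\tau^{-1}}\le\sqrt{\det\Vb_{\tau,i-1}/\det\Vb_\tau}\,\|\bb\|_{\Vb_{\tau,i-1}^{-1}}$. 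The cleanest route is as follows. Blocks where $\det\Vb_{\tau+1}\le2\det\Vb_\tau$ contribute a discrepancy factor of at most $(\sqrt2-1)$ of the main term. Blocks with a doubling determinant number at most $\Ocal(d\log(1+TR^2/(d\lambda)))$, and each such block contributes at most $L$ since every norm is at most $1$. This yields $\widetilde\Ocal(\beta Ld)$, which matches the eigenvalue-telescoping bound claimed in the text.

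Collecting terms, the regret is $\widetilde\Ocal\big(\frac{RmT}{L}+\beta\sqrt{dT}+\beta Ld\big)$ with $\beta=\widetilde\Ocal(\sqrt d+R)$. The term $\beta\sqrt{dT}$ becomes $\sqrt{d\max\{d,R^2\}T}$. Setting $m+L\approx\sqrt{mT/d}$ balances $RmT/(m+L)$ and $\beta Ld$, giving $R\sqrt{mdT}$ together with $\sqrt{d\max\{d,R^2\}}\sqrt{mT}\cdot\sqrt{d}/\sqrt d$. The cross term reduces to the stated order once we use $R\le\max\{\sqrt d,R\}$, and the claimed bound follows.
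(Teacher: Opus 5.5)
\textbf{Genuine gap: the discrepancy bound is too weak, and the final absorption step is false.}

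\emph{What your bound gives.} You take $\lambda=\max\{1,R^2\}$ and charge each determinant-doubling block $L$, since all norms are at most $1$. The discrepancy then costs $\widetilde{\Ocal}(\beta Ld)$ with $\beta=\widetilde{\Ocal}(\sqrt d+R)$. At $m+L\approx\sqrt{mT/d}$ this is $\beta\sqrt{mdT}=\max\{\sqrt d,R\}\sqrt{mdT}$, which contains $d\sqrt{mT}$ whenever $R<\sqrt d$.

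In the rotting case $R=1$ you get $d\sqrt{mT}$ instead of the claimed $\sqrt{mdT}+d\sqrt T$, a loss of order $\min\{\sqrt m,\sqrt d\}$. Re-optimizing $L$ against $RmT/(m+L)$ only improves this to $d^{3/4}\sqrt{mT}$. Your closing remark $R\le\max\{\sqrt d,R\}$ points the wrong way: it cannot absorb $\max\{\sqrt d,R\}\sqrt{mdT}$ into $R\sqrt{mdT}$. The informal ``$\widetilde{\Ocal}(\beta Ld)$'' in the main text, which you matched, is loose in exactly this way; the appendix proves a sharper bound.

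\emph{What is missing.} The discrepancy must carry a factor $R/\sqrt{\lambda}$, and $\lambda$ must be at least $d$. The paper argues as follows.
\begin{itemize}
\item It bounds $\|\bb_{\tau,i}\|_{\Vb_\tau^{-1}}-\|\bb_{\tau,i}\|_{\Vb_{\tau,i}^{-1}}$ by the same difference with $\Vb_{\tau+1}$ in place of $\Vb_{\tau,i}$.
\item It normalizes $\bb_{\tau,i}$, applies the eigenvalue lemma of \citet{jin2021shrinking}, and telescopes over blocks, giving a total discrepancy of at most $2LRd/\sqrt{\lambda}$.
\item It sets $\lambda=\max\{d,R^2\}$, so $\beta/\sqrt{\lambda}=\widetilde{\Ocal}(1+C_\theta)$ and the discrepancy costs only $\widetilde{\Ocal}(LRd)$.
\item Balancing $LRd$ against $mRT/(m+L)$ gives $R\sqrt{mdT}$, and the elliptical term $\beta\sqrt{dT}$ gives $\sqrt{d\max\{d,R^2\}T}$.
\end{itemize}

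\emph{How to repair your route.} Your doubling argument works with the same two changes. In a doubling block, bound the discrepancy by $\sum_i\|\bb_{\tau,i}\|_{\Vb_\tau^{-1}}\le LR/\sqrt{\lambda}$ rather than by $L$, and choose $\lambda=\max\{d,R^2\}$. Non-doubling blocks are absorbed into the elliptical term through your $\sqrt2$ determinant-ratio bound. This gives a valid, more elementary substitute for the eigenvalue telescoping, at the cost of one extra logarithm.

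\emph{A minor point.} Drop the first inequality you tried, $a-b\le a^2-b^2$. It fails exactly when $a+b<1$, which is the generic regime here, and the fallback $a-b\le a^2/(a+b)$ does not telescope.
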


\paragraph{Discussion of Theorem~\ref{thm:improved regret for linear model informal}.}    

Theorem~\ref{thm:improved regret for linear model informal} shows that a \(\widetilde{\Ocal}(\sqrt T)\) regret bound is achievable even under memory-induced non-stationarity, improving the previous \(\widetilde{\Ocal}(T^{3/4})\) guarantee of~\citet{clerici2024linear}.
The key improvement is to decouple the leading \(\sqrt T\)-scaling estimation term from the block length \(L\).
To see why this is essential, recall that the regret under a cyclic block policy
can be decomposed into an approximation error and an estimation error.
The approximation error comes from restricting the policy to cyclic blocks and
scales as \(\Ocal(\frac{mT}{m+L})\), which decreases as \(L\) grows.
On the other hand, the estimation error depends on how costly it is to learn an
optimal block.
If the estimation error can be written as \(L^\alpha\sqrt T+h(L,T)\), where \(h(L,T)\) collects the remaining lower-order terms such as logarithmic factors and residual block-length-dependent terms,
then balancing the two errors gives
\[
    \regret_T
    \lesssim
    \frac{m}{m+L}T
    +
    L^\alpha\sqrt T
    +
    h(L,T),
\]
and choosing \(L\simeq T^{1/(2+2\alpha)}\) yields a regret rate of order
$
    \Ocal\!\left(
        T^{\frac{1+2\alpha}{2+2\alpha}}
        + h(L,T)
    \right).
$
Thus, recovering a \(\sqrt T\)-type rate requires \(\alpha=0\), namely that the
block length does not multiply the leading \(\sqrt T\) estimation term.

The previous analysis effectively has \(\alpha=1\), because the within-block
uncertainty control yields an estimation term proportional to \(L\sqrt T\).
Our block-level combinatorial analysis removes this multiplicative dependence:
the leading estimation term scales as \(\sqrt T\), while the residual
block-length-dependent term is handled separately through an eigenvalue
telescoping argument.
This is precisely why the refined analysis leads to the sharper regret bound in
Theorem~\ref{thm:improved regret for linear model informal}.
Complete proofs are provided in Appendix~\ref{appx:proof for linear case}.

\subsection{Algorithm for Generalized Linear Models}

\subsubsection{Algorithm : $\algname$}
\begin{algorithm}[t!]
    \caption{$\algname$ (\textbf{G}eneralized \textbf{L}inear \textbf{B}andits with \textbf{M}emory-\textbf{S}hrunken \textbf{C}onfidence \textbf{B}ounds)}
    \label{alg:main algorithm}
    \begin{algorithmic}[1]
        \STATE \textbf{Inputs: } regularization parameter $\lambda$, number of rounds $T$, block length $L$, step size $\eta$, confidence radius $\beta$
        \STATE Initialize \(\thetab_1 =  \zero_d\), \(\Hb_1=\Vb_1 = \lambda \Ib_d\), and
        \(\tau_{\mathrm{warm}} = 0\)
        \STATE Set the warm-up block $\xb^{\mathrm{warm}}$ as a fixed conservative E-optimal design block 
        \FOR{\(\tau=1,2,\ldots,\lfloor T/(m+L)\rfloor\)}
            \IF[\hfill $\triangleright$ \textit{information warm-up}]{\(\lambda_{\min}(\Vb_\tau)<L/d\)}
                \STATE Set \(\xb_\tau\leftarrow \xb^{\mathrm{warm}}\) and \(\tau_{\mathrm{warm}}\leftarrow \tau_{\mathrm{warm}}+1\) 
                \STATE Update  $\Vb_{\tau+1} = \Vb_\tau + \sum_{i=m+1}^{m+L} \bb_{\tau,i} \bb_{\tau,i}^\top$
            \ELSE[\hfill 
            $\triangleright$ 
            \textit{block-safe SCB}]
                \STATE 
                Select
                \begin{equation*}
                    \xb_\tau \in \argmax_{\xb \in \Xcal^{m+L}}
                    \hat r_\tau (\xb)
                    + \max\{ \Gamma^{\mathrm{SCB}}_{\tau}(\xb), \Gamma^{\mathrm{safe}}_\tau (\xb) \}    
                \end{equation*}
            \ENDIF
            \STATE Play \(\xb_\tau\) and observe rewards
            \STATE Set \(\thetab_{\tau,m+1}\gets\thetab_\tau\) and \(\Hb_{\tau,m+1}\gets\Hb_\tau\)
            \FOR{\(i=m+1,\ldots,m+L\)}
                \STATE 
                Compute
                \(\thetab_{\tau,i+1}\) via the OMD update in Eq.~\eqref{eq:omd_update}
                \STATE Update \(\Hb_{\tau,i+1}\gets\Hb_{\tau,i}+\dot\mu(\bb_{\tau,i}^\top\thetab_{\tau,i+1})\bb_{\tau,i}\bb_{\tau,i}^\top\)
            \ENDFOR
            \STATE Set \(\thetab_{\tau+1}\gets\thetab_{\tau,m+L+1}\) and \(\Hb_{\tau+1}\gets\Hb_{\tau,m+L+1}\)
        \ENDFOR
    \end{algorithmic}
\end{algorithm}

In this section, we introduce \(\algname\), a block-wise confidence-bound algorithm for generalized linear bandits with memory. Recall that, under the cyclic block reduction, the learning problem is to identify a block that maximizes the proxy reward in Eq.~\eqref{eq:proxy reward}. This requires selecting a block of \(m+L\) actions while estimating the unknown reward parameter \(\thetab^*\) from nonlinear feedback collected in previously executed blocks. 
A straightforward block-wise optimistic strategy would control each action in a candidate block using only the information available before the block is executed. Such a strategy is statistically valid, but it does not exploit the feature information of actions already placed in the same block. Consequently, its leading regret term couples the memory length \(m\) and the feature dimension \(d\), as in the linear case of Theorem~\ref{thm:improved regret for linear model informal}.
Our goal is to use the geometry of the partially constructed block to shrink the confidence bonus, while preserving a valid block-level comparison with the optimal proxy block.
The proposed algorithm has three components.
First, it estimates the unknown parameter using OMD.
Second, it performs an \textit{information warm-up phase} until the curvature-weighted Gram matrix is sufficiently well-conditioned.
Third, after warm-up, it selects a block using a \textit{block-safe shrunken confidence bound}. The confidence bound combines an action-wise shrunken bonus, which exploits feature information from actions already placed in the block, with a block-level safety bonus, which guarantees optimism for the entire block.

\paragraph{Parameter estimation.} 
For parameter estimation, we build on recent advances in generalized linear bandits~\citep{zhang2023online, zhang2025generalized} and use an OMD update. Let \(\xb_\tau=(\xb_{\tau,1},\ldots,\xb_{\tau,m+L})\) be the block selected at block time \(\tau\), and let \((y_{\tau,1},\ldots,y_{\tau,m+L})\) be the corresponding rewards. For each \(i\in[m+L]\), let \(\Ab_{\tau,i-1}\) denote the memory matrix generated by the last \(m\) actions before \(\xb_{\tau,i}\), and define the memory-dependent feature $\bb_{\tau,i}:=\Ab_{\tau,i-1}^{\top}\xb_{\tau,i}$.
Then \(\langle \xb_{\tau,i},\Ab_{\tau,i-1}\thetab\rangle=\bb_{\tau,i}^{\top}\thetab\). For \(i=m+1,\ldots,m+L\), the negative log-likelihood loss induced by the canonical exponential-family model is \[ \ell_{\tau,i}(\thetab) \propto \frac{ \psi(\bb_{\tau,i}^{\top}\thetab) - y_{\tau,i}\bb_{\tau,i}^{\top}\thetab }{g(\phi)}, \] where \(\psi\) is the log-partition function and terms independent of \(\thetab\) are omitted. At the beginning of block \(\tau\), set \(\Hb_{\tau,m+1}=\widetilde{\Hb}_{\tau,m+1}=\Hb_\tau\) and \(\thetab_{\tau,m+1}=\thetab_\tau\). For each \(i=m+1,\ldots,m+L\), define the linearized loss \(\widetilde\ell_{\tau,i}(\thetab) := \langle\nabla\ell_{\tau,i}(\thetab_{\tau,i}),\thetab-\thetab_{\tau,i}\rangle\), and set \(\widetilde{\Hb}_{\tau,i}:=\Hb_{\tau,i}+\nabla^2\ell_{\tau,i}(\thetab_{\tau,i})\). The OMD update is \begin{equation} \label{eq:omd_update} \thetab_{\tau,i+1} \in \argmin_{\thetab\in\Bcal_d(C_\theta)} \left\{ \widetilde\ell_{\tau,i}(\thetab) + \frac{1}{2\eta} \|\thetab-\thetab_{\tau,i}\|_{\widetilde{\Hb}_{\tau,i}}^2 \right\}. \end{equation} After the update, set \(\Hb_{\tau,i+1}:=\Hb_{\tau,i}+\nabla^2\ell_{\tau,i}(\thetab_{\tau,i+1})\). At the end of the block, set \(\thetab_{\tau+1}:=\thetab_{\tau,m+L+1}\) and \(\Hb_{\tau+1}:=\Hb_{\tau,m+L+1}\).

\paragraph{Information warm-up.} Before using the shrunken block score, \(\algname\) ensures that the maintained curvature-weighted Gram matrix is sufficiently well-conditioned. At the beginning of block \(\tau\), the algorithm checks whether $\lambda_{\min}(\Vb_\tau)\ge \frac{L}{d}$. If this condition fails, the algorithm executes an exploration block and updates the OMD estimator using the observed rewards. For a candidate block \(\xb=(\xb_1,\ldots,\xb_{m+L})\), let $\Ab_i(\xb) := \left( \Ib_d+\sum_{j=1}^{m}\xb_{i-j}\xb_{i-j}^{\top} \right)^{\gamma}$, where for indices \(i-j\le 0\) we interpret \(\xb_{i-j}\) as the corresponding action stored in the current memory state before the block begins.
Define the memory-dependent feature $ \bb_i(\xb):=\Ab_i(\xb)^\top\xb_i$. We then choose the warm-up block $\xb^{\rm warm}$ as follows:
\begin{equation*}
\xb^{\mathrm{warm}}
\in
\argmax_{\xb\in\Xcal^{m+L}}
\lambda_{\min}\!\left(
\sum_{i=m+1}^{m+L}
\bb_i(\xb)\bb_i(\xb)^\top
\right).
\end{equation*}
Since the matrices \(\Vb_\tau\) are monotone increasing, once the condition is satisfied, it remains satisfied in all subsequent blocks. 
We denote by \(\tau_{\mathrm{warm}}\) the number of warm-up blocks. The warm-up condition is used to control the cumulative cost of the block-level safety bonus.

\paragraph{Shrunken confidence bound.}
We now define the shrunken confidence bound used after the information warm-up phase. 
For any block $\xb$, the pre-block action-wise confidence width is
\begin{equation}
    \label{eq:pre-block-ucb}
    \begin{split}
        \gamma_{\tau,i}^{\mathrm{pre}}(\xb) 
        & := \beta \dot\mu(\bb_i(\xb)^\top\thetab_\tau) \|\bb_i(\xb)\|_{\Hb_\tau^{-1}}
        \\
        & \quad + \frac{\nu U_\mu}{2} \beta^2 \|\bb_i(\xb)\|_{\Hb_\tau^{-1}}^2 \, .
    \end{split}
\end{equation}
This width is valid because it is computed from the information available before the block is executed. However, using it for every action ignores the feature information accumulated while the block is being constructed.

To exploit within-block feature information, we follow the idea of the shrinking-confidence construction in linear combinatorial bandits~\citep{jin2021shrinking}, where earlier actions contribute information that reduces the uncertainty of later actions. In linear bandits, this can be done by directly augmenting the Gram matrix with the features of previously selected actions. In generalized linear models, however, the situation is substantially more delicate. The information contributed by an observation is no longer determined solely by its feature vector \(\bb_i(\xb)\); it is weighted by the local curvature \(\dot{\mu}\) of the link function.
Since the parameter estimate is not updated during block construction, this curvature information is not directly available.
We overcome this difficulty by exploiting the geometry of the current confidence set.

If within-block feedback were available, the information accumulated up to action \(i\) would depend on the sequence of intermediate estimators generated during block execution.
Since these estimators are unavailable when evaluating a candidate block, we instead certify the possible curvature values using only the pre-block estimate \(\thetab_\tau\).
By the confidence guarantee for \(\thetab_\tau\), relevant parameters are contained, with high probability, in the enlarged confidence set
\[
    \Ccal_\tau^{(2\beta)}
    :=
    \left\{
        \thetab\in\Bcal_d(C_\theta):
        \|\thetab-\thetab_\tau\|_{\Hb_\tau}\le 2\beta
    \right\};
\]
see Figure~\ref{fig:aux_theta}.
We therefore define auxiliary curvature certificates by maximizing and minimizing the local curvature over \(\Ccal_\tau^{(2\beta)}\).
For each \(i=m+1,\ldots,m+L\),
\begin{align*}
    \thetab_{\tau,i}^{+,\xb}
    &\in
    \argmax_{\thetab\in \Ccal_\tau^{(2\beta)}}
    \dot\mu\!\left(\bb_i(\xb)^\top\thetab\right),
    \\
    \thetab_{\tau,i}^{-,\xb}
    &\in
    \argmin_{\thetab\in \Ccal_\tau^{(2\beta)}}
    \dot\mu\!\left(\bb_i(\xb)^\top\thetab\right).
\end{align*}
The lower certificate provides a conservative estimate of the information that can be safely accumulated within the block, while the upper certificate is used when bounding the uncertainty of the current action.
We now define the conservative within-block information matrix:
\[
   \underline{\Hb}_{\tau,i}(\xb)
   :=
   \Hb_\tau
   +
   \sum_{j=m+1}^{i-1}
   \dot\mu\!\left(\bb_j(\xb)^\top\thetab_{\tau,j}^{-,\xb}\right)
   \bb_j(\xb)\bb_j(\xb)^\top .
\]
Using this matrix and the upper curvature certificate, define the shrunken confidence width
\[
    \begin{split}
        \overline\gamma_{\tau,i}(\xb)
        &:=
        \beta
        \dot\mu\!\left(\bb_i(\xb)^\top\thetab_{\tau,i}^{+,\xb}\right)
        \|\bb_i(\xb)\|_{\underline{\Hb}_{\tau,i}(\xb)^{-1}}
        \\
        &\quad
        +
        \frac{\nu U_\mu}{2}
        \beta^2
        \|\bb_i(\xb)\|_{\underline{\Hb}_{\tau,i}(\xb)^{-1}}^2 .
    \end{split}
\]
Then, the SCB bonus of a block \(\xb\) is
\[
   \Gamma_\tau^{\mathrm{SCB}}(\xb)
   :=
   -
   \sum_{i=m+1}^{m+L}
   \gamma_{\tau,i}^{\mathrm{pre}}(\xb)
   +
   2
   \sum_{i=m+1}^{m+L}
   \overline\gamma_{\tau,i}(\xb) .
\]
This term is the block-level analogue of selecting actions by an improvement over a lower confidence estimate.
It can be smaller than the standard pre-block optimistic bonus because \(\underline{\Hb}_{\tau,i}(\xb)\) includes feature information from actions already placed in the candidate block.

\begin{figure}[t!]
    \centering
    \includegraphics[width=0.8\linewidth]{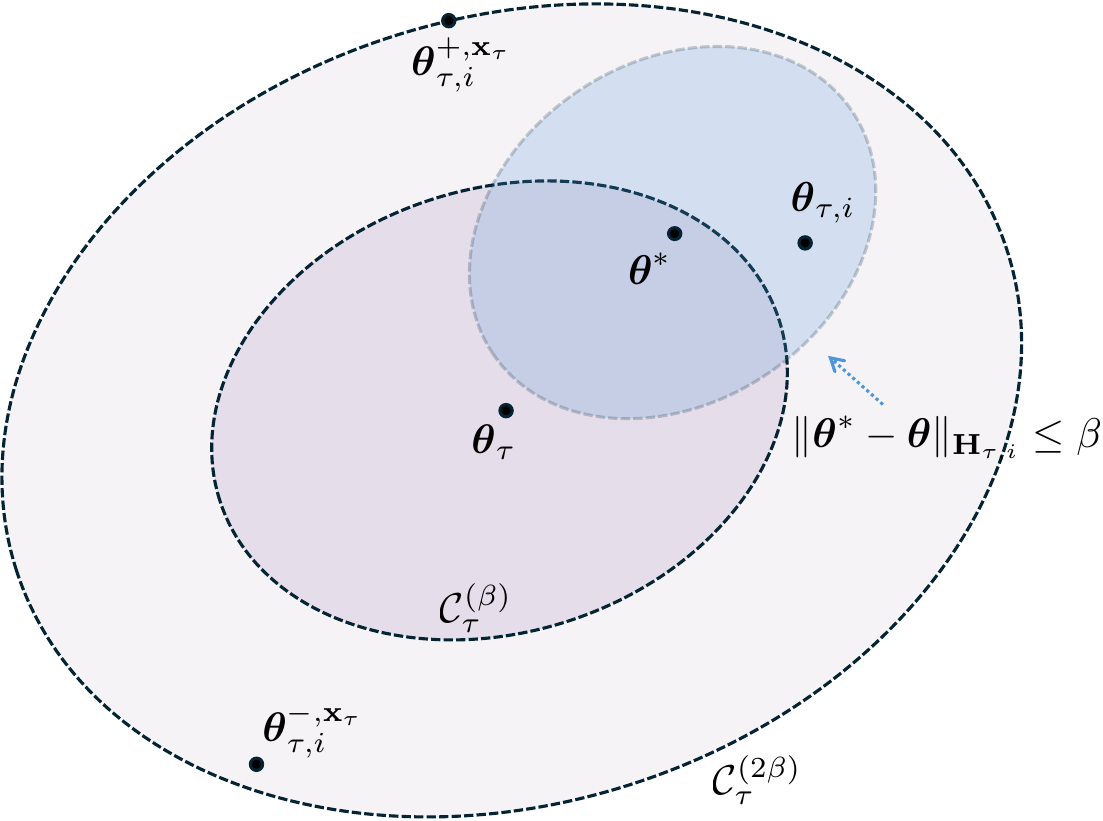}
    \caption{ 
    The purple ellipsoid, centered at \(\thetab_\tau\), illustrates the pre-block confidence region containing \(\thetab^*\).
    The darker blue ellipsoid around \(\thetab^*\) illustrates the concentration region for the within-block estimates.
    Together, these imply that the relevant parameters are covered by the enlarged ellipsoid \(\Ccal_\tau^{(2\beta)}\), which allows curvature to be certified without observing within-block feedback.
    }
    \label{fig:aux_theta}
\end{figure}

\paragraph{Block-safety bonus.}
The shrunken SCB bonus does not necessarily provide optimism for the true block reward. Therefore, we also construct a block-level safety bonus that directly upper bounds the prediction error of the entire proxy reward.
Let $\gb_\tau(\xb) := \sum_{i=m+1}^{m+L} \dot\mu(\bb_i(\xb)^\top\thetab_\tau)\bb_i(\xb)$ and $\Qb(\xb) := \sum_{i=m+1}^{m+L} \bb_i(\xb)\bb_i(\xb)^\top$.
Define \[ \Gamma_\tau^{\mathrm{safe}}(\xb) := \beta\|\gb_\tau(\xb)\|_{\Hb_\tau^{-1}} + \frac{\nu U_\mu}{2} \beta^2 \left\| \Hb_\tau^{-1/2}\Qb(\xb)\Hb_\tau^{-1/2} \right\|_* . \] This bonus treats the entire block as a single decision and certifies the prediction error of the block proxy reward. 

After the information warm-up phase, \(\algname\) selects a block by
\begin{equation*}
   \xb_\tau
   \in
   \argmax_{\xb\in\Xcal^{m+L}}
   \left\{
   \hat r_\tau(\xb)
   +
   \max\left\{
        \Gamma_\tau^{\mathrm{SCB}}(\xb),
        \Gamma_\tau^{\mathrm{safe}}(\xb)
   \right\}
   \right\},
\end{equation*}
where
$
    \hat r_\tau(\xb)
    :=
    \sum_{i=m+1}^{m+L}
    \mu\!\left(\bb_i(\xb)^\top\thetab_\tau\right)
$
is the estimated proxy reward.
Thus, the algorithm uses the shrunken SCB bonus whenever it is sufficient for block-level safety, and otherwise falls back to the block-safety bonus.
This construction preserves the benefit of within-block shrinkage while maintaining a valid block-level optimistic comparison with the proxy-optimal block.

\begin{remark}[Maximization oracle]
    Algorithm~\ref{alg:main algorithm} assumes access to an exact maximization oracle for the block objective, as in the block-optimization framework of~\citet{clerici2024linear}.
    In continuous action domains, the block objective is differentiable, so gradient-based methods provide a natural practical
    approach to approximate block optimization. Extending the regret analysis to such approximate oracles is an important direction for future work.
\end{remark}

\subsubsection{Regret Bound}
\begin{restatable}[Regret bound of $\algname$]{theorem}{ThmRegretBoundofGLMBSCB}
    \label{thm:regret main algorithm}
    Suppose that Assumptions~\ref{assm:bounded domain}, \ref{assm:bounded link}, and~\ref{assm:self concordance} hold.
        If Algorithm~\ref{alg:main algorithm} is run with
        \(\beta=\Ocal(\nu R C_{\theta}\sqrt{d\nu R C_{\theta}^{2}+d\log(RT)})\),
        \(\lambda=\Ocal(d\nu^3R^3C_\theta)\), and
        \(L=\lfloor\sqrt{mT}\rfloor-m\), then it satisfies
    \begin{equation*}
        \regret_T
        =
        \widetilde\Ocal\!\left(
            \sqrt{mT}
            +
            d\sqrt T
            +
            \sqrt\kappa\,d^2m^{1/4}T^{1/4}
            +
            \kappa d^2
        \right).
    \end{equation*}
\end{restatable}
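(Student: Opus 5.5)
We decompose the regret into three parts, following the cyclic reduction of \citet{clerici2024linear}. Let $\tilde{\xb}\in\argmax_{\xb}\tilde r(\xb)$ be the proxy-optimal block and $N:=T/(m+L)$ the number of blocks. Then
\[
\regret_T \le \Big(\mathrm{OPT}(T)-N\tilde r(\tilde\xb)\Big) + \sum_{\tau}\Big(\tilde r(\tilde \xb)-\tilde r(\xb_\tau)\Big) + N m\,\mu_{\max}.
\]
The first term is the cyclic approximation error; the last comes from the $m$ unscored prefix actions of each block. Together they are $\Ocal(mT/(m+L))$ by Appendix~\ref{appx:approximation}. With $L+m=\lfloor\sqrt{mT}\rfloor$ this is $\Ocal(\sqrt{mT})$, which accounts for the first term of the bound. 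The middle term is the estimation regret. We split it into warm-up blocks and post-warm-up blocks.

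\emph{Confidence and warm-up.} First we establish the OMD confidence bound of \citet{zhang2025generalized}. Self-concordance lets the per-step analysis go through unchanged when updates are grouped into blocks, since feedback is still processed sequentially. This gives, w.h.p., $\|\thetab_{\tau}-\thetab^*\|_{\Hb_\tau}\le\beta$ for all $\tau$, and also $\|\thetab_{\tau,i}-\thetab^*\|_{\Hb_{\tau}}\le\beta$ for every within-block iterate. By the triangle inequality, all these iterates together with $\thetab^*$ lie in $\Ccal_\tau^{(2\beta)}$. Hence $\dot\mu(\bb_j^\top\thetab^{-}_{\tau,j})\le\dot\mu(\bb_j^\top\thetab_{\tau,j+1})$, so $\underline{\Hb}_{\tau,i}(\xb_\tau)\preceq\Hb_{\tau,i}$, the true within-block matrix.

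For the warm-up phase, the E-optimal design block increases $\lambda_{\min}$ of the unweighted Gram matrix by $\Omega(L/d)$ per block, up to the $R$-dependent conditioning of $\bb_i$. So reaching $\lambda_{\min}(\Vb_\tau)\ge L/d$ takes $\tau_{\mathrm{warm}}=\widetilde\Ocal(1)$ to $\Ocal(d)$ blocks. The total cost of warm-up is therefore absorbed into lower-order terms.

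\emph{Optimism and per-block regret.} Fix a post-warm-up block. A second-order Taylor expansion with self-concordance gives, for any block $\xb$,
\[
|\tilde r(\xb)-\hat r_\tau(\xb)|\le \Gamma^{\mathrm{safe}}_\tau(\xb) \qquad\text{and}\qquad |\mu(\bb_i^\top\thetab^*)-\mu(\bb_i^\top\thetab_\tau)|\le\gamma^{\mathrm{pre}}_{\tau,i}(\xb).
\]
For the first bound, the linear term is $\langle\gb_\tau,\thetab^*-\thetab_\tau\rangle$ and the quadratic term is $\le\frac{\nu U_\mu}{2}\beta^2\|\Hb_\tau^{-1/2}\Qb\Hb_\tau^{-1/2}\|_*$. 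Since the algorithm maximizes $\hat r_\tau+\max\{\Gamma^{\mathrm{SCB}},\Gamma^{\mathrm{safe}}\}$, we obtain
\[
\tilde r(\tilde\xb)\le \hat r_\tau(\tilde \xb)+\Gamma^{\mathrm{safe}}_\tau(\tilde\xb)\le \hat r_\tau(\xb_\tau)+\max\{\Gamma^{\mathrm{SCB}}_\tau,\Gamma^{\mathrm{safe}}_\tau\}(\xb_\tau).
\]
Subtracting $\tilde r(\xb_\tau)\ge\hat r_\tau(\xb_\tau)-\sum_i\gamma^{\mathrm{pre}}_{\tau,i}(\xb_\tau)$ bounds the per-block regret by the larger of two quantities: $2\sum_i\overline\gamma_{\tau,i}(\xb_\tau)$ (the SCB case, where the $-\gamma^{\mathrm{pre}}$ terms cancel), or $2\Gamma^{\mathrm{safe}}_\tau(\xb_\tau)$.

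\emph{Summing the SCB terms.} Using $\underline{\Hb}_{\tau,i}\preceq\Hb_{\tau,i}$, split $\|\bb_i\|_{\underline{\Hb}^{-1}_{\tau,i}}$ as in the linear analysis. The linear part of $\overline\gamma$ is
\[
\beta\dot\mu(\bb_i^\top\thetab^+)\|\bb_i\|_{\Hb^{-1}}\le\beta\sqrt{\dot\mu(\bb_i^\top\thetab^+)}\,\sqrt{\dot\mu(\bb_i^\top\thetab^+)}\|\bb_i\|_{\Hb^{-1}}.
\]
Self-concordance and the radius $2\beta$ relate $\dot\mu(\cdot\,\thetab^+)$ to $\dot\mu(\cdot\,\thetab_{\tau,i+1})$ up to a constant factor plus an additive error of order $\nu\beta\|\bb_i\|_{\Hb^{-1}}$. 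Cauchy–Schwarz and the elliptical potential on the curvature-weighted features $\sqrt{\dot\mu}\,\bb_{\tau,i}$ then give $\widetilde\Ocal(\beta\sqrt{dT})=\widetilde\Ocal(d\sqrt T)$, free of $\kappa$. The additive and quadratic parts are sums of $\beta^2\|\bb_i\|^2_{\Hb^{-1}}$, which the unweighted potential bounds by $\kappa\beta^2 d$, giving the $\kappa d^2$ term.

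\emph{Main obstacle: the safety branch.} The hard step is the case where $\Gamma^{\mathrm{safe}}$ is active. Here we must show that the block-level bonus is not much larger than the shrunken one. We plan to use the warm-up condition $\lambda_{\min}(\Vb_\tau)\ge L/d$, which gives $\Hb_\tau\succeq\lambda_{\min}(\Vb_\tau)/\kappa$. Then the quadratic safety term is $\le\beta^2\kappa\|\Qb\|_*d/L\le\kappa\beta^2 d$ per block. Also, $\|\gb_\tau\|_{\Hb^{-1}}$ is compared with $\sum_i\dot\mu\|\bb_i\|_{\Hb_{\tau,i}^{-1}}$ via the eigenvalue-telescoping argument of \citet{jin2021shrinking}: the discrepancy between $\Hb_\tau$ and $\Hb_{\tau,i}$ costs $\beta L d$ per matrix doubling, and only $\widetilde\Ocal(d)$ doublings occur. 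A crude version yields a per-block excess of $\sqrt{\kappa}\,d\beta\sqrt{L/\,\text{(warm scale)}}$ over $N=\sqrt{T/m}$ blocks. Combined with $L\approx\sqrt{mT}$ and $\beta\approx\sqrt d$, this gives the $\sqrt\kappa d^2m^{1/4}T^{1/4}$ term.

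I expect the precise bookkeeping in this step to be delicate: in particular, the number of doubling epochs and the $\sqrt{\kappa}$ from converting $\Vb$ to $\Hb$ must combine to give exactly this exponent. If the telescoping route fails, a fallback is to apply Cauchy–Schwarz to $\Gamma^{\mathrm{safe}}$ directly over the $N$ post-warm-up blocks.
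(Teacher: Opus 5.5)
Your overall architecture matches the paper's: cyclic approximation, warm-up, block-level optimism via $\Gamma^{\mathrm{safe}}_\tau$, and per-block regret controlled by the shrunken widths or the safety bonus. Your $\max\{2\overline{\Gamma}_\tau,2\Gamma^{\mathrm{safe}}_\tau\}$ bound is valid provided that, in the safe case, you subtract using $\Gamma^{\mathrm{safe}}_\tau$ rather than $\sum_i\gamma^{\mathrm{pre}}_{\tau,i}$. However, the step you flag as the main obstacle does not go through as written. You use the warm-up condition block by block. The quadratic safety term is bounded by $\kappa\beta^2 d$ per block, and the first-order discrepancy by a per-block excess of order $\sqrt\kappa\,d^{3/2}\beta$ (taking the warm scale to be $L/d$). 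Nothing prevents the safety branch from being active in all $N=T/(m+L)\approx\sqrt{T/m}$ post-warm-up blocks. These terms then sum to about $\kappa d^2\sqrt{T/m}$ and $\sqrt\kappa\,d^2\sqrt{T/m}$ (with $\beta^2\approx d$). These are $\kappa$-dependent $\sqrt T$-type terms, which the theorem excludes. The claim that this yields $\sqrt\kappa\,d^2m^{1/4}T^{1/4}$ is an arithmetic error: $m^{1/4}T^{1/4}=\sqrt L$ appears only if the sum over blocks collapses, and your accounting never removes the factor $N$. Likewise, ``$\beta Ld$ per doubling'' over $\widetilde\Ocal(d)$ doublings would give $\beta Ld^2$, which is far too large. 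The Cauchy--Schwarz fallback is not an argument by itself. It needs a block-level potential for $\gb_\tau$, whose increment in $\Hb_{\tau+1}-\Hb_\tau$ holds only after normalizing by $L$ and by curvature ratios; that is the same missing bookkeeping.

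The missing idea is that the excess must telescope \emph{across} blocks, with the warm-up bound used only once.
(i) Domination: $\Gamma^{\mathrm{safe}}_\tau(\xb)\le\sum_i\gamma^{\mathrm{pre}}_{\tau,i}(\xb)$, by the triangle inequality on $\gb_\tau$ and $\|\Hb_\tau^{-1/2}\Qb\Hb_\tau^{-1/2}\|_*\le\operatorname{tr}(\Hb_\tau^{-1}\Qb)$. This gives per-block regret at most $2\overline{\Gamma}_\tau(\xb_\tau)+\xi_\tau(\xb_\tau)$ with $\xi_\tau\le2\sum_i[\gamma^{\mathrm{pre}}_{\tau,i}-\overline{\gamma}_{\tau,i}]_+$.
(ii) Since $\dot{\mu}(\bb_{\tau,i}^\top\thetab^{+,\xb_\tau}_{\tau,i})\ge\dot{\mu}(\bb_{\tau,i}^\top\thetab_\tau)$, each excess is at most $\beta U_\mu(h_{\tau,i}-\bar{h}_{\tau,i})+\frac{\nu U_\mu}{2}\beta^2(h_{\tau,i}^2-\bar{h}_{\tau,i}^2)$, where $h_{\tau,i}=\|\bb_{\tau,i}\|_{\Hb_\tau^{-1}}$ and $\bar{h}_{\tau,i}=\|\bb_{\tau,i}\|_{\underline{\Hb}_{\tau,i}(\xb_\tau)^{-1}}$.
(iii) Your certificate argument gives $\underline{\Hb}_{\tau,i}(\xb_\tau)\preceq\Hb_{\tau+1}$. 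With Lemma~3 of \citet{jin2021shrinking} and trace monotonicity, this yields $\sum_\tau\sum_i(h_{\tau,i}-\bar{h}_{\tau,i})\le2RL\sum_q\lambda_q(\Hb_{\tau_{\mathrm{warm}}+1})^{-1/2}$ and $\sum_\tau\sum_i(h_{\tau,i}^2-\bar{h}_{\tau,i}^2)\le LR^2\operatorname{tr}(\Hb_{\tau_{\mathrm{warm}}+1}^{-1})$, with no factor $N$.
(iv) Only then apply $\lambda_q(\Hb_{\tau_{\mathrm{warm}}+1})\ge L/(\kappa d)$. This gives $\Ocal(\beta U_\mu R\sqrt{\kappa L}\,d^{3/2})$, which is the $\sqrt\kappa\,d^2m^{1/4}T^{1/4}$ term, plus $\Ocal(\nu U_\mu\beta^2R^2\kappa d^2)$.

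A determinant-doubling variant, perhaps what you intended, also works if done correctly. Outside the $\widetilde\Ocal(d)$ blocks with $\det\Hb_{\tau+1}>2\det\Hb_\tau$, the pre-block and within-block norms agree up to $\sqrt2$. Each exceptional block costs at most $\Ocal(\beta U_\mu R\sqrt{\kappa dL}+\nu U_\mu\beta^2R^2\kappa d)$ under the warm-up condition.

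The same cross-block control is needed in your SCB summation. The additive curvature correction creates terms $\beta^2\|\bb_{\tau,i}\|_{\Hb_\tau^{-1}}\|\bb_{\tau,i}\|_{\underline{\Hb}_{\tau,i}(\xb_\tau)^{-1}}$ that involve the pre-block norm. Also, $\underline{\Hb}_{\tau,i}(\xb_\tau)\preceq\Hb_{\tau,i}$ means a potential on $\Hb_{\tau,i}$ does not bound $\bar{h}_{\tau,i}$ directly. The paper instead runs the potential on the lower-certificate Gram matrix $\mathbf{L}_{\tau,i}\preceq\underline{\Hb}_{\tau,i}(\xb_\tau)$ with weights $\dot{\mu}(\bb^\top\thetab^{-})$.

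Finally, $\tau_{\mathrm{warm}}=\Ocal(d)$ would cost about $d\sqrt{mT}$, which is not lower order. The bound requires $\tau_{\mathrm{warm}}=\widetilde\Ocal(1)$, as the paper's formal version makes explicit.
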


\paragraph{Discussion of Theorem~\ref{thm:regret main algorithm}.} 
Theorem~\ref{thm:regret main algorithm} gives a
\(\widetilde{\Ocal}(\sqrt T)\) regret bound for generalized linear bandits with memory.
To the best of our knowledge, this is the first such guarantee for action-induced endogenous non-stationary generalized linear bandits.
Importantly, the leading term is independent of the curvature parameter \(\kappa\), and the algorithm does not require prior knowledge of \(\kappa\), unlike many existing methods for non-stationary generalized linear bandits~\citep{wang2023revisiting}.
In the linear reward case, the proposed algorithm achieves
\(\widetilde{\Ocal}(\sqrt{mT}+d\sqrt T)\) regret.
This further improves over the sharpened \(\widetilde{\Ocal}(\sqrt{dmT}+d\sqrt T)\) guarantee for \texttt{OFUL-Memory} in Theorem~\ref{thm:improved regret for linear model informal}, showing that the shrunken-confidence design separates the dependence on \(m\) and \(d\) even when the reward is linear.
Detailed proofs are provided in Appendix~\ref{appx:proof for main algorithm}.

\begin{remark}
Algorithm~\ref{alg:main algorithm} maximizes the reward in the nonstationary setting when the model parameters $m$ and $\gamma$ are known. When this information is not available, we can adapt the bandit combiner approach \citep{cutkosky2020upper, clerici2024linear} to our algorithm. The bandit combiner takes models with different configurations $(m, \gamma)$ as inputs, treats each model as an arm, and selects a model at each round using a UCB strategy. The selected model then executes one block of actions according to its configuration. In our setting, since the block size depends on the model, we use the average reward within a block as the feedback signal. As rounds progress, poorly performing models are gradually eliminated, allowing the algorithm to identify the appropriate configuration $(m, \gamma)$.
\end{remark}

\subsubsection{Regret Analysis}\label{subsec:regret analysis}
\paragraph{Confidence sets.}
The OMD estimator provides a high-probability confidence set for the unknown parameter.
By adapting the OMD-based confidence bound for GLMs~\citep{zhang2025generalized} to our block-structured setting with memory-dependent features, we obtain the following prediction-error bound, which underlies the pre-block action-wise widths used in in Eq.~\eqref{eq:pre-block-ucb}.
\begin{restatable}[Pre-block prediction-error bound]{proposition}{PreblockPredictionErrorBound}\label{prop:exp_bound}

Suppose that the estimator \(\thetab_\tau\) satisfies
\(\|\thetab_\tau-\thetab^*\|_{\Hb_\tau}\le \beta\).
Then, for any candidate block
\(\xb=(\xb_1,\ldots,\xb_{m+L})\in\Xcal^{m+L}\) and any
\(i \in \{m+1,\ldots,m+L\}\), letting
\(\bb_i(\xb):=\Ab_i(\xb)^\top\xb_i\), we have
\begin{align*}
    &
    \left|
    \mu\!\left(\bb_i(\xb)^\top\thetab_\tau\right)
    -
    \mu\!\left(\bb_i(\xb)^\top\thetab^*\right)
    \right|
    \le \gamma_{\tau,i}^{\mathrm{pre}}(\xb)  \, .
\end{align*}    
\end{restatable}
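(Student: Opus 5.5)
The plan is a second-order Taylor expansion of $\mu$ around the estimated score, followed by Cauchy--Schwarz in the $\Hb_\tau$-geometry. Fix a candidate block $\xb$ and an index $i\in\{m+1,\ldots,m+L\}$, and write $\bb:=\bb_i(\xb)$, $z_\tau:=\bb^\top\thetab_\tau$ and $z^*:=\bb^\top\thetab^*$. Since $\mu$ is twice differentiable (Assumption~\ref{assm:bounded link}), the Lagrange form of Taylor's theorem gives some $\xi$ between $z_\tau$ and $z^*$ with
\[
    \mu(z^*)-\mu(z_\tau)=\dot\mu(z_\tau)(z^*-z_\tau)+\tfrac12\ddot\mu(\xi)(z^*-z_\tau)^2 .
\]
Taking absolute values,
\[
    |\mu(z_\tau)-\mu(z^*)|\le\dot\mu(z_\tau)|z_\tau-z^*|+\tfrac12|\ddot\mu(\xi)|(z_\tau-z^*)^2 .
\]
Expanding around $z_\tau$ rather than $z^*$ is deliberate: the first-order coefficient is then the computable $\dot\mu(\bb^\top\thetab_\tau)$ that appears in $\gamma_{\tau,i}^{\mathrm{pre}}$.

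Next I bound the score gap. By Cauchy--Schwarz in the $\Hb_\tau$-norm and the hypothesis $\|\thetab_\tau-\thetab^*\|_{\Hb_\tau}\le\beta$,
\[
    |z_\tau-z^*|=|\bb^\top(\thetab_\tau-\thetab^*)|\le\|\bb\|_{\Hb_\tau^{-1}}\,\|\thetab_\tau-\thetab^*\|_{\Hb_\tau}\le\beta\|\bb\|_{\Hb_\tau^{-1}} .
\]
Substituting this into the first-order term gives exactly $\beta\dot\mu(\bb^\top\thetab_\tau)\|\bb\|_{\Hb_\tau^{-1}}$. Squaring it bounds $(z_\tau-z^*)^2$ by $\beta^2\|\bb\|_{\Hb_\tau^{-1}}^2$.

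The remaining step is to show $|\ddot\mu(\xi)|\le\nu U_\mu$, which yields the second term $\tfrac{\nu U_\mu}{2}\beta^2\|\bb\|_{\Hb_\tau^{-1}}^2$. Self-concordance (Assumption~\ref{assm:self concordance}) gives $|\ddot\mu(\xi)|\le\nu\dot\mu(\xi)$ for every real $\xi$, so it suffices to have $\dot\mu(\xi)\le U_\mu$. I expect this step to be the main obstacle, because Assumption~\ref{assm:bounded link} only bounds $\dot\mu$ on $[-RC_\theta,RC_\theta]$.

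To handle it, I would show that both endpoints lie in that interval and then use convexity of the interval. First, $\Ab_i(\xb)$ is a real power of a symmetric positive definite matrix, hence symmetric with $\|\Ab_i(\xb)\|_*\le R$, as shown after Assumption~\ref{assm:bounded domain}. Since $\|\xb_i\|_2\le1$, this gives $\|\bb\|_2\le R$. Second, $\thetab_\tau\in\Bcal_d(C_\theta)$ because the OMD update in Eq.~\eqref{eq:omd_update} is projected onto that ball, and $\|\thetab^*\|_2\le C_\theta$ by assumption. Hence $|z_\tau|,|z^*|\le RC_\theta$, and $\xi$, which lies between them, is also in $[-RC_\theta,RC_\theta]$. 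On that interval $\dot\mu(\xi)\le U_\mu$.

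Combining the two terms yields $|\mu(z_\tau)-\mu(z^*)|\le\gamma_{\tau,i}^{\mathrm{pre}}(\xb)$. Nothing in the argument depends on the block beyond the feature $\bb_i(\xb)$, so the bound holds uniformly over all candidate blocks $\xb\in\Xcal^{m+L}$ and all indices $i$.
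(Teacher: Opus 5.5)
Your proposal is correct and follows the paper's proof: a second-order Taylor expansion around the estimated score $z_\tau=\bb^\top\thetab_\tau$ (the paper uses the integral form of the remainder rather than the Lagrange form), followed by weighted Cauchy--Schwarz with $\|\thetab_\tau-\thetab^*\|_{\Hb_\tau}\le\beta$. Your check that both $z_\tau$ and $z^*$ lie in $[-RC_\theta,RC_\theta]$, via $\|\bb\|_2\le R$ and the projection of the OMD iterates onto $\Bcal_d(C_\theta)$, is what justifies $|\ddot\mu|\le\nu U_\mu$ at the intermediate point; the paper asserts this bound without argument.
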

For a single action, this bound gives a valid optimistic confidence width.
However, \(\algname\) does not simply sum these pre-block widths over all actions in a block.
Instead, it uses a shrunken block bonus that incorporates feature information from actions already placed in the candidate block.
This shrinkage is statistically useful, but it introduces a new proof challenge: \textit{the shrunken bonus alone does not necessarily upper bound the true block reward}.

\begin{figure*}[t!]
    \centering
    \begin{subfigure}[t]{0.32\textwidth}
        \centering
        \includegraphics[width=0.95\linewidth]{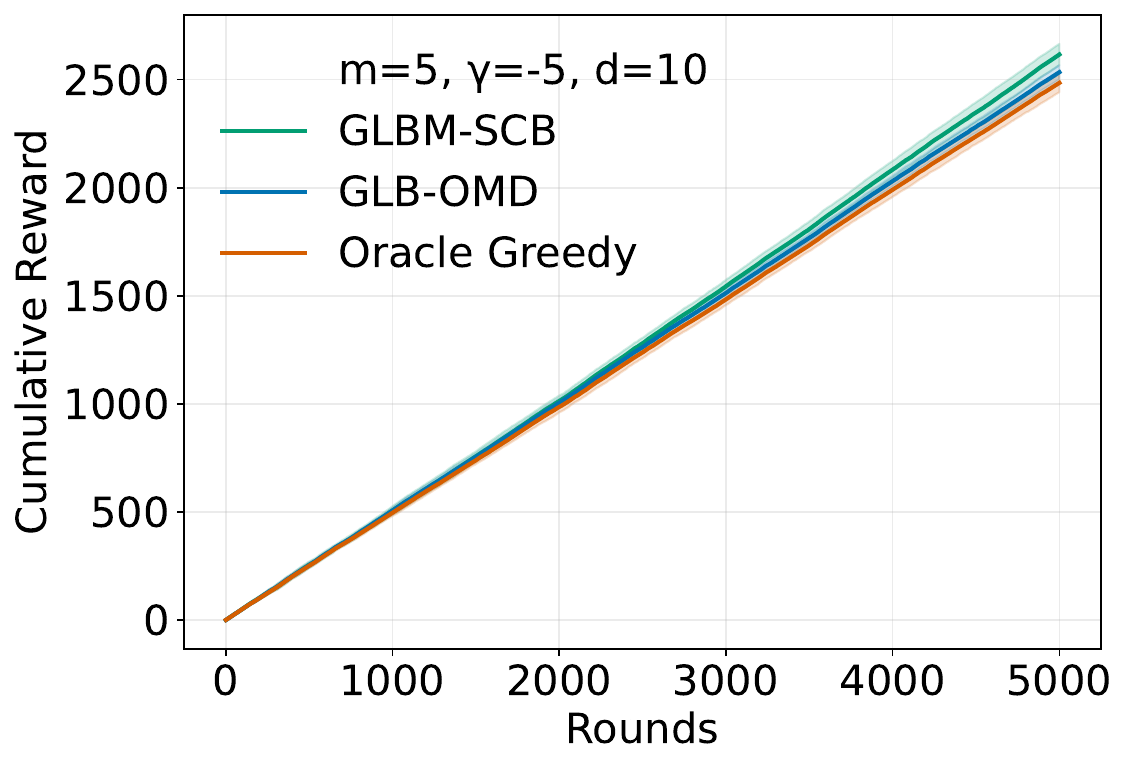}
        \caption{Logistic reward with rotting env.}
        \label{fig:logistic-rotting}
    \end{subfigure}
    \hfill
    \begin{subfigure}[t]{0.32\textwidth}
        \centering
        \includegraphics[width=0.95\linewidth]{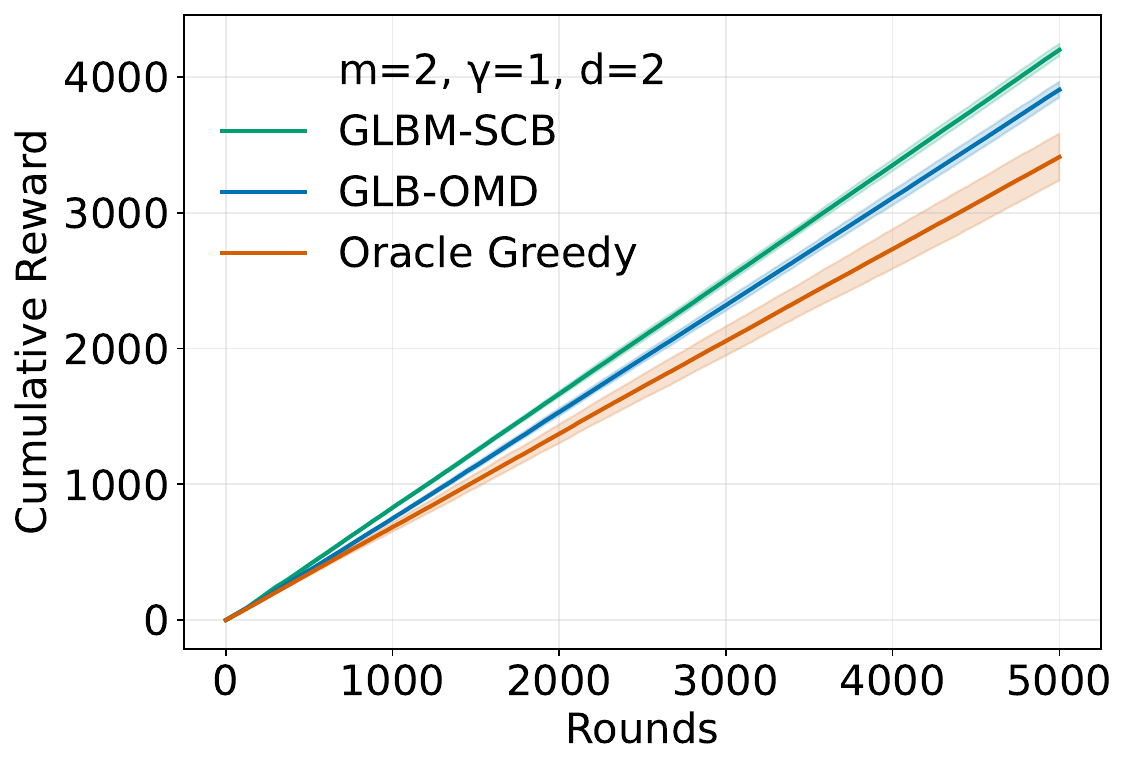}
        \caption{Logistic reward with rising env.}
        \label{fig:logistic-rising}
    \end{subfigure}
    \hfill
    \begin{subfigure}[t]{0.32\textwidth}
        \centering
        \includegraphics[width=0.95\linewidth]{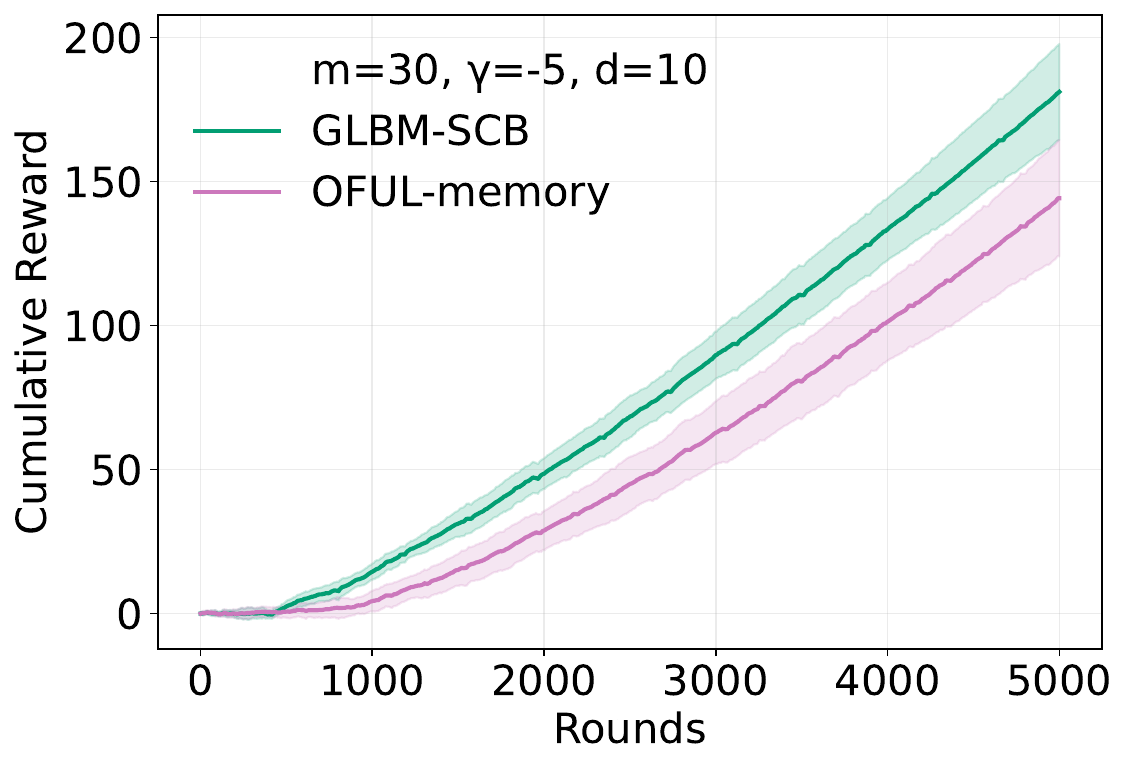}
        \caption{Linear reward with rotting env.}
        \label{fig:linear-rotting}
    \end{subfigure}

    \caption{
    Cumulative rewards in bandits-with-memory environments.
    }
    \label{fig:env}
\end{figure*}

\paragraph{Block-level safety and selected-block decomposition.}
To recover a valid comparison with the proxy-optimal block, \(\algname\) uses the block-safety bonus \(\Gamma_\tau^{\mathrm{safe}}\).
For any candidate block \(\xb\), a second-order Taylor expansion of the block proxy reward around \(\thetab_\tau\) shows that, on the confidence event \(\|\thetab_\tau-\thetab^*\|_{\Hb_\tau}\le \beta\),
\[
    \big|
    \widetilde r(\xb)-\hat r_\tau(\xb)
    \big|
    \le
    \Gamma_\tau^{\mathrm{safe}}(\xb).
\]
Thus, \(\widetilde r(\xb)\le \hat r_\tau(\xb)+\Gamma_\tau^{\mathrm{safe}}(\xb)\).
Since \(\algname\) selects a block using the score
\(\hat r_\tau(\xb)+\max\{\Gamma_\tau^{\mathrm{SCB}}(\xb),\Gamma_\tau^{\mathrm{safe}}(\xb)\}\), the selection rule is optimistic at the block level.

Let \(\overline\Gamma_\tau(\xb):=\sum_{i=m+1}^{m+L}\overline\gamma_{\tau,i}(\xb)\) and
\(\Gamma_\tau^{\mathrm{pre}}(\xb):=\sum_{i=m+1}^{m+L}\gamma_{\tau,i}^{\mathrm{pre}}(\xb)\), so that
\(\Gamma_\tau^{\mathrm{SCB}}(\xb)=-\Gamma_\tau^{\mathrm{pre}}(\xb)+2\overline\Gamma_\tau(\xb)\).
Then the block selection rule of \(\algname\) yields the decomposition
\[
    \widetilde r(\widetilde\xb)
    -
    \widetilde r(\xb_\tau)
    \le
    2\overline\Gamma_\tau(\xb_\tau)
    +
    \xi_\tau(\xb_\tau),
\]
where 
\(\xi_\tau(\xb):=[\Gamma_\tau^{\mathrm{safe}}(\xb)-\Gamma_\tau^{\mathrm{SCB}}(\xb)]_+\) is the safety correction term.
This decomposition replaces the standard UCB regret argument.
Rather than requiring the shrunken score to be optimistic action by action, the analysis restores optimism at the block level through \(\Gamma_\tau^{\mathrm{safe}}\), and then pays only the shrunken widths and the safety correction.

\paragraph{Bounding the safety correction.}
It remains to control the cumulative safety correction
\(\sum_\tau \xi_\tau(\xb_\tau)\).
The correction \(\xi_\tau\) measures how much the SCB bonus falls short of the block-safety bonus.
The information warm-up condition
ensures that the maintained Gram matrix is sufficiently well-conditioned before the block-safe SCB rule is used.
This spectral lower bound allows us to control the difference between the pre-block widths and the shrunken within-block widths by eigenvalue-telescoping argument.
As a result,
\[
    \sum_{\tau>\tau_{\mathrm{warm}}}
    \xi_\tau(\xb_\tau)
    =
    \widetilde\Ocal\!\left(
        \sqrt {\kappa L}d^2+
        \kappa d^2
    \right).
\]
Thus, the safety correction is lower order and does not affect the leading \(\sqrt T\) term.

Combining the warm-up cost, the cyclic approximation error, and the selected-block regret bound gives
\[
    \widetilde{\Ocal} \bigg(
        \underbrace{(m+L)\tau_{\mathrm{warm}}}_{\text{warm-up cost}}
        +
        \underbrace{\frac{mT}{m+L} + L}_{\text{approximation error}}
        +
        \underbrace{\Ecal_{\mathrm{SCB}}(L,T)}_{\text{estimation error}}
    \bigg)
\]
regret.
The analysis of the block-safe SCB rule gives
$
    \Ecal_{\mathrm{SCB}}(L,T)
    =
    \widetilde{\Ocal}\!\left(
        d\sqrt T
        +
        \sqrt{\kappa L}\,d^2
        +
        \kappa d^2
    \right).
$
Choosing \(L=\lfloor\sqrt{mT}\rfloor-m\) balances the approximation terms.
If \(\tau_{\mathrm{warm}}=\widetilde{\Ocal}(1)\), the warm-up cost is absorbed into the leading \(\sqrt{mT}\) term.
The full proof is deferred to Appendix~\ref{appx:proof for main algorithm}.

\section{Experiments} \label{sec:experiments}
Our experiments are designed to illustrate three phenomena: (i) in nonlinear reward models, accounting for memory effects improves over myopic or non-memory-aware baselines; (ii) in rising environments, planning over blocks is crucial because actions can improve the future reward landscape; and (iii) even in the linear reward case, the proposed shrunken-confidence design improves over the \texttt{OFUL-memory} baseline.

We compare \(\algname\) with two baselines. The first is \texttt{GLB-OMD}~\citep{zhang2025generalized}, a state-of-the-art generalized linear bandit algorithm that ignores memory effects and selects actions on a round-by-round basis as if rewards depended only on the current action.
The second is \texttt{Oracle Greedy}, which has access to the true parameter \(\thetab^*\) and selects an action greedily with respect to the current expected reward. For the linear reward experiment, we additionally compare with \texttt{OFUL-memory}~\citep{clerici2024linear}.

Figure~\ref{fig:env} reports cumulative rewards across scenarios.
In the logistic rotting environment (Figure~\ref{fig:logistic-rotting}), \(\algname\) consistently outperforms both \texttt{GLB-OMD} and \texttt{Oracle Greedy}, and the gap widens over time.
This behavior highlights the benefit of optimizing over entire blocks rather than making myopic round-by-round decisions.
In the logistic rising environment (Figure~\ref{fig:logistic-rising}), the performance gap becomes more pronounced: Oracle Greedy fails to account for the long-term effect of actions on future rewards, whereas \(\algname\) benefits from block-level planning.
Finally, in a linear rotting environment with a larger memory length (Figure~\ref{fig:linear-rotting}), \(\algname\) achieves higher cumulative reward than \texttt{OFUL-memory}.
This is consistent with our theory, which predicts that the SCB design separates the leading dependence on the memory length \(m\) and the feature dimension \(d\).
The details are deferred to Appendix~\ref{appx:experiment-details}.

\section{Conclusion}
We studied generalized linear bandits with memory, where rewards depend on a finite history of past actions. 
Our results show that memory-induced non-stationarity does not, by itself, fundamentally increase the statistical complexity of the problem. 
In particular, the previously known \(\widetilde \Ocal(T^{3/4})\) regret bound for linear bandits with memory arises from an analytical mismatch between block-level estimation and action-wise uncertainty control.
By treating each block as a single combinatorial decision unit, we recover a sharper \(\widetilde \Ocal(\sqrt T)\) regret bound.
For generalized linear rewards, we introduce a SCB-based exploration scheme based on two complementary bonuses: a shared block-level bonus computed from past blocks and an inflated intra-block bonus that accounts for the actions already selected within the current block. 
This design controls yields regret of order \(\widetilde \Ocal(\sqrt{mT}+\sqrt{dT})\), thereby decoupling the memory length from
the feature dimension.
A natural direction for future work is to relax the exact UCB oracle used for block optimization.
Developing efficient approximate oracles, and quantifying how their optimization error propagates into regret, would make the framework more computationally practical and applicable to richer action classes.

\section*{Impact Statement}
This paper presents work whose goal is to advance the field of 
Machine Learning. There are many potential societal consequences 
of our work, none which we feel must be specifically highlighted here.

\section*{Acknowledgments}
This work was supported by Shinsegae and by the National Research Foundation of Korea~(NRF) grant and the Institute of Information \& communications Technology Planning \& Evaluation~(IITP) grant both funded by the Korea government~(MSIT) (No. RS-2022-NR071853, RS-2023-00222663, RS-2025-25463302, RS-2026-25507282).

\bibliography{references}
\bibliographystyle{icml2026}

\newpage
\appendix
\onecolumn

\renewcommand{\contentsname}{Contents of Appendix}
\addtocontents{toc}{\protect\setcounter{tocdepth}{1}}
{
  \hypersetup{hidelinks}
  \tableofcontents
}

\section{Additional Notations}

We define the sequence $\bb_\tau = (\bb_{\tau,i})_{i=1}^{m+L}$ for a selected action block $\xb_\tau=(\xb_{\tau,1}, \ldots, \xb_{\tau,m+L})$ as
\begin{equation*}
    \bb_{\tau,i}:=
    \begin{cases}
        \xb_{\tau,i}, & 1 \le i \le m, \\
        {\Ab}_{\tau, i-1}^\top\,\xb_{\tau,i}, & m+1 \le i \le m+L ,
    \end{cases}
\end{equation*}
where ${\Ab}_{\tau, i-1}=\left(\Ib_d+\sum_{s=1}^{m}\,\xb_{\tau, i-s} \xb_{\tau, i-s}^{\top}\right)^{\gamma}$.
Similarly, for the proxy optimal block $   \tilde{\xb}
    \in
    \argmax_{\xb\in\Xcal^{m+L}} \tilde r(\xb)$, we define $\tilde{\bb} = (\tilde{\bb}_{i})_{i=1}^{m+L}$ as
\begin{equation*}
    \tilde{\bb}_{i}:=
    \begin{cases}
        \tilde{\xb}_i, & 1 \le i \le m, \\
        \tilde{\Ab}_{i-1}^\top\, \tilde{\xb}_i, & m+1 \le i \le m+L ,
    \end{cases}
\end{equation*}
where $\tilde{\Ab}_{i-1}=\left(\Ib_d+\sum_{s=1}^{m}\tilde{\xb}_{i-s} \tilde{\xb}_{i-s}^{\top}\right)^{\gamma}$.

\section{Related Work}
\paragraph{Linear bandits with memory.}
Linear bandit with memory is a model to capture user's nonstationary preference behavior, introduced by \citet{clerici2024linear}. In this setting, the expected reward at time step $t$ becomes $r_t = \langle \xb_t, \Ab_{t-1}\thetab \rangle$, where $\Ab_{t-1}$ is a memory matrix defined as $(\Ib_d + \sum_{s=1}^{m}\xb_{t-s}\xb_{t-s}^\top)^\gamma$. Here, $m$ denotes the memory size that determines the range of past actions influencing the current reward, and $\gamma \in \RR$ quantifies the rotting or rising effect of past actions. They address nonstationarity by converting the problem into a stationary bandit setting through a carefully designed cyclic approximation. Consequently, their $\texttt{OFUL-Memory}$ algorithm achieves a regret bound of $\widetilde{\Ocal}\big(\sqrt{d}(m+1)^{{\frac{1}{2}}+\max\{\gamma,~0\}}T^{3/4}\big)$, which is suboptimal compared to the best known general lower bound of $\Omega(d\sqrt{T})$. 

\paragraph{Generalized linear bandits. } 
Generalized linear bandits (GLBs) were first studied by \citet{filippi2010parametric}, who proposed \texttt{GLM-UCB}.
Subsequent works have focused on improving statistical efficiency in GLBs, primarily through maximum likelihood estimation, including \citet{lee2024a, sawarni2024generalized, liu2024almost, clerico2025confidence}.
For the logistic bandits which is a special case of GLBs, several works have developed the confidence set based on Online Mirror Descent (OMD)-based online estimators \citep{faury2022jointly, zhang2023online, lee2024nearly, lee2025improved}.
More generally, \citet{zhang2025generalized} developed an OMD-based online estimator for generalized linear bandits and derived refined confidence sets with radius $\mathcal{O}(\sqrt{d\log t})$.
We adopt their confidence sets and online parameter updates in the generalized linear bandit setting with memory.

\paragraph{Combinatorial bandits. } In the combinatorial bandit framework~\citep{qin2014contextual, jin2021shrinking, hwang2023combinatorial, liu2025combinatorial}, an agent selects a set of arms rather than a single arm at each round. Thus, it is natural to consider techniques from combinatorial bandits for bandit problems with memory, since algorithms in both settings select a set of arms without updating parameters. In particular, our work is related to combinatorial logistic bandits~\citep{liu2025combinatorial}, where expected rewards follow a logistic model. However, while their algorithm involves a complicated projection step and requires prior knowledge of $\kappa$, our algorithm is substantially simpler and employs a different confidence set that does not require knowledge of $\kappa$.
Moreover, most existing works on combinatorial bandits (including \citet{liu2025combinatorial}) employ a regularization parameter that scales with the block length $K$ to obtain statistical guarantees, which results in a regret bound of $\widetilde{\Ocal}(K\sqrt{dT})$ and becomes unfavorable when $K$ is large. \citet{jin2021shrinking} highlight this limitation in combinatorial bandits with large set sizes and propose an algorithm that updates its strategy within a block without performing parameter estimation. However, their analysis relies on a linear reward model, and whether their approach can be extended to the generalized linear model setting has remained an open problem.

\section{Canonical Exponential Family} \label{appx:canonical-family}
We provide a canonical exponential-family specification that induces the conditional mean model in Eq.~\eqref{eq:glm-memory-mean}. Conditioned on \((\Fcal_{t-1},\xb_t)\), let the reward \(y_t\) follow a canonical exponential-family distribution with density 
\begin{align*}
    \PP(y_t \mid \Fcal_{t-1}, \xb_t) = \exp \left( \frac{y_t z_t-\psi(z_t)}{g(\phi)} + h(y_t,\phi) \right) \, , 
\end{align*}
where $z_t := \langle \xb_t,\Ab_{t-1}\thetab^*\rangle$.
Here, \(\phi\in\RR_+\) is a known scale parameter, \(\psi(\cdot)\) is the log-partition function, \(g(\phi)>0\) is a known dispersion parameter, and \(h(\cdot,\phi)\) is the base measure.

Under this model, standard properties of canonical exponential families give \[ \mathbb E[y_t \mid \Fcal_{t-1},\xb_t] = \dot\psi(z_t), \qquad \mathrm{Var}(y_t \mid \Fcal_{t-1},\xb_t) = g(\phi)\ddot\psi(z_t). \] 
We define the link function by $\mu(z):=\dot\psi(z)$ so that \(\dot\mu(z)=\ddot\psi(z)\). Hence, we have 
\begin{equation*}
    \mathbb E[y_t \mid \Fcal_{t-1}, \xb_t] = \mu\!\left(\langle \xb_t,\Ab_{t-1}\thetab^*\rangle\right)
    \, ,
    \quad 
    \mathrm{Var}(y_t \mid \Fcal_{t-1}, \xb_t) = g(\phi) \dot\mu\!\left(\langle \xb_t,\Ab_{t-1}\thetab^*\rangle\right) \, .
\end{equation*}

\section{Proof of Proposition~\ref{prop:failure-of-oracle-greedy}}
\FailureOfOracleGreedy*
\begin{proof}[Proof of Proposition~\ref{prop:failure-of-oracle-greedy}]
    We prove the statement by constructing rotting and rising instances in which the
    oracle greedy policy suffers linear regret. Throughout the proof, we compare the
    oracle greedy policy with a fixed reference policy. Since the optimal policy
    achieves at least the cumulative reward of any reference policy, a linear gap
    between the reference policy and oracle greedy implies linear regret.
    
    \paragraph{Rotting case.}
    Let \(d=2\), \(\Xcal=\{\eb_1,\eb_2\}\), \(m=1\), \(\thetab^*=\eb_1\), and
    \(\gamma=-\beta\) for some \(\beta>0\) to be chosen later. Let the initial memory
    matrix be \(\Ab_0=\Ib_2\), and for \(t\ge2\),
    $
        \Ab_{t-1}
        =
        \left(\Ib_2+\xb_{t-1}\xb_{t-1}^{\top}\right)^{-\beta} \, .
    $
    Since \(\mu\) is continuous and strictly increasing, we have
    \[
        \lim_{\beta\to\infty}\mu(2^{-\beta})=\mu(0)
        \quad\text{and}\quad
        \mu(1)>\mu(0).
    \]
    Hence, for sufficiently large \(\beta>0\),
    \[
        \mu(2^{-\beta})
        <
        \frac{\mu(1)+\mu(0)}{2}.
    \]
    Fix such a \(\beta\).
    We first characterize the oracle greedy policy. At \(t=1\), since
    \[
        \langle \eb_1, \Ab_0\thetab^*\rangle = 1,
        \qquad
        \langle \eb_2, \Ab_0\thetab^*\rangle = 0,
    \]
    and \(\mu\) is strictly increasing, the greedy policy selects \(\eb_1\).
    Suppose now that the greedy policy selected \(\eb_1\) at time \(t-1\). Then
    \[
        \Ab_{t-1}
        =
        \left(\Ib_2+ \eb_1 \eb_1^\top\right)^{-\beta}
        =
        \begin{pmatrix}
            2^{-\beta} & 0 \\
            0 & 1
        \end{pmatrix}.
    \]
    Therefore,
    \[
        \langle \eb_1,\Ab_{t-1}\thetab^*\rangle = 2^{-\beta},
        \qquad
        \langle \eb_2,\Ab_{t-1}\thetab^*\rangle = 0.
    \]
    Again by strict monotonicity of \(\mu\), the greedy policy selects \(\eb_1\).
    By induction, oracle greedy selects \(\eb_1\) at every round. Hence its expected
    cumulative reward is
    \[
        R_T^{\mathrm{greedy}}
        =
        \mu(1)+(T-1)\mu(2^{-\beta}).
    \]
    
    Now consider the reference policy that alternates between \(\eb_1\) and \(\eb_2\),
    starting with \(\eb_1\). Under this policy, every two consecutive rounds yield
    expected reward \(\mu(1)+\mu(0)\). Thus, for all \(T\), the expected cumulative reward of the reference policy satisfies
    \[
        R_T^{\mathrm{ref}}
        \ge
        \frac{\mu(1)+\mu(0)}{2} T \, .
    \]
    Consequently,
    \[
    \begin{aligned}
        \regret_T(\pi^{\mathrm{greedy}})
        &=
        \mathrm{OPT}(T)-R_T^{\mathrm{greedy}}
        \\
        &\ge
        R_T^{\mathrm{ref}}-R_T^{\mathrm{greedy}}
        \\
        &\ge
        \frac{\mu(1)+\mu(0)}{2} T
        -
        \mu(1)
        -
        (T-1)\mu(2^{-\beta})
        \\
        & 
        \ge \left( \frac{\mu(1)+\mu(0)}{2} - \mu(2^{-\beta}) \right) T - \mu(1) + \mu(2^{-\beta}) \, .
    \end{aligned}
    \]
    Since 
    $
        \frac{\mu(1)+\mu(0)}{2}
        -
        \mu(2^{-\beta})
        >0,
    $
    , oracle greedy suffers linear regret in the rotting case.

    \paragraph{Rising case.}
    We next construct a rising instance. Let \(d=2\), \(\Xcal=\{\eb_1,\eb_2\}\),
    \(m=1\), and \(\gamma=1\). We use the memory matrix
    \[
        \Ab_{t-1}
        =
        \Ab_0+\xb_{t-1}\xb_{t-1}^{\top},
        \qquad
        \Ab_0=\eb_1\eb_1^\top,
    \]
    with initial memory matrix \(\Ab_0\). Let
    $\thetab^* = (\epsilon, \sqrt{1-\epsilon^2})^\top$,
    where \(0<\epsilon<1/\sqrt{5}\). This choice ensures that
    $
        2\epsilon < \sqrt{1-\epsilon^2} \, .
    $
    
    At \(t=1\), we have
    \[
        \langle \eb_1, \Ab_0\thetab^*\rangle = \epsilon,
        \qquad
        \langle \eb_2, \Ab_0\thetab^*\rangle = 0.
    \]
    Thus oracle greedy selects \(\eb_1\). Suppose that the greedy policy selected
    \(\eb_1\) at time \(t-1\). Then
    \[
        \Ab_{t-1}
        =
        \eb_1\eb_1^\top+\eb_1\eb_1^\top
        =
        2\eb_1\eb_1^\top,
    \]
    and hence
    \[
        \langle \eb_1,\Ab_{t-1}\thetab^*\rangle = 2\epsilon,
        \qquad
        \langle \eb_2,\Ab_{t-1}\thetab^*\rangle = 0.
    \]
    Since \(\mu\) is strictly increasing, greedy again selects \(\eb_1\). Therefore,
    by induction, oracle greedy selects \(\eb_1\) at every round, and its expected
    cumulative reward is
    \[
        R_T^{\mathrm{greedy}}
        =
        \mu(\epsilon)+(T-1)\mu(2\epsilon).
    \]
    
    Now consider the reference policy that always selects \(\eb_2\). At \(t=1\), its
    expected reward is \(\mu(0)\). For every \(t\ge2\), since the previous action is
    \(\eb_2\),
    \[
        \Ab_{t-1}
        =
        \eb_1\eb_1^\top+\eb_2\eb_2^\top
        =
        \Ib_2.
    \]
    Therefore,
    $
        \langle \eb_2,\Ab_{t-1}\thetab^*\rangle
        =
        \sqrt{1-\epsilon^2}.
    $
    The expected cumulative reward of this reference policy is
    \[
        R_T^{\mathrm{ref}}
        =
        \mu(0)+(T-1)\mu\!\left(\sqrt{1-\epsilon^2}\right).
    \]
    Thus,
    \[
    \begin{aligned}
        \regret_T(\pi^{\mathrm{greedy}})
        &\ge
        R_T^{\mathrm{ref}}-R_T^{\mathrm{greedy}}
        \\
        &=
        (T-1)
        \left[
            \mu\!\left(\sqrt{1-\epsilon^2}\right)
            -
            \mu(2\epsilon)
        \right]
        +
        \mu(0)-\mu(\epsilon).
    \end{aligned}
    \]
    Since \(2\epsilon<\sqrt{1-\epsilon^2}\) and \(\mu\) is strictly increasing,
    $
        \mu\!\left(\sqrt{1-\epsilon^2}\right)
        >
        \mu(2\epsilon) \, .
    $
    Therefore the regret is \(\Omega(T)\). This proves that oracle greedy can suffer
    linear regret in both rotting and rising scenarios.

\end{proof}

\section{Proof of Proposition~\ref{prop:exp_bound}}
\PreblockPredictionErrorBound*
\begin{proof}[Proof of Proposition~\ref{prop:exp_bound}]
Let $z_{\tau,i} := \bb_i(\xb)^\top\thetab_\tau$ and $h := \bb_i(\xb)^\top(\thetab^* - \thetab_\tau)$. By Taylor's theorem with integral remainder, expanded around $z_\tau$, we have
\begin{align*}
    \mu(\bb_i(\xb)^\top\thetab^*)-\mu(\bb_i(\xb)^\top\thetab_\tau)
    &=
    \mu(z_{\tau,i}+h)-\mu(z_{\tau,i}) \\
    &=
    \dot\mu(z_{\tau,i})h
    +
    h^2\int_0^1 (1-s)\ddot\mu(z_{\tau,i}+s h)\,ds .
\end{align*}
Taking absolute values and using $\dot\mu(z_{\tau,i})\ge 0$ and
$|\ddot\mu(z)|\le \nu U_\mu$, we obtain
\begin{align*}
    \left|
    \mu(\bb_i(\xb)^\top\thetab^*)-\mu(\bb_i(\xb)^\top\thetab_\tau)
    \right|
    &\le
    \dot\mu(z_{\tau,i})|h|
    +
    h^2\int_0^1(1-s)|\ddot\mu(z_{\tau,i}+s h)|\,ds \\
    &\le
    \dot\mu(z_{\tau,i})|h|
    +
    \nu U_\mu h^2\int_0^1(1-s)\,ds \\
    &=
    \dot\mu(z_\tau)|h|
    +
    \frac{\nu U_\mu}{2}h^2 .
\end{align*}
It remains to bound $h$. Since $\Hb_\tau\succ0$, the weighted
Cauchy--Schwarz inequality gives
\begin{align*}
    |h|
    =
    |\bb_i(\xb)^\top(\thetab^*-\thetab_\tau)| 
    \le
    \|\thetab^*-\thetab_\tau\|_{\Hb_\tau}\|{\bb}_i(\xb)\|_{\Hb_\tau^{-1}} \le \beta \|{\bb}_i(\xb)\|_{\Hb_\tau^{-1}}
\end{align*}
Substituting this inequality into the Taylor bound yields
\begin{align*}
    \left|
    \mu(\bb_i(\xb)^\top\thetab_\tau)
    -
    \mu(\bb_i(\xb)^\top\thetab^*)
    \right|
    \le{}
    \beta
    \dot{\mu}\!\left(\bb_i(\xb)^\top\thetab_{\tau}\right)
    \|{\bb}_i(\xb)\|_{\Hb_\tau^{-1}}
+
    \frac{\nu U_\mu}{2}
    \beta^2
    \|{\bb}_i(\xb)\|_{\Hb_\tau^{-1}}^2.
\end{align*}
\end{proof}

\section{Proof of Theorem~\ref{thm:improved regret for linear model informal}} \label{appx:proof for linear case}

In this section, we provide the formal proof of Theorem~\ref{thm:improved regret for linear model informal}. Before proceeding, we first state the formal version of Theorem~\ref{thm:improved regret for linear model informal}.

\begin{algorithm}[t!]
    \caption{OFUL-Memory (\texttt{OM})~\citep{clerici2024linear}}
    \label{alg:OM}
    \begin{algorithmic}[1]
        \STATE \textbf{Input:} regularization parameter $\lambda$, confidence radius $\{\beta_\tau\}_{\tau\ge1}$, block length $L$
        \STATE Initialize $\Ccal_1 = \{ \thetab : \|\thetab\|_2 \le C_\theta\}, \Vb_1 = \lambda \Ib_d$
        \FOR{$\tau = 1, 2, \ldots, \lfloor T/(m+L) \rfloor$}
            \STATE Calculate
            \begin{equation} \label{eq:OM action}
                \xb_\tau := (\xb_{\tau,1}, \ldots, \xb_{\tau, m+L}) = \argmax_{ \{ \xb_{\tau, i} \}_{i=1}^{m+L} \subset \Xcal} \sup_{\thetab\in \mathcal{C}_{\tau}}
                \sum_{i=m+1}^{m+L} \langle \xb_{\tau,i}, \Ab_{\tau,i-1} \thetab \rangle
            \end{equation}
            \STATE Play $\xb_\tau$ and collect $y_{\tau,1}, \ldots, y_{\tau, m+L}$
            \STATE Update
            \begin{align*}
                & \Vb_{\tau+1} = \Vb_{\tau} + \sum_{i=m+1}^{m+L} (\Ab_{\tau, i-1} \xb_{\tau, i}) (\Ab_{\tau, i-1} \xb_{\tau, i})^\top
                \\
                & \hat{\thetab}_{\tau+1} = \Vb_{\tau+1}^{-1} \left( \sum_{s=1}^{\tau}  \sum_{i=m+1}^{m+L} y_{s,i} \Ab_{s, i-1} \xb_{s, i} \right)
                \\
                & \Ccal_{\tau+1} = \left\{ \thetab \in \RR^d : \| \hat{\thetab}_{\tau+1} - \thetab \|_{\Vb_{\tau+1}} \le \beta_{\tau+1} \right\}
            \end{align*}
        \ENDFOR
    \end{algorithmic}
\end{algorithm}

\setcounter{theorem}{0}
\begin{theorem}[Improved Regret of~\texttt{OFUL-memory}~\citep{clerici2024linear}] \label{thm:improved regret for linear model formal}
    Suppose that Assumption~\ref{assm:bounded domain} holds and the noise term $\eta_t:=y_t - \langle\xb_t, \Ab_{t-1}\thetab^*\rangle$ satisfies conditionally $\sigma$-sub-gaussian. If we set the algorithmic parameters in Algorithm~\ref{alg:OM} as follows:
    $\lambda=\max\{d, R^2 \}$ ($R:=(1+m)^{\max\{\gamma,0\}}$),  $\beta_\tau=\sqrt{ 2\log T + d \log \left(1+{\frac{\tau(m+L)(m+1)^{2\max\{\gamma,0\}}}{d\lambda}}\right)}+\sqrt{\lambda}C_\theta$, and $L = \lfloor m^{1/2} d^{-1/2}T^{1/2} -m \rfloor$, 
    then \texttt{OFUL-memory} (Algorithm~\ref{alg:OM}) achieves the following cumulative regret:
    \begin{equation*}
        \regret_T = \widetilde{\Ocal}\left(\sigma (m+1)^{\max\{\gamma, 0\}} C_\theta\sqrt{m d T} + \sigma C_\theta \sqrt{d \max\{d, R^2\}T}\right) \, .
    \end{equation*}
\end{theorem}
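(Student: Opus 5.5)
We split the regret of \texttt{OFUL-memory} into a cyclic approximation error and an estimation error against the proxy-optimal block $\tilde\xb$:
\[
\regret_T=\Big(\mathrm{OPT}(T)-\tfrac{T}{m+L}\tilde r(\tilde\xb)\Big)+\sum_{\tau}\big(\tilde r(\tilde\xb)-\tilde r(\xb_\tau)\big)+\sum_\tau\big(\tilde r(\xb_\tau)-\textstyle\sum_{i=1}^{m+L}\EE[y_{\tau,i}]\big).
\]
\textbf{Approximation terms.} By the cyclic approximation result of \citet{clerici2024linear} (Appendix~\ref{appx:approximation}), the first term is $\Ocal\big(\tfrac{mT}{m+L}\,RC_\theta\big)$ up to lower-order boundary terms. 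The third term covers the first $m$ unscored rounds of each block and the mismatch between the true memory state at the block start and the one assumed by $\tilde r$. Each reward is at most $RC_\theta$ in absolute value, so this term is also $\Ocal\big(\tfrac{mT}{m+L}RC_\theta\big)$.

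\textbf{Estimation term.} On the standard self-normalized event of \citet{abbasi2011improved}, the memory-dependent features $\bb_{\tau,i}$ satisfy $\|\bb_{\tau,i}\|_2\le R$, the noise is $\sigma$-sub-Gaussian, and the features are $\Fcal$-measurable within the block. Hence $\thetab^*\in\Ccal_\tau$ for all $\tau$ with probability $1-1/T$, given the stated $\beta_\tau$. Optimism of \eqref{eq:OM action} then gives
\[
\tilde r(\tilde\xb)-\tilde r(\xb_\tau)\le\sum_{i=m+1}^{m+L}\langle\bb_{\tau,i},\widetilde\thetab_\tau-\thetab^*\rangle\le 2\beta_\tau\sum_i\|\bb_{\tau,i}\|_{\Vb_\tau^{-1}},
\]
while also being at most $2LRC_\theta$. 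Following the displayed decomposition in the main text, we write $\|\bb_{\tau,i}\|_{\Vb_\tau^{-1}}=\|\bb_{\tau,i}\|_{\Vb_{\tau,i-1}^{-1}}+\big(\|\bb_{\tau,i}\|_{\Vb_\tau^{-1}}-\|\bb_{\tau,i}\|_{\Vb_{\tau,i-1}^{-1}}\big)$.

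\emph{First piece.} Since $\lambda\ge R^2$, each $\|\bb\|_{\Vb^{-1}}\le1$. Cauchy--Schwarz and the elliptical potential lemma over all $T$ rounds, treating the within-block Gram matrices as a single sequential chain, give $\Ocal\big(\beta_T\sqrt{dT\log(1+TR^2/(d\lambda))}\big)$.

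\emph{Second piece.} This is the main obstacle, and we follow the eigenvalue telescoping argument of \citet{jin2021shrinking}. By the Sherman--Morrison formula and concavity of the square root,
\[
\|\bb\|_{\Vb_\tau^{-1}}-\|\bb\|_{\Vb_{\tau,i-1}^{-1}}\le\frac{\|\bb\|^2_{\Vb_\tau^{-1}}-\|\bb\|^2_{\Vb_{\tau,i-1}^{-1}}}{\|\bb\|_{\Vb_\tau^{-1}}}.
\]
We bound the numerator by $\mathrm{tr}\big(\Vb_\tau^{-1}-\Vb_{\tau,i-1}^{-1}\big)$ times a norm factor, or more directly sum $\sum_i\big(\|\bb_{\tau,i}\|^2_{\Vb_\tau^{-1}}-\|\bb_{\tau,i}\|^2_{\Vb_{\tau,i-1}^{-1}}\big)$ and control it by $L\cdot\mathrm{tr}\big(\Vb_\tau^{-1}-\Vb_{\tau+1}^{-1}\big)$, using that all within-block increments are bounded by the whole-block increment. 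The telescoping sum $\sum_\tau\mathrm{tr}(\Vb_\tau^{-1}-\Vb_{\tau+1}^{-1})\le d/\lambda$ then yields $\Ocal(Ld/\lambda)$ times a bounded factor. With $\lambda\ge d$ this gives a total of $\widetilde\Ocal(\beta_T L)$ or $\widetilde\Ocal(\beta_T Ld)$. The delicate point is keeping the denominator $\|\bb\|_{\Vb_\tau^{-1}}$ from blowing up. We plan to split on whether $\|\bb_{\tau,i}\|_{\Vb_\tau^{-1}}$ is small, in which case the difference is trivially small, or large, in which case dividing is safe. The choice $\lambda=\max\{d,R^2\}$ is what makes the residual scale only as $L$ rather than $L\sqrt T$.

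\textbf{Combining.} The total regret is
\[
\widetilde\Ocal\Big(\sigma RC_\theta\tfrac{mT}{m+L}+\sigma C_\theta\sqrt{\lambda dT}+\sigma C_\theta L\sqrt{\lambda}\Big),
\]
where we used $\beta_T=\widetilde\Ocal(\sigma\sqrt d+\sqrt\lambda C_\theta)$. Plugging in $m+L\approx\sqrt{mT/d}$ makes $\tfrac{mT}{m+L}=\sqrt{mdT}$ and the residual term $L\sqrt{\lambda}\lesssim\sqrt{mT\lambda/d}$, which is dominated by $R\sqrt{mdT}+\sqrt{d\lambda T}$. This gives the stated bound.
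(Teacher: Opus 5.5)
\paragraph{Verdict.} Your decomposition, the optimism step, and the first piece (elliptical potential along the sequential chain $\mathbf{V}_{\tau,i-1}$) match the paper. The second piece, which carries the whole improvement, has a genuine gap: the mechanism you describe cannot produce a $T$-independent residual.

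\paragraph{Why your second-piece argument fails.} The ``bounded factor'' is not bounded. Write $\mathbf{b}=\|\mathbf{b}\|_2\mathbf{u}$ and divide your numerator bound $\|\mathbf{b}\|_2^2\operatorname{tr}(\cdot)$ by $\|\mathbf{b}\|_{\mathbf{V}_\tau^{-1}}$. What remains is the factor $\|\mathbf{b}\|_2/\|\mathbf{u}\|_{\mathbf{V}_\tau^{-1}}$, which can be as large as $R\sqrt{\lambda_{\max}(\mathbf{V}_\tau)}$ and grows with $\tau$.

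The small/large split does not rescue this. With threshold $\varepsilon$, each ``small'' term is at most $\varepsilon$, so these contribute up to $\varepsilon T$ in total. The ``large'' terms contribute at most $\varepsilon^{-1}\sum_{\tau,i}\bigl(\|\mathbf{b}_{\tau,i}\|^2_{\mathbf{V}_\tau^{-1}}-\|\mathbf{b}_{\tau,i}\|^2_{\mathbf{V}_{\tau,i-1}^{-1}}\bigr)\le LR^2d/(\lambda\varepsilon)$ by your trace telescoping. The best this route can certify is therefore $2R\sqrt{LdT/\lambda}$. Multiply by $\beta_T\gtrsim\sqrt\lambda\,C_\theta$ and take $L\asymp\sqrt{mT/d}$: the result is of order $C_\theta R\,d^{1/4}m^{1/4}T^{3/4}$, exactly the rate you are trying to remove.

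The root problem is that squared-norm differences telescope through $\operatorname{tr}(\mathbf{V}^{-1})$, but returning to norm differences forces a division by $\|\mathbf{b}\|_{\mathbf{V}_\tau^{-1}}$, which has no useful lower bound. The $T$-independence of the residual also does not come from the choice $\lambda=\max\{d,R^2\}$. In the paper, $\lambda\ge R^2$ serves the elliptical potential lemma, and $\lambda\ge d$ serves the final balancing.

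\paragraph{The missing idea.} Work with $\mathbf{V}^{-1/2}$ rather than $\mathbf{V}^{-1}$. This is exactly Lemma~\ref{lem:jin semi}: for $\mathbf{A}\preceq\mathbf{B}$ and any unit vector $\mathbf{u}$,
\[
\|\mathbf{u}\|_{\mathbf{A}^{-1}}-\|\mathbf{u}\|_{\mathbf{B}^{-1}}\le\sum_j\frac{2}{\sqrt{\lambda_j(\mathbf{A})}}-\sum_j\frac{2}{\sqrt{\lambda_j(\mathbf{B})}},
\]
uniformly in $\mathbf{u}$ and with no division. It follows in one line: $t\mapsto t^{-1/2}$ is operator monotone decreasing, so $\mathbf{B}^{-1/2}\preceq\mathbf{A}^{-1/2}$, and hence $\|\mathbf{A}^{-1/2}\mathbf{u}\|_2-\|\mathbf{B}^{-1/2}\mathbf{u}\|_2\le\|\mathbf{A}^{-1/2}-\mathbf{B}^{-1/2}\|_*\le\operatorname{tr}(\mathbf{A}^{-1/2}-\mathbf{B}^{-1/2})$.

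\paragraph{How the paper uses it.} The paper first replaces $\mathbf{V}_{\tau,i-1}$ by the larger $\mathbf{V}_{\tau+1}$, which only enlarges each difference. Then all $L$ terms of block $\tau$ share the same bound $R\sum_j\bigl(2/\sqrt{\lambda_j(\mathbf{V}_\tau)}-2/\sqrt{\lambda_j(\mathbf{V}_{\tau+1})}\bigr)$. Summing over positions and telescoping over blocks gives $2LRd/\sqrt\lambda$.

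\paragraph{Consequence for your combining step.} The residual is $LRd/\sqrt\lambda$, not $\Ocal(L)$. After multiplying by $2\beta_T$, the $\sqrt\lambda\,C_\theta$ part of $\beta_T$ cancels the $1/\sqrt\lambda$. The $\sigma\sqrt d$ part is absorbed using $\lambda\ge d$. Together these give $(\sigma+C_\theta)RdL\asymp(\sigma+C_\theta)R\sqrt{mdT}$. The exact $R/\sqrt\lambda$ factor matters: your fallback $\widetilde\Ocal(\beta_T Ld)$ would yield a term $\sigma d\sqrt{mT}$. That exceeds the stated $R\sqrt{mdT}$ term whenever $R<\sqrt d$, for example in the rotting case $R=1$.
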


\begin{proof}[Proof of Theorem~\ref{thm:improved regret for linear model formal}]
Recall that, if we denote by $r_t$ the expected reward collected at round $t$ when Algorithm~\ref{alg:OM} plays $\xb_\tau$ in Eq.~\eqref{eq:OM action}, then the cumulative regret of Algorithm~\ref{alg:OM} can be decomposed as
\begin{align*}
    \regret_T
    &= \opt(T) - \sum_{t=1}^T r_t \\
    &= \opt(T) - \sum_{\tau=1}^{T/(m+L)}
    \left(
        \sum_{i=1}^m r(\xb_{\tau,i})
        + \sum_{i=m+1}^{m+L} r(\xb_{\tau,i})
    \right) \\
    &= \opt(T)
    - \sum_{\tau=1}^{T/(m+L)} \sum_{i=1}^m r(\xb_{\tau,i})
    - \sum_{\tau=1}^{T/(m+L)} \tilde{r}(\xb_\tau) \, , \numberthis \label{eq:linear regret eq}
\end{align*}
where the last equality follows from the definition of $\tilde{r}(\xb)$ in Eq.~\eqref{eq:proxy reward}.

By the boundedness of the linear reward, we have for all $\tau \in [T/(m+L)]$,
\begin{equation} \label{eq:linear regret reward upper bound}
    \bigl| r(\xb_{\tau,i}) \bigr| 
    = \bigl| \langle \xb_{\tau,i}, \Ab_{\tau,i-1} \thetab^* \rangle \bigr|
    \le \|\Ab_{\tau,i-1}\thetab^*\|_2 \|\xb_{\tau,i}\|_2
    \le \|\Ab_{\tau,i-1}\|_* \|\thetab^*\|_2
    \le (m+1)^{\gamma^+} C_\theta \, .
\end{equation}

Then, by Proposition~\ref{prop:proxy_upper}, $\opt(T)$ admits the upper bound
\begin{align} 
    \opt(T)
    &\le
    \frac{T}{m+L}\,\tilde{r}(\tilde{\xb})
    + \frac{m (m+1)^{\gamma^+} C_\theta}{m+L}\, T
    + L (m+1)^{\gamma^+} C_\theta \notag\\
    &\le
    \frac{T}{m+L}\,\tilde{r}(\tilde{\xb})
    + \frac{m R C_\theta}{m+L}\, T
    + L R C_\theta,
    \label{eq:linear regret opt bound}
\end{align}
where $\tilde{\xb}$ denotes the optimal block of the proxy reward $\tilde{r}(\xb)$ and $R:=(m+1)^{\gamma^+}$.  

Applying Eq.~\eqref{eq:linear regret opt bound} into Eq.~\eqref{eq:linear regret eq} we have
\begin{align*}
    \regret_T 
    & \le 
    \sum_{\tau=1}^{T/(m+L)} \tilde{r}(\tilde{\xb}) + \frac{m R C_\theta}{m+L}\, T
    + L R C_\theta 
    - \sum_{\tau=1}^{T/(m+L)} \sum_{i=1}^m r(\xb_{\tau,i})
    - \sum_{\tau=1}^{T/(m+L)} \tilde{r}(\xb_\tau) 
    \\
    & \le \sum_{\tau=1}^{T/(m+L)} \tilde{r}(\tilde{\xb}) - \sum_{\tau=1}^{T/(m+L)} \tilde{r}(\xb_\tau) 
    + \frac{2 m R C_\theta}{m+L}\, T
    + L R C_\theta \, 
    \\
    & \le \sum_{\tau=1}^{T/(m+L)} \{\tilde{r}(\tilde{\mathbf{x}}) - \tilde{r}(\mathbf{x}_\tau)\}
        + \frac{2 mR C_\theta}{m+L}\, T
    + LR C_\theta \, ,
    \\
&
=
\sum_{\tau=1}^{T/(m+L)} 
\sum_{i=m+1}^{m+L}
\{\langle 
\tilde{\bb}_i, \thetab^* 
\rangle
-
\langle 
\bb_{\tau,i}, \thetab^* 
\rangle	 \}
    + \frac{2 m R C_\theta}{m+L}\, T
    + LR C_\theta \, , \numberthis \label{eq:regretupper}
\end{align*}
where in the second inequality we use the upper bound of linear reward in Eq.~\eqref{eq:linear regret reward upper bound}, and the last equality directly follows from the definition of proxy regret.

For $\beta_\tau:=\sigma\sqrt{ 2\log T + d \log \left(1+ \frac{\tau(m+L)(m+1)^{2\max\{\gamma,0\}}}{d\lambda}\right)}+\sqrt{\lambda}C_\theta$, let $\Omega$ be the event that $\|\thetab^* -\hat{\thetab}_\tau\|_{V_{\tau}}\le \beta_\tau$ holds for all $\tau \in [\frac{T}{m+L}]$. Then, by Lemma~\ref{lem:linear confidence}, $\mathbb{P}(\Omega)\ge 1-{\frac{1}{T}}$. 
On the event $\Omega$, 
\begin{align}
&\sum_{\tau=1}^{T/(m+L)} 
\sum_{i=m+1}^{m+L}
\langle 
\tilde{\bb}_i, \thetab^* 
\rangle
-
\langle 
\bb_{\tau,i}, \thetab^* 
\rangle \notag	\\ 
&\quad \le \sum_{\tau=1}^{T/(m+L)} 
\left\langle \sum_{i=m+1}^{m+L}
\bb_{\tau,i}, \tilde{\thetab}_\tau 
\right\rangle
-
\left\langle \sum_{i=m+1}^{m+L}
\bb_{\tau,i}, \thetab^* 
\right\rangle \notag \qquad \big(\text{where} \big(\sum\bb_{\tau,i}, \tilde{\thetab_\tau}\big) \text{is optimistic}\big)	\\ 
&\quad \le \sum_{\tau=1}^{T/(m+L)} 
\left\langle \sum_{i=m+1}^{m+L}
\bb_{\tau,i}, \hat{\thetab}_\tau 
\right\rangle
+\left\langle \sum_{i=m+1}^{m+L}
\bb_{\tau,i}, \tilde{\thetab}_\tau -\hat{\thetab}_\tau 
\right\rangle
-
\left\langle \sum_{i=m+1}^{m+L}
\bb_{\tau,i}, \hat{\thetab}_\tau
\right\rangle 
+\left\langle \sum_{i=m+1}^{m+L}
\bb_{\tau,i}, \hat{\thetab}_\tau-\thetab^* 
\right\rangle \notag 	\\ 
& \quad
\le
\sum_{\tau=1}^{T/(m+L)} 
\left\lVert \tilde{\thetab}_\tau -\hat{\thetab}_\tau\right\rVert_{\Vb_\tau} 
\left\lVert \sum_{i=m+1}^{m+L} \bb_{\tau,i}\right\rVert_{\Vb_\tau^{-1}} 
+
\left\lVert \hat{\thetab}_\tau-\thetab^*\right\rVert_{\Vb_\tau} 
\left\lVert \sum_{i=m+1}^{m+L} \bb_{\tau,i} \right\rVert_{\Vb_\tau^{-1}} \notag
\\
& \quad
\le
\sum_{\tau=1}^{T/(m+L)} 
2\beta_{\tau}
\left\lVert \sum_{i=m+1}^{m+L} \bb_{\tau,i}\right\rVert_{\Vb_\tau^{-1}} \notag \\
& \quad
\le \sum_{\tau=1}^{T/(m+L)} 
\sum_{i=m+1}^{m+L}
2 \beta_{\tau}
\lVert \bb_{\tau,i} \rVert_{\Vb_\tau^{-1}} \notag
\\
& \quad
\le
2
\beta_{T/(m+L)}
\sum_{\tau=1}^{T/(m+L)} 
\sum_{i=m+1}^{m+L}
\lVert \bb_{\tau,i} \rVert_{\Vb_\tau^{-1}}  \notag
\\
& \quad
=2
\beta_{T/(m+L)} \left[
\sum_{\tau=1}^{T/(m+L)} 
\sum_{i=m+1}^{m+L}
\left(
\lVert \bb_{\tau,i} \rVert_{\Vb_\tau^{-1}}
- 
\lVert \bb_{\tau,i} \rVert_{\Vb_{\tau, i}^{-1}}
\right)
+
\sum_{\tau=1}^{T/(m+L)} 
\sum_{i=m+1}^{m+L}
\lVert \bb_{\tau,i} \rVert_{\Vb_{\tau, i}^{-1}} \right]
\label{eq:decomposition}
\end{align}

For the first term $\sum_{\tau=1}^{T/(m+L)} 
\sum_{i=m+1}^{m+L}
\left(
\lVert \bb_{\tau,i} \rVert_{\Vb_\tau^{-1}}
- 
\lVert \bb_{\tau,i} \rVert_{\Vb_{\tau, i}^{-1}}
\right)$ of Eq.~\eqref{eq:decomposition},  let $\lambda_{\tau,1}, \dots, \lambda_{\tau,d}$ be the eigenvalues of $\Vb_\tau$ for all $\tau\in \left[ \frac{T}{m+L} \right] $. Since $\Vb_{\tau+1} \succeq \Vb_{\tau, i}$ for all $i \in [m+1, m+L]$, we have $\lVert \bb_{\tau, i}\rVert_{\Vb_{\tau,i}^{-1}} \ge \lVert \bb_{\tau, i}\rVert_{\Vb_{\tau+1}^{-1}} $, and hence, 

\begin{align}
    \sum_{\tau=1}^{T/(m+L)} 
    \sum_{i=m+1}^{m+L}
    \left(
    \lVert \bb_{\tau,i} \rVert_{\Vb_\tau^{-1}}
    - 
    \lVert \bb_{\tau,i} \rVert_{\Vb_{\tau, i}^{-1}}
    \right)\notag
    &
    \le
    \sum_{\tau=1}^{T/(m+L)} 
    \sum_{i=m+1}^{m+L}
    \left(
    \lVert  \bb_{\tau,i} \rVert_{\Vb_\tau^{-1}}
    - 
    \lVert \bb_{\tau,i} \rVert_{\Vb_{\tau+1}^{-1}}
    \right) \notag
    \\
    &
    =
    \sum_{\tau=1}^{T/(m+L)} 
    \sum_{i=m+1}^{m+L}
    \lVert \bb_{\tau,i} 
    \rVert
    \cdot
    \left(
    \lVert \tilde{\bb}_{\tau,i} \rVert_{\Vb_\tau^{-1}}
    - 
    \lVert \tilde{\bb}_{\tau,i} \rVert_{\Vb_{\tau+1}^{-1}}
    \right)
    \tag{$\because \tilde{\bb}_{\tau,i} := \bb_{\tau,i}/\lVert \bb_{\tau,i} \rVert$}
    \\
    &
    \le
    R
    \sum_{\tau=1}^{T/(m+L)} 
    \sum_{i=m+1}^{m+L}
    \left(
    \lVert \tilde{\bb}_{\tau,i} \rVert_{\Vb_\tau^{-1}}
    - 
    \lVert \tilde{\bb}_{\tau,i} \rVert_{\Vb_{\tau+1}^{-1}}
    \right).
    \tag{$\because \lVert \bb_{\tau,i} \rVert_2\le R$}
\end{align}
By Lemma~\ref{lem:jin semi}, the right hand side is bounded by 
\begin{align}
    \sum_{\tau=1}^{T/(m+L)} 
    \sum_{i=m+1}^{m+L}
    \left(
    \lVert \bb_{\tau,i} \rVert_{\Vb_\tau^{-1}}
    - 
    \lVert \bb_{\tau,i} \rVert_{\Vb_{\tau, i}^{-1}}
    \right) 
    &
    \le
    R
    \sum_{\tau=1}^{T/(m+L)} 
    \sum_{i=m+1}^{m+L}
    \left(
    \sum_{j=1}^{d}
    \frac{2}{\sqrt{\lambda_{\tau,j}}}
    -
    \frac{2}{\sqrt{\lambda_{\tau+1,j}}}	
    \right) \notag
    \\
    &
    =
    LR
    \sum_{\tau=1}^{T/(m+L)} 
    \left(
    \sum_{j=1}^{d}
    \frac{2}{\sqrt{\lambda_{\tau,j}}}
    -
    \frac{2}{\sqrt{\lambda_{\tau+1,j}}}	
    \right)\notag
    \\
    &
    \le 
    LR 
    \sum_{j=1}^{d}
    \frac{2}{\sqrt{\lambda_{1,j}}} 
    \tag{$\because$ \text{telescoping argument}}
    \\
    & 
    =
    \frac{2LRd}{\sqrt{\lambda}} 
    \label{eq:difference}
\end{align}

Now we bound the second term $\sum_{\tau=1}^{T/(m+L)} 
\sum_{i=m+1}^{m+L}
\lVert \bb_{\tau,i} \rVert_{\Vb_{\tau, i}^{-1}}$ of Eq.~\eqref{eq:decomposition}. Since $\lVert \bb_{\tau,i} \rVert_2\le R$, setting $\lambda \ge R^2$ and applying the elliptical potential lemma (Lemma~\ref{lem:elliptical potential}) yields

\begin{align}
    \sum_{\tau=1}^{T/(m+L)} 
    \sum_{i=m+1}^{m+L}
    \lVert \bb_{\tau,i} \rVert_{\Vb_{\tau, i}^{-1}}
    \le 
    \sqrt{\frac{T}{m+L}\cdot L \cdot \sum_{\tau=1}^{T/(m+L)} 
    \sum_{i=m+1}^{m+L}
    \lVert \bb_{\tau,i} \rVert_{\Vb_{\tau, i}^{-1}}^2} 
    \le
    \sqrt{
    2
    Td
   \log\left( 1+\frac{R^2T}{d\lambda}\right)} \, ,
    \label{eq:ellip_onestep}
\end{align}
where the first inequality follows by Cauchy-Schwarz inequality.

Applying Eq.~\eqref{eq:ellip_onestep} and Eq.~\eqref{eq:difference} to Eq.~\eqref{eq:decomposition}, on the event $\Omega$, 
\begin{equation} \label{eq:regretupper1}
\sum_{\tau=1}^{T/(m+L)} 
\sum_{i=m+1}^{m+L}
\langle 
\tilde{\bb}_i, \thetab^* 
\rangle
-
\langle 
\bb_{\tau,i}, \thetab^* 
\rangle	 
\le 
2\beta_{T/(m+L)}\left(\sqrt{2Td
\log\left( 1+\frac{R^2T}{d\lambda}\right)
}
+\frac{2LRd}{\sqrt{\lambda}}
\right)
\end{equation}
Therefore, plugging Eq.~\ref{eq:regretupper1} into  Eq.~\eqref{eq:regretupper}, we have 
\begin{align*}
    \regret_T &\le
\sum_{\tau=1}^{T/(m+L)} 
\sum_{i=m+1}^{m+L}
\{\langle 
\tilde{\bb}_i, \thetab^* 
\rangle
-
\langle 
\bb_{\tau,i}, \thetab^* 
\rangle	 \}
    + \frac{2 m R C_\theta}{m+L}\, T
    + L R C_\theta  \\
&
\le
2 
\left( \sigma
\sqrt{
2\log T
+
d\log\left(1+ \frac{R^2T}{d\lambda}\right)
}
+\sqrt{\lambda}C_\theta
\right)
\times
\left(
\sqrt{
2
Td
\log\left( 1+\frac{R^2T}{d\lambda}\right)
}
+\frac{2LRd}{\sqrt{\lambda}}
\right)+2R C_\theta T  \mathbb{P}(\bar{\Omega})\\
&~~~+ \frac{2 m R C_\theta}{m+L}\, T
    + LRC_\theta 
\\
&
=
4 \left( \sigma
\sqrt{
	\frac{2\log T}{d}
	+
	\log\left( 1+\frac{R^2T}{d\lambda}\right)
}
+
\frac{\sqrt{\lambda}C_\theta}{\sqrt{d}}
\right)
\left(
\frac{LRd^{3/2}}{\sqrt{\lambda}}+d\sqrt{T\log\left( 1+\frac{R^2T}{d\lambda}\right)}
\right)
+ \frac{2 m R C_\theta}{m+L}\, T
    + (L+2)RC_\theta 
\end{align*}

By setting $L=\lceil m^{1/2}d^{-1/2}T^{1/2}\rceil -m$ and $\lambda=\mathcal{O}(\max\{d,R^2\})$, the regret has order of 
$$\tilde{\mathcal{O}}
(\sigma RC_\theta\sqrt{mdT} + \sigma C_\theta\sqrt{d\max\{d, R^2\}T}).$$
\end{proof}

\section{Proof of Theorem~\ref{thm:regret main algorithm}}\label{appx:proof for main algorithm}

\begin{theorem}[Regret bound of $\algname$ (formal)]
\label{thm:warmstarted-certified-regret-formal}
Suppose that Assumptions~\ref{assm:bounded domain}, \ref{assm:bounded link}, and~\ref{assm:self concordance} hold.
    If Algorithm~\ref{alg:main algorithm} is run with
    \(\beta=\Ocal(\nu R C_{\theta}\sqrt{d\nu R C_{\theta}^{2}+d\log(RT)})\),
    \(\lambda=\Ocal(d\nu^3R^3C_\theta)\), and
    \(L=\lfloor\sqrt{mT}\rfloor-m\), then Algorithm~\ref{alg:main algorithm} satisfies the following regret:
\begin{equation*}
\label{eq:app-main-cscb-bound}
    \regret_T
    =
    \widetilde\Ocal\!\left(
        (m+L)\tau_{\rm warm}
        +
        \sqrt{mT}
        +
        d\sqrt T
        +
        \sqrt\kappa\,d^2m^{1/4}T^{1/4}
        +
        \kappa d^2
    \right).
\end{equation*}
Consequently, whenever the stopping rule is reached within \(\tau_{\rm warm}=\widetilde\Ocal(1)\) cycles, it satisfies
\begin{equation}
\label{eq:app-main-cscb-bound-final}
    \regret_T
    =
    \widetilde\Ocal\!\left(
        \sqrt{mT}
        +
        d\sqrt T
        +
        \sqrt\kappa\,d^2m^{1/4}T^{1/4}
        +
        \kappa d^2
    \right).
\end{equation}
\end{theorem}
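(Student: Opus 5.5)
We follow the decomposition sketched in Section~\ref{subsec:regret analysis}. The regret splits into three parts.
\begin{itemize}
\item \emph{Warm-up cost.} Each warm-up block costs at most $(m+L)\cdot 2U_\mu RC_\theta$. This gives the $(m+L)\tau_{\rm warm}$ term.
\item \emph{Cyclic approximation error.} Appendix~\ref{appx:approximation} gives $\frac{mT}{m+L}+L$ up to constants. We reuse the decomposition of Eq.~\eqref{eq:linear regret eq}, with the bounded link $|\mu|$ on $[-RC_\theta,RC_\theta]$ replacing the linear reward bound.
\item \emph{Estimation error.} This is $\sum_{\tau>\tau_{\rm warm}}\big(\widetilde r(\widetilde\xb)-\widetilde r(\xb_\tau)\big)$.
\end{itemize}
The choice $L=\lfloor\sqrt{mT}\rfloor-m$ makes the approximation part $\widetilde\Ocal(\sqrt{mT})$. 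Substituting $L\le\sqrt{mT}$ into $\sqrt{\kappa L}\,d^2$ gives $\sqrt\kappa\,d^2m^{1/4}T^{1/4}$.

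\textbf{Confidence event.} First we establish the event $\mathcal E$ on which two things hold for all blocks. The first is $\|\thetab_\tau-\thetab^*\|_{\Hb_\tau}\le\beta$. The second is that every within-block OMD iterate $\thetab_{\tau,i}$ lies in $\Ccal_\tau^{(2\beta)}$. For this we adapt the OMD confidence bound of \citet{zhang2025generalized} to the features $\bb_{\tau,i}$, which satisfy $\|\bb_{\tau,i}\|\le R$. This is where the stated $\beta$ and $\lambda=\Ocal(d\nu^3R^3C_\theta)$ come from. The intra-block containment then follows from the triangle inequality and $\Hb_{\tau,i}\succeq\Hb_\tau$. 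The complement of $\mathcal E$ contributes $O(RC_\theta U_\mu)$ to the regret.

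\textbf{Selected-block decomposition.} On $\mathcal E$, Proposition~\ref{prop:exp_bound} together with a second-order Taylor expansion of the block sum gives $|\widetilde r(\xb)-\hat r_\tau(\xb)|\le\Gamma^{\rm safe}_\tau(\xb)$. Summing Proposition~\ref{prop:exp_bound} over $i$ also gives $\widetilde r(\xb_\tau)\ge\hat r_\tau(\xb_\tau)-\Gamma^{\rm pre}_\tau(\xb_\tau)$. Optimality of $\xb_\tau$ then yields
\[
\widetilde r(\widetilde\xb)-\widetilde r(\xb_\tau)\le \max\{\Gamma^{\rm SCB}_\tau,\Gamma^{\rm safe}_\tau\}(\xb_\tau)+\Gamma^{\rm pre}_\tau(\xb_\tau)= 2\overline\Gamma_\tau(\xb_\tau)+\xi_\tau(\xb_\tau).
\]
It remains to bound the two sums $\sum_\tau 2\overline\Gamma_\tau(\xb_\tau)$ and $\sum_\tau\xi_\tau(\xb_\tau)$.

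\textbf{Bounding $\sum_\tau\overline\Gamma_\tau$.} The key comparison is $\underline{\Hb}_{\tau,i}(\xb_\tau)\preceq \Hb_{\tau,i}$ up to constants, and conversely $\Hb_{\tau,i}\preceq c\,\underline\Hb_{\tau,i}$. Both follow because $\thetab^{-}_{\tau,j}$ minimizes $\dot\mu$ over a set containing $\thetab_{\tau,j+1}$, and self-concordance controls the ratio of curvatures across $\Ccal^{(2\beta)}_\tau$. On the one hand, $\|\bb_{\tau,i}\|_{\underline\Hb^{-1}}\ge\|\bb_{\tau,i}\|_{\Hb_{\tau,i}^{-1}}$. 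For the upper bound, however, we need $\dot\mu(\bb^\top\thetab^+)\|\bb\|_{\underline\Hb^{-1}}\lesssim \sqrt{\dot\mu(\bb^\top\thetab_{\tau,i+1})}\,\|\bb\|_{\Hb_{\tau,i}^{-1}}\cdot\sqrt{\dot\mu(\bb^\top\thetab^+)}$ plus a correction. The $\sqrt{\dot\mu}$-weighted elliptical potential then gives $\beta\sqrt{T d\log T}\cdot\sqrt{U_\mu}=\widetilde\Ocal(d\sqrt T)$, since $\beta=\widetilde\Ocal(\sqrt d)$. The quadratic terms $\beta^2\|\bb\|^2_{\underline\Hb^{-1}}$ are bounded via $\|\bb\|^2_{\underline\Hb^{-1}}\le\kappa\|\bb\|^2_{\mathrm{(unweighted)}^{-1}}$ and the standard potential, giving $\widetilde\Ocal(\kappa d^2)$.

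\textbf{Bounding $\sum_\tau\xi_\tau$ (main obstacle).} We write $\xi_\tau\le[\Gamma^{\rm safe}_\tau-\overline\Gamma_\tau]_++[\Gamma^{\rm pre}_\tau-\overline\Gamma_\tau]_+$. Both brackets are controlled by differences of the form $\|\bb\|_{\Hb_\tau^{-1}}-\|\bb\|_{\underline\Hb_{\tau,i}^{-1}}$, after using $\|\gb_\tau\|_{\Hb^{-1}_\tau}\le\sum_i\dot\mu\|\bb_i\|_{\Hb_\tau^{-1}}$ and handling the curvature mismatch between $\thetab_\tau$ and $\thetab^{\pm}$ by self-concordance (the gap is $O(\nu\beta\|\bb\|_{\Hb_\tau^{-1}})$, quadratic and hence absorbed).

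These differences we telescope exactly as in Lemma~\ref{lem:jin semi} and Eq.~\eqref{eq:difference}. Since $\underline\Hb_{\tau,i}\preceq\Hb_{\tau+1}$ up to constants, the sum over a block is at most $L\sum_j(\lambda_{\tau,j}^{-1/2}-\lambda_{\tau+1,j}^{-1/2})$. Taken alone this telescopes to $Ld/\sqrt{\lambda_{\min}}$, where $\lambda_{\min}$ is the minimum eigenvalue at the end of warm-up.

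The warm-up condition $\lambda_{\min}(\Vb_\tau)\ge L/d$, combined with $\Hb\succeq\kappa^{-1}\Vb$, gives $\lambda_{\min}(\Hb_\tau)\ge L/(\kappa d)$. The telescoped sum is therefore $\beta R\,Ld\sqrt{\kappa d/L}=\widetilde\Ocal(\sqrt{\kappa L}\,d^2)$. The quadratic parts contribute $\kappa d^2$.

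The delicate point is that the telescoping in Lemma~\ref{lem:jin semi} needs $\underline\Hb_{\tau,i}$ to sit between consecutive block matrices, which requires the curvature-certificate comparison above. This is where I expect the most work.
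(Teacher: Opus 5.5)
Your skeleton is the same as the paper's. It uses the same regret split and the same confidence event with within-block iterates in $\Ccal_\tau^{(2\beta)}$. It uses the same selected-block inequality $\tilde r(\tilde\xb)-\tilde r(\xb_\tau)\le 2\overline\Gamma_\tau(\xb_\tau)+\xi_\tau(\xb_\tau)$. It bounds $\sum_\tau\xi_\tau$ the same way, through $[\gamma^{\mathrm{pre}}_{\tau,i}-\overline\gamma_{\tau,i}]_+$, Lemma~\ref{lem:jin semi}, and $\lambda_{\min}(\Hb_\tau)\ge L/(\kappa d)$. The $\xi_\tau$ part is easier than you expect:
\begin{itemize}
\item $\Hb_\tau\preceq\underline\Hb_{\tau,i}(\xb_\tau)\preceq\Hb_{\tau+1}$ holds exactly, not up to constants, because $\thetab_{\tau,i+1}\in\Ccal_\tau^{(2\beta)}$;
\item $\dot\mu(\bb^\top\thetab^{+,\xb_\tau}_{\tau,i})\ge\dot\mu(\bb^\top\thetab_\tau)$, so no curvature mismatch needs absorbing.
\end{itemize}

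The genuine gap is in your bound on $\sum_\tau\overline\Gamma_\tau(\xb_\tau)$. Your first-order step needs $\|\bb_{\tau,i}\|_{\underline\Hb_{\tau,i}^{-1}}\lesssim\|\bb_{\tau,i}\|_{\Hb_{\tau,i}^{-1}}$, that is, $\Hb_{\tau,i}\preceq c\,\underline\Hb_{\tau,i}(\xb_\tau)$ with an absolute constant $c$. Self-concordance does not give this. It only gives
\[
\dot\mu(\bb_j^\top\thetab_{\tau,j+1})\le\exp\bigl(4\nu\beta\|\bb_j\|_{\Hb_\tau^{-1}}\bigr)\,\dot\mu(\bb_j^\top\thetab^{-,\xb_\tau}_{\tau,j}),
\]
and the exponent is not bounded by an absolute constant. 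Unconditionally the curvature ratio is only bounded by $U_\mu\kappa$. That would multiply the leading $d\sqrt T$ term by $\sqrt\kappa$, which is exactly the dependence the theorem avoids. After warm-up the exponent is $O(1)$ only once $L\gtrsim\nu^2\beta^2R^2\kappa d$, a condition the theorem does not impose.

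The missing idea is to compare with a Gram matrix built entirely from lower certificates:
\[
\Lb_{\tau,i}:=\lambda\Ib_d+\sum_{\tau'<\tau}\sum_{i'=m+1}^{m+L}\dot\mu(\bb_{\tau',i'}^\top\thetab^{-,\xb_{\tau'}}_{\tau',i'})\bb_{\tau',i'}\bb_{\tau',i'}^\top+\sum_{i'=m+1}^{i-1}\dot\mu(\bb_{\tau,i'}^\top\thetab^{-,\xb_\tau}_{\tau,i'})\bb_{\tau,i'}\bb_{\tau,i'}^\top .
\]
Every past OMD weight dominates its lower certificate, so $\Lb_{\tau,i}\preceq\underline\Hb_{\tau,i}(\xb_\tau)$ holds exactly; only the easy direction is needed. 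The elliptical potential then applies directly to the vectors $\sqrt{\dot\mu(\bb^\top\thetab^-)}\,\bb$.

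The passage from $\thetab^+$ to $\thetab^-$ is made additively, $\dot\mu(\bb^\top\thetab^+)\le\dot\mu(\bb^\top\thetab^-)+4\nu U_\mu\beta\|\bb\|_{\Hb_\tau^{-1}}$. This creates the mixed term $\beta^2\|\bb_{\tau,i}\|_{\Hb_\tau^{-1}}\|\bb_{\tau,i}\|_{\underline\Hb_{\tau,i}^{-1}}$, which your ``plus a correction'' leaves unbounded. Your $\kappa\|\cdot\|^2$ potential argument does not cover it, because one factor uses the pre-block matrix. The paper bounds it as follows:
\begin{itemize}
\item pass to the unweighted matrices $\Wb_\tau,\Wb_{\tau,i}$, at the cost of one factor of $\kappa$;
\item split $\|\bb\|_{\Wb_\tau^{-1}}\|\bb\|_{\Wb_{\tau,i}^{-1}}=\|\bb\|_{\Wb_{\tau,i}^{-1}}\bigl(\|\bb\|_{\Wb_\tau^{-1}}-\|\bb\|_{\Wb_{\tau,i}^{-1}}\bigr)+\|\bb\|_{\Wb_{\tau,i}^{-1}}^2$;
\item apply Cauchy--Schwarz, $(a-b)^2\le a^2-b^2$, and the eigenvalue telescoping of Lemma~\ref{lem:jin semi}.
\end{itemize}
This term is a second source of the $\sqrt{\kappa L}\,d^2+\kappa d^2$ contribution, in addition to the one you attribute to $\sum_\tau\xi_\tau$ alone.
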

\label{appx:proof-warmstarted-certified-scb}

\begin{remark}[Number of warm-up rounds]
   It is natural that the warm-up block $\xb^{\rm warm}$ increases the eigenvalue of the gram matrix $\Vb_\tau$ by $\alpha \frac{L}{d}$, for some $\alpha>0$, per one block cycle. Then the stopping rule $\lambda_{\min} (\Vb_\tau)\geq L/d$ is satisfied after at most $\lceil 1/\alpha\rceil$ warm-up cycles. In particular, if $\alpha=\Omega(1)$, then $\tau_{\rm warm}=O(1)$. Then, the regret imposed by the warm-up stage has the order of $\Ocal((m+L)\tau_{\rm warm})=\widetilde{\Ocal}(\sqrt{mT})$.      
\end{remark}
\begin{proof}[Proof of Theorem~\ref{thm:regret main algorithm}]
Recall that, if we denote by $r_t = \EE[y_t \mid \Fcal_{t-1}, \xb_t]$ the expected reward collected at round $t$ when Algorithm~\ref{alg:main algorithm} plays $\xb_\tau$, then the cumulative regret of Algorithm~\ref{alg:main algorithm} can be decomposed as
\begin{align*}
    \regret_T
    &= \opt(T) - \sum_{t=1}^T r_t \\
    &= \opt(T) - \sum_{\tau=1}^{T/(m+L)}
    \left(
        \sum_{i=1}^m r(\xb_{\tau,i})
        + \sum_{i=m+1}^{m+L} r(\xb_{\tau,i})
    \right) \\
    &= \opt(T)
    - \sum_{\tau=1}^{T/(m+L)} \sum_{i=1}^m r(\xb_{\tau,i})
    - \sum_{\tau=1}^{T/(m+L)} \tilde{r}(\xb_\tau) \, , \numberthis \label{eq:glm regret eq}
\end{align*}
where the last equality follows from the definition of $\tilde{r}(\xb)$ in Eq.~\eqref{eq:proxy reward}.

By the boundedness of the reward for generalized linear model, we have for all $\tau \in [T/(m+L)]$,
\begin{align} \label{eq:glm regret reward upper bound}
    \bigl| r(\xb_{\tau,i}) \bigr| 
    &= \bigl| \mu(\langle \xb_{\tau,i}, \Ab_{\tau,i-1} \thetab^* \rangle) \bigr| \notag\\
    &\le \max \{|\mu (\|\Ab_{\tau,i-1}\thetab^*\|_2 \|\xb_{\tau,i}\|_2)|,~~ |\mu (-\|\Ab_{\tau,i-1}\thetab^*\|_2 \|\xb_{\tau,i}\|_2)|\} \notag,\\
    &\le \max \{|\mu(\|\Ab_{\tau,i-1}\|_* \|\thetab^*\|_2)|, ~~\|\mu(-\|\Ab_{\tau,i-1}\|_* \|\thetab^*\|_2)|\} \notag \\
    &\le \max\{|\mu((m+1)^{\gamma^+} C_\theta)|, ~~|\mu(-(m+1)^{\gamma^+} C_\theta)| \}=:R_\mu \, .
\end{align}
We denote this uniform upper bound on the reward by $R_\mu := \max\{|\mu((m+1)^{\gamma^+} C_\theta)|, ~~|\mu(-(m+1)^{\gamma^+} C_\theta)| \}$.
Then, by Proposition~\ref{prop:proxy_upper}, $\opt(T)$ admits the upper bound
\begin{align} 
    \opt(T)
    &\le
    \frac{T}{m+L}\,\tilde{r}(\tilde{\xb})
    + \frac{m R_\mu}{m+L}\, T
    + LR_\mu
    \label{eq:glm regret opt bound}
\end{align}
where $\tilde{\xb}$ denotes the optimal block of the proxy reward $\tilde{r}(\xb)$.  

Applying Eq.~\eqref{eq:glm regret opt bound} into Eq.~\eqref{eq:glm regret eq} we have
\begin{align*}
    \regret_T 
    & \le 
    \sum_{\tau=1}^{T/(m+L)} \tilde{r}(\tilde{\xb}) + \frac{m R_\mu}{m+L}\, T
    + L R_\mu
    - \sum_{\tau=1}^{T/(m+L)} \sum_{i=1}^m r(\xb_{\tau,i})
    - \sum_{\tau=1}^{T/(m+L)} \tilde{r}(\xb_\tau) 
    \\
    & \le \sum_{\tau=1}^{T/(m+L)} \tilde{r}(\tilde{\xb}) - \sum_{\tau=1}^{T/(m+L)} \tilde{r}(\xb_\tau) 
    + \frac{2 m R_\mu}{m+L}\, T
    + L R_\mu\, 
    \\
    & \le \sum_{\tau=1}^{T/(m+L)} \{\tilde{r}(\tilde{\mathbf{x}}) - \tilde{r}(\mathbf{x}_\tau)\}
        + \frac{2 mR_\mu}{m+L}\, T
    + LR_\mu \, ,
    \\
    &\le R_\mu\tau_{\rm warm}(m+L) + \sum_{\tau=\tau_{\rm warm}+1}^{T/(m+L)}
\sum_{i=m+1}^{m+L}
\{\tilde{r}(\tilde{\mathbf{x}}) - \tilde{r}(\mathbf{x}_\tau)\}
        + \frac{2 mR_\mu}{m+L}\, T
    + LR_\mu \, , \numberthis \label{eq:glm regretupper}
\end{align*}
where in the second inequality we use the upper bound of glm reward in Eq.~\eqref{eq:glm regret reward upper bound}, and the last equality directly follows from the maximum regret during initial exploration. 

By Lemma~\ref{lem:zhang_confi}, if we define $\Omega$ be the event that $\|\thetab^* -{\thetab}_{\tau,i}\|_{\Hb_{\tau,i}}\le \beta$ holds for all $\tau \in [{T \over m+L}]$ and $i \in [m+1, m+L+1]$, where $\beta:=\mathcal{O}\left(\nu R C_{\thetab} \sqrt{d\big(\nu R C_{\thetab}^2 + \ln{R T}\big)}\right)$, we have that $\mathbb{P}(\Omega)\ge 1-{1 \over T}$. On \(\Omega\), Lemma~\ref{lem:app-selected-block-decomposition} implies
\[
    \sum_{\tau={\tau_{\rm warm}+1}}^{T/(m+L)}
    \left[\tilde r(\tilde\xb)-\tilde r(\xb_\tau)\right]
    \le
    2\sum_{\tau={\tau_{\rm warm}+1}}^{T/(m+L)}\sum_{i=m+1}^{m+L}
   \overline\gamma_{\tau,i}(\xb_\tau)
    +
    \sum_{\tau={\tau_{\rm warm}+1}}^{T/(m+L)}\xi_\tau(\xb_\tau).
\]
Applying Lemma~\ref{lem:app-cumulative-shrunken-widths} and Lemma~\ref{lem:app-cumulative-excess} gives
\[
    \sum_{\tau={\tau_{\rm warm}+1}}^{T/(m+L)}
    \left[\tilde r(\tilde\xb)-\tilde r(\xb_\tau)\right]
    =
    \widetilde\Ocal\!\left(
        d\sqrt T
        +
        \sqrt\kappa\,d^2\sqrt L
        +
        \kappa d^2
    \right).
\]
Therefore, substituting this bound into Eq.~\eqref{eq:glm regretupper}, we have 
\begin{align*}
    \regret_T
    &=  (m+L)\tau_{\rm warm}+ \widetilde\Ocal\!\left(
       d\sqrt T
        +
        \sqrt\kappa\,d^2\sqrt L
        +
        \kappa d^2
    \right) +2R_\mu T  \mathbb{P}(\bar{\Omega}) + \frac{2 mR_\mu}{m+L}\, T
    + LR_\mu \\
    &=\widetilde\Ocal\!\left((m+L)\tau_{\rm warm}+ 
       d\sqrt T
        +
        \sqrt\kappa\,d^2\sqrt L
        +
        \kappa d^2 + \frac{2 mR_\mu}{m+L}\, T
    + LR_\mu 
    \right),
\end{align*}
since $\mathbb{P}(\bar{\Omega}) \le {1 \over T}$. If \(L=\lfloor\sqrt{mT}\rfloor-m\) and \(\tau_{\rm warm}=\widetilde\Ocal(1)\), then the regret is bounded by
\[
        \regret_T
    =
    \widetilde\Ocal\!\left(
        \sqrt{mT}
        +
        d\sqrt T
        +
        \sqrt\kappa\,d^2m^{1/4}T^{1/4}
        +
        \kappa d^2
    \right).
\]
\end{proof}

\subsection{Technical lemmas for Theorem~\ref{thm:regret main algorithm}}
\begin{lemma}[Estimation error decomposition]
\label{lem:app-selected-block-decomposition} Suppose that $\|\thetab^* -{\thetab}_{\tau,i}\|_{\Hb_{\tau,i}}\le \beta$ holds for all $\tau \in [{T \over m+L}]$ and $i \in [m+1, m+L+1]$. For every $\tau\ge \tau_{\rm warm}+1$ and candidate block $\xb\in\Xcal^{m+L}$,
\begin{equation*}
    |\tilde r(\xb)-\hat r_\tau(\xb)|
    \le
    \Gamma_\tau^{\mathrm{safe}}(\xb)
    \le
    \sum_{i=m+1}^{m+L}\gamma_{\tau,i}^{\mathrm{pre}}({\mathbf{x}}_{i}).
\end{equation*}
Moreover, if $\tilde\xb\in\argmax_{\xb\in\Xcal^{m+L}}\tilde r(\xb)$ is a proxy-optimal block, then
\begin{equation*}
    \tilde r(\tilde\xb)-\tilde r(\xb_\tau)
    \le
    2\sum_{i=m+1}^{m+L}
   \overline\gamma_{\tau,i}(\xb_\tau)+\xi_\tau(\xb_\tau),
\end{equation*}
where \(\xi_\tau(\xb):=[\Gamma_\tau^{\mathrm{safe}}(\xb)-\Gamma_\tau^{\mathrm{SCB}}(\xb)]_+\). 
\end{lemma}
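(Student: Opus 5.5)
The proof has three parts: a block-level Taylor expansion for the first inequality, a spectral-versus-sum comparison for the second, and a standard optimism argument for the regret decomposition.

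\textbf{First inequality.} Fix a candidate block $\xb$ and write $\bb_i:=\bb_i(\xb)$ and $\Delta:=\thetab^*-\thetab_\tau$. On the assumed event, $\|\Delta\|_{\Hb_\tau}\le\beta$, since $\Hb_{\tau,m+1}=\Hb_\tau$ and $\thetab_{\tau,m+1}=\thetab_\tau$. For each $i$ I would expand $\mu(\bb_i^\top\thetab^*)$ around $\bb_i^\top\thetab_\tau$ exactly as in the proof of Proposition~\ref{prop:exp_bound}, but keep the signed first-order term instead of taking absolute values immediately. This gives
\[
\mu(\bb_i^\top\thetab^*)-\mu(\bb_i^\top\thetab_\tau)=\dot\mu(\bb_i^\top\thetab_\tau)\,\bb_i^\top\Delta+\rho_i,\qquad |\rho_i|\le \tfrac{\nu U_\mu}{2}(\bb_i^\top\Delta)^2 .
\]
Summing over $i=m+1,\dots,m+L$, the linear parts combine to $\gb_\tau(\xb)^\top\Delta$. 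By Cauchy--Schwarz in the $\Hb_\tau$ geometry, this is at most $\beta\|\gb_\tau(\xb)\|_{\Hb_\tau^{-1}}$. The quadratic parts sum to
\[
\Delta^\top\Qb(\xb)\Delta=(\Hb_\tau^{1/2}\Delta)^\top\,\Hb_\tau^{-1/2}\Qb(\xb)\Hb_\tau^{-1/2}\,(\Hb_\tau^{1/2}\Delta)\le \beta^2\big\|\Hb_\tau^{-1/2}\Qb(\xb)\Hb_\tau^{-1/2}\big\|_* .
\]
Together these give $|\tilde r(\xb)-\hat r_\tau(\xb)|\le\Gamma^{\mathrm{safe}}_\tau(\xb)$.

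\textbf{Second inequality.} I would compare $\Gamma^{\mathrm{safe}}_\tau$ with $\sum_i\gamma^{\mathrm{pre}}_{\tau,i}$ term by term. For the linear part, the triangle inequality gives $\|\gb_\tau\|_{\Hb_\tau^{-1}}\le\sum_i\dot\mu(\bb_i^\top\thetab_\tau)\|\bb_i\|_{\Hb_\tau^{-1}}$, using $\dot\mu\ge0$. For the quadratic part, the spectral norm of a PSD sum is at most its trace, so $\|\Hb_\tau^{-1/2}\Qb\Hb_\tau^{-1/2}\|_*\le\sum_i\|\bb_i\|^2_{\Hb_\tau^{-1}}$. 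Adding the two bounds yields $\Gamma^{\mathrm{safe}}_\tau(\xb)\le\sum_i\gamma^{\mathrm{pre}}_{\tau,i}(\xb)$. The argument in the statement, written $\gamma^{\mathrm{pre}}_{\tau,i}(\mathbf{x}_i)$, should be read as $\gamma^{\mathrm{pre}}_{\tau,i}(\xb)$.

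\textbf{Decomposition.} Write $S(\xb):=\hat r_\tau(\xb)+\max\{\Gamma^{\mathrm{SCB}}_\tau,\Gamma^{\mathrm{safe}}_\tau\}(\xb)$ for the selection score. The argument is a chain of four inequalities.
\begin{itemize}
\item Optimism at $\tilde\xb$: by the first inequality, $\tilde r(\tilde\xb)\le\hat r_\tau(\tilde\xb)+\Gamma^{\mathrm{safe}}_\tau(\tilde\xb)\le S(\tilde\xb)$.
\item Optimality of the chosen block: $S(\tilde\xb)\le S(\xb_\tau)$ because $\tau>\tau_{\rm warm}$, so $\xb_\tau$ maximizes $S$.
\item Splitting the max: using $\max\{a,b\}=a+[b-a]_+$, we get $S(\xb_\tau)=\hat r_\tau(\xb_\tau)+\Gamma^{\mathrm{SCB}}_\tau(\xb_\tau)+\xi_\tau(\xb_\tau)$.
\item Lower bound on $\hat r_\tau(\xb_\tau)$: summing Proposition~\ref{prop:exp_bound} over $i$ gives $\hat r_\tau(\xb_\tau)\le\tilde r(\xb_\tau)+\Gamma^{\mathrm{pre}}_\tau(\xb_\tau)$.
\end{itemize}
Substituting $\Gamma^{\mathrm{SCB}}_\tau=-\Gamma^{\mathrm{pre}}_\tau+2\overline\Gamma_\tau$, the pre-block widths cancel. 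What remains is $\tilde r(\tilde\xb)-\tilde r(\xb_\tau)\le2\overline\Gamma_\tau(\xb_\tau)+\xi_\tau(\xb_\tau)$, which is the claim.

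\textbf{Main obstacle.} Each step here is elementary. The only delicate point is the first inequality: the first-order terms must be aggregated before applying Cauchy--Schwarz. Bounding them individually would only give the weaker quantity $\sum_i\gamma^{\mathrm{pre}}_{\tau,i}$, not $\Gamma^{\mathrm{safe}}_\tau$. The self-concordance remainder bound also requires $|\ddot\mu|\le\nu U_\mu$ along the whole segment between $\bb_i^\top\thetab_\tau$ and $\bb_i^\top\thetab^*$. I would take this from Assumption~\ref{assm:self concordance} together with the bound $\dot\mu\le U_\mu$ on the feasible range, exactly as in Proposition~\ref{prop:exp_bound}.
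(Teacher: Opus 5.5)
Your proposal is correct and follows the paper's proof step for step: the same aggregated Taylor expansion with Cauchy--Schwarz in the $\Hb_\tau$ geometry, the same spectral bound via $\Hb_\tau^{-1/2}\Qb(\xb)\Hb_\tau^{-1/2}$, the same triangle-inequality and trace comparison, and the same optimism-plus-selection chain. The only cosmetic difference is in the last step: you bound $\hat r_\tau(\xb_\tau)-\tilde r(\xb_\tau)\le\Gamma_\tau^{\mathrm{pre}}(\xb_\tau)$ by summing Proposition~\ref{prop:exp_bound}, whereas the paper uses $\tilde r(\xb_\tau)\ge\hat r_\tau(\xb_\tau)-\Gamma_\tau^{\mathrm{safe}}(\xb_\tau)$ together with $\Gamma_\tau^{\mathrm{safe}}(\xb_\tau)\le\Gamma_\tau^{\mathrm{pre}}(\xb_\tau)$, which gives the same inequality.
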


\begin{proof}[Proof of Lemma~\ref{lem:app-selected-block-decomposition}]
Fix $\tau$ and $\xb_\tau$. For brevity, write $\bb_i=\bb_i(\xb)$, $z_i=\bb_i^\top\thetab_\tau$, and $h_i=\bb_i^\top(\thetab^*-\thetab_\tau)$ for $i=m+1,\ldots,m+L$. By Taylor's theorem with integral remainder,
\begin{align}
    \tilde r(\xb)-\hat r_\tau(\xb)
    &=
    \sum_{i=m+1}^{m+L}
    \dot\mu(\bb_i^\top\thetab_\tau)\bb_i^\top(\thetab^*-\thetab_\tau)
    +
    \sum_{i=m+1}^{m+L}
    h_i^2\int_0^1(1-s)\ddot\mu(\bb_i^\top\thetab_\tau+s h_i)\,ds .
    \label{eq:app-block-cert-taylor}
\end{align}
Then, the first term in Eq.~\eqref{eq:app-block-cert-taylor} is $\gb_\tau(\xb)^\top(\thetab^*-\thetab_\tau)$ , so the confidence event gives
\begin{equation}
\label{eq:app-block-cert-first}
    |\gb_\tau(\xb)^\top(\thetab^*-\thetab_\tau)|
    \le
    \|\gb_\tau(\xb)\|_{\Hb_\tau^{-1}}
    \|\thetab^*-\thetab_\tau\|_{\Hb_\tau}
    \le
    \beta\|\gb_\tau(\xb)\|_{\Hb_\tau^{-1}} .
\end{equation}
For the second term, Assumptions~\ref{assm:bounded link} and~\ref{assm:self concordance} imply $|\ddot\mu(z)|\le \nu U_\mu$ on the feasible score range. Hence
\begin{align}
    \left|
    \sum_{i=m+1}^{m+L}
    h_i^2\int_0^1(1-s)\ddot\mu(\bb_i^\top\thetab_\tau+s h_i)\,ds
    \right|
    &\le
    {\nu U_\mu\over 2}
    \sum_{i=m+1}^{m+L}h_i^2 
    \notag\\
    &=
    {\nu U_\mu\over 2}
    (\thetab^*-\thetab_\tau)^\top\Qb(\xb)(\thetab^*-\thetab_\tau)
    \notag\\
    &\le
    {\nu U_\mu\over 2}\beta^2
    \left\|\Hb_\tau^{-1/2}\Qb(\xb)\Hb_\tau^{-1/2}\right\|_* .
    \label{eq:app-block-cert-second}
\end{align}
Applying Eq.~\eqref{eq:app-block-cert-first} and Eq.~\eqref{eq:app-block-cert-second} to Eq.~\eqref{eq:app-block-cert-taylor} proves $|\tilde r(\xb)-\hat r_\tau(\xb)|
    \le
    \Gamma_\tau^{\mathrm{safe}}(\xb)$.

From the triangle inequality, we have that
\begin{equation}
\label{eq:app-block-dom-first}
    \left\|
    \sum_{i=m+1}^{m+L}
    \dot\mu(\bb_i^\top\thetab_\tau)\bb_i
    \right\|_{\Hb_\tau^{-1}}
    \le
    \sum_{i=m+1}^{m+L}
    \dot\mu(\bb_i^\top\thetab_\tau)
    \|\bb_i\|_{\Hb_\tau^{-1}} .
\end{equation}
Also, since $\Hb_\tau^{-1/2}\Qb(\xb)\Hb_\tau^{-1/2}$ is positive semidefinite,
\begin{align}
    \left\|
    \Hb_\tau^{-1/2}\Qb(\xb)\Hb_\tau^{-1/2}
    \right\|_* \le
    \operatorname{tr}\left(\Hb_\tau^{-1}\Qb(\xb)\right)
    =
    \sum_{i=m+1}^{m+L}
    \|\bb_i\|_{\Hb_\tau^{-1}}^2 .
    \label{eq:app-block-dom-second}
\end{align}
Eq.~\eqref{eq:app-block-dom-first} and Eq.~\eqref{eq:app-block-dom-second} show $\Gamma_\tau^{\mathrm{safe}}(\xb)\le\sum_{i=m+1}^{m+L}\gamma_{\tau,i}^{\mathrm{pre}}({\mathbf{x}}_{i})$.

We now prove the second part of the lemma. The prediction certificate implies $\tilde r(\xb)\le \hat r_\tau(\xb)+\Gamma_\tau^{\mathrm{safe}}(\xb)\le \hat r_\tau({\xb})
   +
   \max\left\{
        \Gamma_\tau^{\mathrm{SCB}}({\xb}),
        \Gamma_\tau^{\mathrm{safe}}({\xb})
   \right\}$ for every $\xb$. Thus, by the optimality of $\xb_\tau$,
\begin{equation}
\label{eq:app-optimality-cscb}
    \tilde r(\tilde\xb)
    \le
    \hat r_\tau(\tilde{\xb})
   +
   \max\left\{
        \Gamma_\tau^{\mathrm{SCB}}(\tilde{\xb}),
        \Gamma_\tau^{\mathrm{safe}}(\tilde{\xb})
   \right\}
    \le \hat r_\tau(\xb_\tau)
   +
   \max\left\{
        \Gamma_\tau^{\mathrm{SCB}}(\xb_\tau),
        \Gamma_\tau^{\mathrm{safe}}(\xb_\tau)
   \right\}.
\end{equation}
The same certificate also gives $\tilde r(\xb_\tau)\ge \hat r_\tau(\xb_\tau)-\Gamma_\tau^{\mathrm{safe}}(\xb_\tau)$. Combining this lower bound with Eq.~\eqref{eq:app-optimality-cscb} yields
\begin{align}
    \tilde r(\tilde\xb)-\tilde r(\xb_\tau)
    &\le
    \Gamma_\tau^{\rm SCB}(\xb_\tau)+\xi_\tau(\xb_\tau)+\Gamma_\tau^{\mathrm{safe}}(\xb_\tau)
    \notag\\
    &=
    -\sum_{i=m+1}^{m+L}\gamma_{\tau,i}^{\mathrm{pre}}({\mathbf{x}}_{\tau})
    +2\sum_{i=m+1}^{m+L}
   \overline\gamma_{\tau,i}(\xb_\tau)
    +\xi_\tau(\xb_\tau)
    +\Gamma_\tau^{\mathrm{safe}}(\xb_\tau)
    \notag\\
    &\le
    2\sum_{i=m+1}^{m+L}
   \overline\gamma_{\tau,i}(\xb_\tau)+\xi_\tau(\xb_\tau),
    \label{eq:app-selected-decomp-proof}
\end{align}
where the last inequality uses $\Gamma_\tau^{\mathrm{safe}}(\xb_\tau)\le\sum_{i=m+1}^{m+L}\gamma_{\tau,i}^{\mathrm{pre}}({\mathbf{x}}_{\tau})$.
\end{proof}

The following lemma shows that the cumulative shrunken widths $ \overline\gamma_{\tau,i}(\xb_\tau)$ is bounded by $\widetilde{\Ocal}(d\sqrt{T})$. 

\begin{lemma}[Cumulative shrunken widths]
\label{lem:app-cumulative-shrunken-widths}
On \(\Omega\), for the certified-SCB cycles selected by Algorithm~\ref{alg:main algorithm},
\[
    \sum_{\tau={\tau_{\rm warm}+1}}^{T/(m+L)}\sum_{i=m+1}^{m+L}
   \overline\gamma_{\tau,i}(\xb_\tau)
    =
    \widetilde\Ocal\!\left(
        d\sqrt T
        +
        \sqrt{\kappa L}\,d^2
        +
        \kappa d^2
    \right).
\]
\end{lemma}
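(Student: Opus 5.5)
Write $\underline{\Hb}_{\tau,i}:=\underline{\Hb}_{\tau,i}(\xb_\tau)$, $\bb_{\tau,i}:=\bb_i(\xb_\tau)$, and let $\thetab^{\pm}_{\tau,i}$ be the curvature certificates for the selected block. We split $\overline\gamma_{\tau,i}(\xb_\tau)$ into its first-order part $\beta\dot\mu(\bb_{\tau,i}^\top\thetab^{+}_{\tau,i})\|\bb_{\tau,i}\|_{\underline{\Hb}_{\tau,i}^{-1}}$ and its second-order part $\frac{\nu U_\mu}{2}\beta^2\|\bb_{\tau,i}\|^2_{\underline{\Hb}_{\tau,i}^{-1}}$, and bound each by comparison with the actual curvature-weighted Gram matrices $\Hb_{\tau,i}$ built by the OMD updates. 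On $\Omega$, both $\thetab^*$ and every within-block iterate $\thetab_{\tau,j+1}$ lie in $\Ccal^{(2\beta)}_\tau$, which is the content of Figure~\ref{fig:aux_theta}. We would establish this by a triangle inequality, using $\Hb_{\tau,j}\succeq\Hb_\tau$ and the bound $\|\thetab_{\tau,j}-\thetab^*\|_{\Hb_{\tau,j}}\le\beta$. It follows that $\dot\mu(\bb_{\tau,j}^\top\thetab^{-}_{\tau,j})\le\dot\mu(\bb_{\tau,j}^\top\thetab_{\tau,j+1})$. Hence $\underline{\Hb}_{\tau,i}\preceq\Hb_{\tau,i}$, and in particular $\|\bb_{\tau,i}\|_{\underline{\Hb}_{\tau,i}^{-1}}\ge\|\bb_{\tau,i}\|_{\Hb_{\tau,i}^{-1}}$. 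This is the wrong direction for an upper bound, so as in the linear proof we decompose
\[
\|\bb_{\tau,i}\|_{\underline{\Hb}_{\tau,i}^{-1}}=\|\bb_{\tau,i}\|_{\Hb_{\tau,i}^{-1}}+\Bigl(\|\bb_{\tau,i}\|_{\underline{\Hb}_{\tau,i}^{-1}}-\|\bb_{\tau,i}\|_{\Hb_{\tau,i}^{-1}}\Bigr).
\]

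For the main term, we use self-concordance. Since $\thetab^{+}_{\tau,i}$ and $\thetab_{\tau,i+1}$ both lie in $\Ccal^{(2\beta)}_\tau$, we have $|\bb_{\tau,i}^\top(\thetab^{+}_{\tau,i}-\thetab_{\tau,i+1})|\le4\beta\|\bb_{\tau,i}\|_{\Hb_\tau^{-1}}$. Therefore
\[
\dot\mu(\bb_{\tau,i}^\top\thetab^+_{\tau,i})\le\dot\mu(\bb_{\tau,i}^\top\thetab_{\tau,i+1})+4\nu U_\mu\beta\|\bb_{\tau,i}\|_{\Hb_\tau^{-1}}.
\]
The curvature-weighted piece is
\[
\beta\sqrt{\dot\mu(\bb_{\tau,i}^\top\thetab_{\tau,i+1})}\cdot\sqrt{\dot\mu(\bb_{\tau,i}^\top\thetab_{\tau,i+1})}\,\|\bb_{\tau,i}\|_{\Hb_{\tau,i}^{-1}}.
\]
By Cauchy--Schwarz it is at most $\beta\sqrt{U_\mu T}$ times the square root of the elliptical potential of the rescaled features $\sqrt{\dot\mu(\bb_{\tau,i}^\top\thetab_{\tau,i+1})}\,\bb_{\tau,i}$, whose rank-one updates are exactly the increments of $\Hb_{\tau,i}$. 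Lemma~\ref{lem:elliptical potential} then gives $\widetilde\Ocal(\beta\sqrt{dT})=\widetilde\Ocal(d\sqrt T)$, with no factor of $\kappa$. The residual products $\beta^2\|\bb\|_{\Hb_\tau^{-1}}\|\bb\|_{\Hb_{\tau,i}^{-1}}$ and the second-order part are quadratic widths. We bound them via $\Hb_{\tau,i}\succeq\kappa^{-1}\Vb_{\tau,i}$ (with $\Vb$ the unweighted Gram matrix), so their sum is at most $\kappa\beta^2\sum\|\bb\|^2_{\Vb^{-1}}$. The elliptical potential bounds this by $\widetilde\Ocal(\kappa d^2)$, plus a pre-block-versus-within-block mismatch that is handled by the next step.

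The main obstacle is the difference term $\|\bb_{\tau,i}\|_{\underline{\Hb}_{\tau,i}^{-1}}-\|\bb_{\tau,i}\|_{\Hb_{\tau,i}^{-1}}$, together with the analogous mismatches above. We bound it crudely by $\|\bb_{\tau,i}\|_{\Hb_\tau^{-1}}-\|\bb_{\tau,i}\|_{\Hb_{\tau+1}^{-1}}$, since $\underline{\Hb}_{\tau,i}\succeq\Hb_\tau$ and $\Hb_{\tau,i}\preceq\Hb_{\tau+1}$. We then invoke Lemma~\ref{lem:jin semi} in the curvature-weighted geometry: after normalizing $\bb$ and using $\Hb_\tau\succeq\kappa^{-1}\Vb_\tau$, the per-block sum over $L$ actions is at most $L\sum_j\bigl(\lambda_j(\Hb_\tau)^{-1/2}-\lambda_j(\Hb_{\tau+1})^{-1/2}\bigr)$, up to factors $\sqrt{\kappa}R$. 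Telescoping over $\tau$ yields $L d$ divided by the square root of the smallest eigenvalue at the first post-warm-up block. Here the warm-up condition $\lambda_{\min}(\Vb_\tau)\ge L/d$ is essential: it gives $\lambda_{\min}(\Hb_\tau)\gtrsim L/(\kappa d)$, which turns $L d/\sqrt{\lambda_{\min}}$ into $\sqrt{\kappa L}\,d^{3/2}$. Multiplying by $\beta=\widetilde\Ocal(\sqrt d)$ gives the claimed $\widetilde\Ocal(\sqrt{\kappa L}\,d^2)$. The quadratic mismatch terms telescope similarly, with $\lambda_j^{-1}$ in place of $\lambda_j^{-1/2}$, and contribute at most $\kappa d^2$. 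The delicate point we expect to check carefully is keeping the self-concordance correction from multiplying the $\sqrt T$ term by $\kappa$. This relies on the certificates lying in $\Ccal^{(2\beta)}_\tau$ so that the curvature ratio stays within a constant of $\dot\mu$ at the OMD iterates. That in turn holds because $\lambda$ is chosen large enough to make $\nu\beta\|\bb\|_{\Hb_\tau^{-1}}=\Ocal(1)$.
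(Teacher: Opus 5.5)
The step that fails is your accounting of the quadratic mismatch terms. Telescoping traces as you propose gives $\beta^2\sum_{\tau>\tau_{\rm warm}}\sum_{i}(\|\bb_{\tau,i}\|^2_{\Hb_\tau^{-1}}-\|\bb_{\tau,i}\|^2_{\Hb_{\tau+1}^{-1}})\le\beta^2LR^2\operatorname{tr}(\Hb_{\tau_{\rm warm}+1}^{-1})\le\beta^2R^2\kappa d^2$. Since $\beta^2=\widetilde\Theta(d)$, this is $\widetilde\Ocal(\kappa d^3)$, not $\widetilde\Ocal(\kappa d^2)$; the factor $\beta^2$ has been dropped. Using the $\lambda$ floor instead of the warm-up floor is worse: it gives $\beta^2LR^2d/\lambda=\widetilde\Ocal(dL)$, which is leading order. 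So as written you prove a weaker lemma.

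The repair has two parts. First, the pure second-order part of $\overline\gamma_{\tau,i}(\xb_\tau)$ needs no mismatch at all. The bound $\underline{\Hb}_{\tau,i}(\xb_\tau)\succeq\kappa^{-1}\Wb_{\tau,i}$ holds directly, where $\Wb_{\tau,i}$ is the unweighted sequential Gram matrix regularized by $\kappa\lambda$, so Lemma~\ref{lem:elliptical potential} gives $\widetilde\Ocal(\kappa d^2)$. Second, for the cross term the paper splits $\kappa\beta^2\|\bb\|_{\Wb_\tau^{-1}}\|\bb\|_{\Wb_{\tau,i}^{-1}}$ into $\kappa\beta^2\|\bb\|^2_{\Wb_{\tau,i}^{-1}}$ plus $\kappa\beta^2\|\bb\|_{\Wb_{\tau,i}^{-1}}$ times the first-order gap. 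It then applies Cauchy--Schwarz and telescopes the square of the Lemma~\ref{lem:jin semi} bound using $(a-b)^2\le a^2-b^2$. This yields order $\sqrt{d}\cdot\sqrt{L}\,Rd/\sqrt{\kappa\lambda}$, hence $\widetilde\Ocal(\sqrt{\kappa L}\,d^2)$ once $\lambda\asymp d$, with no use of warm-up. Inside your own framework, bounding the factor $\|\bb\|_{\Hb_{\tau,i}^{-1}}\le R/\sqrt{\lambda}$ and multiplying by your telescoped first-order sum also gives $\widetilde\Ocal(\sqrt{\kappa L}\,d^2)$.

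Your first-order treatment is correct, but it is a detour the paper avoids. You compare $\underline{\Hb}_{\tau,i}(\xb_\tau)$ from above with the OMD matrix $\Hb_{\tau,i}$. This creates the mismatch $\beta U_\mu(\|\bb_{\tau,i}\|_{\Hb_\tau^{-1}}-\|\bb_{\tau,i}\|_{\Hb_{\tau+1}^{-1}})$, which you pay with Lemma~\ref{lem:jin semi} and $\lambda_{\min}(\Hb_{\tau_{\rm warm}+1})\ge L/(\kappa d)$; this does give $\widetilde\Ocal(\sqrt{\kappa L}\,d^2)$.

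The paper compares from below instead. It converts $\dot\mu(\bb_{\tau,i}^\top\thetab^{+}_{\tau,i})$ into $\dot\mu(\bb_{\tau,i}^\top\thetab^{-}_{\tau,i})$, paying the same $4\beta\nu U_\mu\|\bb_{\tau,i}\|_{\Hb_\tau^{-1}}$ correction. It then uses the sequential matrix $\Lb_{\tau,i}$ built from $\dot\mu(\bb^\top\thetab^{-})\bb\bb^\top$ over all earlier actions. On $\Omega$ this satisfies $\Lb_{\tau,i}\preceq\underline{\Hb}_{\tau,i}(\xb_\tau)$, because each past OMD weight dominates the corresponding lower certificate. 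Then $\dot\mu(\bb^\top\thetab^{-})\|\bb\|_{\underline{\Hb}_{\tau,i}(\xb_\tau)^{-1}}\le\sqrt{U_\mu}\,\|\sqrt{\dot\mu(\bb^\top\thetab^{-})}\,\bb\|_{\Lb_{\tau,i}^{-1}}$ feeds straight into Lemma~\ref{lem:elliptical potential}. There is no first-order mismatch and no dependence on warm-up, which is precisely what the within-block shrinkage is for.

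Finally, drop your closing claim that $\lambda$ makes $\nu\beta\|\bb\|_{\Hb_\tau^{-1}}=\Ocal(1)$. Under the stated parameters the available bound $\nu\beta R/\sqrt{\lambda}$ grows like $\sqrt{\log T}$. The additive Lipschitz bound on $\dot\mu$ that you actually use does not need it.
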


\begin{proof}[Proof of Lemma~\ref{lem:app-cumulative-shrunken-widths}]
Recall that $\sum_{\tau={\tau_{\rm warm}+1}}^{T/(m+L)}\bar\Gamma_\tau(\xb_\tau)\le\sum_{\tau=1}^{T/(m+L)}\sum_{i=m+1}^{m+L}\overline\gamma_{\tau,i}(\xb_\tau)$, where 
\begin{equation*}
\overline\gamma_{\tau,i}(\xb_\tau) := \beta \dot{\mu}({\mathbf{b}}_{\tau,i}^\top \thetab_{\tau,i}^{+,\xb_\tau}) \|{\mathbf{b}}_{\tau,i}\|_{\minH_{\tau,i}(\xb_\tau)^{-1}} + \frac{\nu U_\mu}{2}{\beta^2}\|{\mathbf{b}}_{\tau,i}\|_{\minH_{\tau,i}(\xb_\tau)^{-1}}^2.
\end{equation*}

By the definition of $\thetab_{\tau,i}^+$ and $\thetab_{\tau,i}^{-,\xb_\tau}$, by the mean valued theorem, we have that 
\begin{align*}
\dot{\mu}(\mathbf{b}_{\tau,i}^\top \thetab_{\tau,i}^{+,\xb_\tau}) - \dot{\mu}(\mathbf{b}_{\tau,i}^\top \thetab_{\tau,i}^{-,\xb_\tau}) 
\le \nu U_\mu|\mathbf{b}_{\tau,i}^\top \thetab_{\tau,i}^{+,\xb_\tau} - \mathbf{b}_{\tau,i}^\top \thetab_{\tau,i}^{-,\xb_\tau}| \le \nu U_\mu \|\thetab_{\tau,i}^{+,\xb_\tau}-\thetab_{\tau,i}^{-,\xb_\tau}\|_{\mathbf{H}_\tau}\|\mathbf{b}_{\tau,i}\|_{\mathbf{H}_\tau^{-1}} \le {4 \beta \nu  U_\mu} \|\mathbf{b}_{\tau,i}\|_{\mathbf{H}_\tau^{-1}} ~\ .~~\quad
\end{align*}

Then, we have that 
\begin{align}
\sum_{\tau=1}^{T/(m+L)}&\sum_{i=m+1}^{m+L}\overline\gamma_{\tau,i}(\xb_\tau) \notag \\ & = \sum_{\tau=1}^{T/(m+L)}\sum_{i=m+1}^{m+L}\beta \dot{\mu}({\mathbf{b}}_{\tau,i}^\top \thetab_{\tau,i}^{+,\xb_\tau}) \|{\mathbf{b}}_{\tau,i}\|_{\minH_{\tau,i}(\xb_\tau)^{-1}} + \frac{\nu U_\mu}{2}{\beta^2}\|{\mathbf{{\mathbf{b}}_{\tau,i}}}\|_{\minH_{\tau,i}(\xb_\tau)^{-1}}^2 \notag \\
&  \le \beta\sum_{\tau=1}^{T/(m+L)}\sum_{i=m+1}^{m+L} (\dot{\mu}({\mathbf{b}}_{\tau,i}^\top \thetab_{\tau,i}^{-,\xb_\tau}) + 4\beta\nu U_\mu \|{\mathbf{b}}_{\tau,i}\|_{\mathbf{H}_\tau^{-1}}) \|{\mathbf{b}}_{\tau,i}\|_{\minH_{\tau,i}(\xb_\tau)^{-1}} + \frac{\nu U_\mu}{2}{\beta^2 }\sum_{\tau=1}^{T/(m+L)}\sum_{i=m+1}^{m+L}\|{\mathbf{b}}_{\tau,i}\|_{\minH_{\tau,i}(\xb_\tau)^{-1}}^2 \notag \\
& \le \beta\sum_{\tau=1}^{T/(m+L)}\sum_{i=m+1}^{m+L} \dot{\mu}({\mathbf{b}}_{\tau,i}^\top \thetab_{\tau,i}^{-,\xb_\tau}) \|{\mathbf{b}}_{\tau,i}\|_{\minH_{\tau,i}(\xb_\tau)^{-1}}
+ \frac{9\nu U_\mu}{2}{\beta^2}\sum_{\tau=1}^{T/(m+L)}\sum_{i=m+1}^{m+L}\|{\mathbf{b}}_{\tau,i}\|_{\mathrm{H}_\tau^{-1}}\|{{{\mathbf{b}}_{\tau,i}}}\|_{\minH_{\tau,i}(\xb_\tau)^{-1}} \label{eq:glm regret last term bound}
\end{align}
Define 
\begin{align*}
    \mathbf{L}_{\tau,i} &:= \sum_{\tau'=1}^{\tau-1}\sum_{i'=m+1}^{m+L}\dot{\mu}(\mathbf{b}_{\tau',i'}^\top \thetab_{\tau',i'}^{-,\xb_{\tau'}}) \mathbf{b}_{\tau',i'}\mathbf{b}_{\tau',i'}^\top + \sum_{i'=m+1}^{i-1}\dot{\mu}(\mathbf{b}_{\tau,i'}^\top \thetab_{\tau,i'}^{-,\xb_{\tau}}) \mathbf{b}_{\tau,i'}\mathbf{b}_{\tau,i'}^\top+\lambda \Ib_d\\
    \mathbf{W}_{\tau,i} &:= \sum_{\tau'=1}^{\tau-1}\sum_{i'=m+1}^{m+L}\mathbf{b}_{\tau',i'}\mathbf{b}_{\tau',i'}^\top + \sum_{i'=m+1}^{i-1}\mathbf{b}_{\tau,i'}\mathbf{b}_{\tau,i'}^\top+\kappa \lambda \mathbf{I}_d\\
    \mathbf{W}_\tau &:= \sum_{\tau'=1}^{\tau-1}\sum_{i=m+1}^{m+L}\mathbf{b}_{\tau',i}\mathbf{b}_{\tau',i}^\top + \kappa \lambda \mathbf{I}_d ~\ . 
\end{align*}

Then, by the definition of $\mintheta$ and $\kappa$, we have that $\mathbf{L}_{\tau,i} \preceq \minH_{\tau,i}(\xb_\tau)$, $\Hb_{\tau} \succeq {1 \over \kappa} \Wb_\tau$, and $\minH_{\tau,i}(\xb_\tau) \succeq {1 \over \kappa}\Wb_{\tau,i}$. Then, applying these facts to Eq. (\ref{eq:glm regret last term bound}), it follows that 

\begin{align}
\sum_{\tau=1}^{T/(m+L)}&\sum_{i=m+1}^{m+L}\overline\gamma_{\tau,i}(\xb_\tau) \notag  \\
& \le \beta\sum_{\tau=1}^{T/(m+L)}\sum_{i=m+1}^{m+L} \dot{\mu}({\mathbf{b}}_{\tau,i}^\top \thetab_{\tau,i}^{-,\xb_\tau}) \|{\mathbf{b}}_{\tau,i}\|_{\minH_{\tau,i}(\xb_\tau)^{-1}}
+\frac{9\nu U_\mu}{2}{\beta^2}\sum_{\tau=1}^{T/(m+L)}\sum_{i=m+1}^{m+L}\|{\mathbf{b}}_{\tau,i}\|_{\Hb_\tau^{-1}}\|{{{\mathbf{b}}_{\tau,i}}}\|_{\minH_{\tau,i}(\xb_\tau)^{-1}} \notag \\
& \le \beta\sum_{\tau=1}^{T/(m+L)}\sum_{i=m+1}^{m+L} \dot{\mu}({\mathbf{b}}_{\tau,i}^\top \thetab_{\tau,i}^{-,\xb_\tau}) \|{\mathbf{b}}_{\tau,i}\|_{{\Lb}_{\tau,i}^{-1}}
+\frac{9\nu U_\mu \kappa}{2} {\beta^2}\sum_{\tau=1}^{T/(m+L)}\sum_{i=m+1}^{m+L}\|{\mathbf{b}}_{\tau,i}\|_{\mathbf{W}_\tau^{-1}}\|{{\mathbf{b}}_{\tau,i}}\|_{{\mathbf{W}}_{\tau,i}^{-1}} ~\ . \quad \label{eq:first_second_decompotion}
\end{align}

For the first order term of Eq.(\ref{eq:first_second_decompotion}), by Lemma~\ref{lem:elliptical potential}, 
\begin{align}
\beta&\sum_{\tau=1}^{T/(m+L)}\sum_{i=m+1}^{m+L} \dot{\mu}({\mathbf{b}}_{\tau,i}^\top \thetab_{\tau,i}^{-,\xb_\tau}) \|{\mathbf{b}}_{\tau,i}\|_{{\Lb}_{\tau,i}^{-1}} \notag \\&=  \beta \sqrt{U_\mu}\sum_{\tau=1}^{T/(m+L)}\sum_{i=m+1}^{m+L}  \| \sqrt{\dot{\mu}({\mathbf{b}}_{\tau,i}^\top \thetab_{\tau,i}^{-,\xb_\tau})}{\mathbf{b}}_{\tau,i}\|_{{\Lb}_{\tau,i}^{-1}} \notag\\
    &=\Ocal\left(\nu R C_\theta \sqrt{d\nu R C_\theta^2U_\mu+dU_\mu\log (RT)}) \right) \times \Ocal \left(\sqrt{dT\log(\lambda+ U_\mu R^2T/d)}\right)
    \notag\\
    &= \Ocal\left(d\nu R C_\theta \sqrt{\nu R C_\theta^2 U_\mu T\log(\lambda+ U_\mu R^2T/d)+U_\mu\log (RT)\log(\lambda+ U_\mu R^2T/d)}\right) ~~\quad  \label{eq:first_order}
\end{align}

For the second order term of Eq.(\ref{eq:first_second_decompotion}), we have that
\begin{align}
\sum_{\tau=1}^{T/(m+L)}&\sum_{i=m+1}^{m+L}\|{\mathbf{b}}_{\tau,i}\|_{\Wb_\tau^{-1}}\|{\mathbf{{\mathbf{b}}_{\tau,i}}}\|_{{\Wb}_{\tau,i}^{-1}} \notag \\
&=
\sum_{\tau=1}^{T/(m+L)} 
\sum_{i=m+1}^{m+L}
\left(
\|{\mathbf{b}}_{\tau,i}\|_{\Wb_\tau^{-1}}\|{\mathbf{{\mathbf{b}}_{\tau,i}}}\|_{{\Wb}_{\tau,i}^{-1}}
- 
\lVert 
{b}_{\tau,i} \rVert_{\Wb_{\tau, i}^{-1}}^2
\right)
+
\sum_{\tau=1}^{T/(m+L)} 
\sum_{i=m+1}^{m+L}
\lVert 
{b}_{\tau,i} \rVert_{\Wb_{\tau, i}^{-1}}^2 \notag\\
&\le
\sum_{\tau=1}^{T/(m+L)} 
\sum_{i=m+1}^{m+L} \|{{\mathbf{b}}_{\tau,i}}\|_{{\Wb}_{\tau,i}^{-1}}
\left(
\lVert 
{b}_{\tau,i} \rVert_{\Wb_\tau^{-1}}
- 
\lVert 
{b}_{\tau,i} \rVert_{\Wb_{\tau, i}^{-1}}
\right) +\Ocal(d \log (\kappa \lambda + R^2 T/d))\quad.\quad (\because \text{Lemma~\ref{lem:elliptical potential}}) \notag\\
\end{align}
Let $\lambda_{\tau,1}, \ldots, \lambda_{\tau,d}$ be the eigenvalues of $\Wb_\tau$. Then, by lemma~\ref{lem:jin semi}, 
\begin{align}
\sum_{\tau=1}^{T/(m+L)}&\sum_{i=m+1}^{m+L}\|{\mathbf{b}}_{\tau,i}\|_{\Wb_\tau^{-1}}\|{{\mathbf{b}}_{\tau,i}}\|_{{\Wb}_{\tau,i}^{-1}} \notag \\
&\le
 \sum_{\tau=1}^{T/(m+L)} 
\sum_{i=m+1}^{m+L} \|{{\mathbf{b}}_{\tau,i}}\|_{{\Wb}_{\tau,i}^{-1}}
\left(
\sum_{j=1}^{d}
\frac{2R}{\sqrt{\lambda_{\tau,j}}}
-
\frac{2R}{\sqrt{\lambda_{\tau+1,j}}}	
\right)  +\Ocal(d \log (\kappa \lambda + R^2 T/d)) \quad\quad
\notag\\
&\le
\sqrt{ \sum_{\tau=1}^{T/(m+L)} 
\sum_{i=m+1}^{m+L} \|{{\mathbf{b}}_{\tau,i}}\|_{{\Wb}_{\tau,i}^{-1}}^2 }  \sqrt{\sum_{\tau=1}^{T/(m+L)} 
\sum_{i=m+1}^{m+L}
\left(
\sum_{j=1}^{d}
\frac{2R}{\sqrt{\lambda_{\tau,j}}}
-
\frac{2R}{\sqrt{\lambda_{\tau+1,j}}}	
\right)^2 }+\Ocal(d \log (\kappa \lambda + R^2 T/d)) \notag \quad \quad \\
&
\le
\Ocal(\sqrt{d} \log^{1/2} (\kappa \lambda + R^2 T/d)) \sqrt{\sum_{\tau=1}^{T/(m+L)} 
\sum_{i=m+1}^{m+L}
\left(
\sum_{j=1}^{d}
\frac{2R}{\sqrt{\lambda_{\tau,j}}}\right)^2 
-
\left(\sum_{j=1}^{d}\frac{2R}{\sqrt{\lambda_{\tau+1,j}}}	
\right)^2 }\notag\\
& \quad \quad +\Ocal(d \log (\kappa \lambda + R^2 T/d)) \qquad \qquad \qquad \qquad \qquad \qquad \qquad \qquad \qquad \qquad \qquad  (\because \text{Lemma~\ref{lem:elliptical potential}})  \notag\\
&\le
\Ocal(\sqrt{d} \log^{1/2} (\kappa \lambda + R^2 T/d)) \sqrt{L
\left(
\sum_{j=1}^{d}
\frac{2R}{\sqrt{\lambda_{1,j}}}\right)^2 
 } +\Ocal(d \log (\kappa \lambda + R^2 T/d))\qquad (\because \text{telescoping argument}) \notag\\
& 
=
\Ocal(\sqrt{d} \log^{1/2} (\kappa \lambda + R^2 T/d))   \times \Ocal\left(\sqrt{L}{dR \over \sqrt{\kappa \lambda}}\right) +\Ocal(d \log (\kappa \lambda + R^2 T/d))\notag\\
&=
\Ocal\left(\sqrt{L}{d^{3/2}R\log^{1/2} (\kappa \lambda + R^2 T/d))  \over \sqrt{\kappa \lambda}}+d \log (\kappa \lambda + R^2 T/d)\right) \notag \\
& =
\Ocal\left(\sqrt{L}{d\log^{1/2} (\kappa \lambda + R^2 T/d))  \over \sqrt{\kappa \nu^3 R C_\theta}}+d \log (\kappa \lambda + R^2 T/d)\right),
\quad \label{eq:second_order}
\end{align}
where $\lambda=O(d\nu^3 R^3C_\theta)$, and hence we can bound the second order term of Eq.(\ref{eq:first_second_decompotion}) by 
\begin{align}
    &\frac{9\nu U_\mu \kappa}{2}{\beta^2}\sum_{\tau=1}^{T/(m+L)}\sum_{i=m+1}^{m+L}\|{\mathbf{b}}_{\tau,i}\|_{\mathbf{W}_\tau^{-1}}\|{{\mathbf{b}}_{\tau,i}}\|_{{\mathbf{W}}_{\tau,i}^{-1}} \notag \\
    &~~~=\Ocal(\nu U_\mu \kappa) \times \Ocal (d\nu^3 R^3 C_\theta^4  + d \nu^2 R^2 C_\theta^2 \log (RT)) \times \Ocal\left(\sqrt{L}{d\log^{1/2} (\kappa \lambda + R^2 T/d))  \over \sqrt{\kappa \nu^3 R C_\theta}}+d \log (\kappa \lambda + R^2 T/d)\right) \notag\\
    &~~~=\Ocal (d\nu^3 R^3 C_\theta^4  + d \nu^2 R^2 C_\theta^2 \log (RT)) \times \Ocal\left(\sqrt{L\kappa}{dU_\mu\log^{1/2} (\kappa \lambda + R^2 T/d))  \over \sqrt{ \nu R C_\theta}}+\kappa  d \nu U_\mu \log (\kappa \lambda + R^2 T/d)\right)\notag\\ 
    &~~~=\Ocal(d\nu^3 R^3 C_\theta^4 \log (RT))\times \Ocal\left(\sqrt{L\kappa}{dU_\mu\log^{1/2} (\kappa \lambda + R^2 T/d)  \over \sqrt{ \nu R C_\theta}}+ \kappa d \nu U_\mu  \log (\kappa \lambda + R^2 T/d)\right) \notag\\
    &~~~=\Ocal\left(\sqrt{L\kappa}{d^2 \nu^{5/2} R^{5/2} C_\theta^{7/2} U_\mu\log^{1/2} (\kappa \lambda + R^2 T/d)\log (RT) }+\kappa d^2 \nu^4 C_\theta^4 U_\mu  \log (\kappa \lambda + R^2 T/d)\log (RT)\right) \label{eq:second_order}
\end{align}

Plugging Eq.(\ref{eq:first_order}), Eq.(\ref{eq:second_order}) into Eq.(\ref{eq:first_second_decompotion}), we have that
\begin{align*}
\sum_{\tau=1}^{T/(m+L)}&\sum_{i=m+1}^{m+L}\overline\gamma_{\tau,i}(\xb_\tau)\notag  \\
& \le \beta\sum_{\tau=1}^{T/(m+L)}\sum_{i=m+1}^{m+L} \dot{\mu}({\mathbf{b}}_{\tau,i}^\top \thetab_{\tau,i}^{-,\xb_\tau}) \|{\mathbf{b}}_{\tau,i}\|_{{\Lb}_{\tau,i}^{-1}}
+ \frac{9\nu U_\mu \kappa}{2}{\beta^2} \sum_{\tau=1}^{T/(m+L)}\sum_{i=m+1}^{m+L}\|{\mathbf{b}}_{\tau,i}\|_{\mathbf{W}_\tau^{-1}}\|{{\mathbf{b}}_{\tau,i}}\|_{{\mathbf{W}}_{\tau,i}^{-1}}\\
& \le  \Ocal\left(d\nu R C_\theta \sqrt{\nu R C_\theta^2 U_\mu T\log(\lambda+ U_\mu R^2T/d)+U_\mu \log (RT)\log(\lambda+ U_\mu R^2T/d)}\right) \notag\\
& ~~~ + \Ocal\left(\sqrt{L\kappa}{d^2 \nu^{5/2} R^{5/2} C_\theta^{7/2} U_\mu\log^{1/2} (\kappa \lambda + R^2 T/d)\log (RT) }+ \kappa d^2 \nu^4 C_\theta^4 U_\mu \log (\kappa \lambda + R^2 T/d)\log (RT)\right) \notag \\
&=\widetilde{\Ocal}(d\sqrt{T}+\sqrt{\kappa L}d^2 + \kappa d^2 ).
\end{align*}
\end{proof}

The following lemma shows that the cumulative excess certification $\xi_\tau$ can not be the leading term with respect to $T$.
\begin{lemma}[Cumulative excess certification]
\label{lem:app-cumulative-excess}
Suppose that $\|\thetab^* -{\thetab}_{\tau,i}\|_{\Hb_{\tau,i}}\le \beta$ holds for all $\tau \in [{T \over m+L}]$ and $i \in [m+1, m+L+1]$, and $\beta:=\mathcal{O}\left(\nu R C_{\thetab} \sqrt{d\big(\nu R C_{\thetab}^2 + \ln{R T}\big)}\right)$. For the block selected by Algorithm~\ref{alg:main algorithm},
\begin{equation*}
    \sum_{\tau=\tau_{\rm warm}+1}^{T/(m+L)}\xi_\tau(\xb_\tau)
    =
    \widetilde\Ocal\!\left(
        \sqrt {\kappa L}d^2+
        \kappa d^2
    \right).
\end{equation*}
\end{lemma}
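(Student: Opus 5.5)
Since \(x\mapsto[x]_+\) is monotone and \(\Gamma_\tau^{\mathrm{SCB}}=-\Gamma_\tau^{\mathrm{pre}}+2\overline\Gamma_\tau\), the plan is to reduce \(\xi_\tau(\xb_\tau)\) to a comparison of pre-block widths with shrunken widths. By the first part of Lemma~\ref{lem:app-selected-block-decomposition}, \(\Gamma_\tau^{\mathrm{safe}}(\xb_\tau)\le\Gamma_\tau^{\mathrm{pre}}(\xb_\tau)\). Hence
\[
\xi_\tau(\xb_\tau)\le\big[2\Gamma_\tau^{\mathrm{pre}}(\xb_\tau)-2\overline\Gamma_\tau(\xb_\tau)\big]_+
\le 2\sum_{i=m+1}^{m+L}\big[\gamma^{\mathrm{pre}}_{\tau,i}(\xb_\tau)-\overline\gamma_{\tau,i}(\xb_\tau)\big]_+ .
\]
It therefore suffices to bound, action by action, how much the pre-block width can exceed the shrunken width, and then sum.

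For a single action I would split the difference into a first-order part and a second-order part. The curvature factor satisfies \(\dot\mu(\bb_{\tau,i}^\top\thetab_\tau)\le\dot\mu(\bb_{\tau,i}^\top\thetab^{+,\xb_\tau}_{\tau,i})\), because \(\thetab_\tau\in\Ccal^{(2\beta)}_\tau\) and \(\thetab^{+}\) maximizes \(\dot\mu\) over that set. The first-order difference is then at most
\[
\beta\,\dot\mu(\bb_{\tau,i}^\top\thetab^{+,\xb_\tau}_{\tau,i})\big(\|\bb_{\tau,i}\|_{\Hb_\tau^{-1}}-\|\bb_{\tau,i}\|_{\minH_{\tau,i}(\xb_\tau)^{-1}}\big)
\le \beta U_\mu\big(\|\bb_{\tau,i}\|_{\Hb_\tau^{-1}}-\|\bb_{\tau,i}\|_{\minH_{\tau,i}(\xb_\tau)^{-1}}\big).
\]
The second-order difference is
\[
\tfrac{\nu U_\mu}{2}\beta^2\big(\|\bb\|^2_{\Hb_\tau^{-1}}-\|\bb\|^2_{\minH^{-1}}\big)
\le \nu U_\mu\beta^2\|\bb\|_{\Hb_\tau^{-1}}\big(\|\bb\|_{\Hb_\tau^{-1}}-\|\bb\|_{\minH^{-1}}\big).
\]
Both pieces are controlled by the gap \(\|\bb\|_{\Hb_\tau^{-1}}-\|\bb\|_{\minH_{\tau,i}^{-1}}\). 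Since \(\minH_{\tau,i}(\xb_\tau)\preceq\Hb_{\tau+1}\) up to replacing \(\thetab^{-}\) curvatures by those used in \(\Hb\), I would sandwich everything between \(\kappa\)-rescaled unweighted Gram matrices: \(\Hb_\tau\succeq\kappa^{-1}\Wb_\tau\) and \(\minH_{\tau,i}\succeq\kappa^{-1}\Wb_{\tau,i}\), as in Lemma~\ref{lem:app-cumulative-shrunken-widths}.

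The difference of norms is then handled exactly as in the linear proof, using Lemma~\ref{lem:jin semi} and the eigenvalue telescoping giving
\[
\sum_\tau\sum_i\big(\|\bb\|_{\Wb_\tau^{-1}}-\|\bb\|_{\Wb_{\tau,i}^{-1}}\big)\le \frac{2LRd}{\sqrt{\kappa\lambda_{\min}}}.
\]
This is where the warm-up condition is used. After \(\tau_{\rm warm}\), \(\lambda_{\min}(\Vb_\tau)\ge L/d\), so the telescoping starts from eigenvalues of order \(L/d\) rather than \(\lambda\). The \(L\) factor then becomes \(L\cdot d\sqrt{d/L}\cdot\sqrt{\kappa}=\sqrt{\kappa L}\,d^{3/2}\). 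Multiplying by \(\beta=\widetilde\Ocal(\sqrt d)\) gives the \(\sqrt{\kappa L}d^2\) term.

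For the second-order piece I would bound the product with Cauchy--Schwarz against \(\sum\|\bb\|^2_{\Wb_{\tau,i}^{-1}}\) via Lemma~\ref{lem:elliptical potential}, as in the previous lemma. This yields \(\kappa\beta^2\cdot\widetilde\Ocal(d)=\widetilde\Ocal(\kappa d^2)\) plus another \(\sqrt{\kappa L}d^2\)-type term.

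The main obstacle is the mismatch between norms in the two bounds. The genuine comparison is between \(\Hb_\tau\) (curvature-weighted, possibly at different curvatures) and \(\minH_{\tau,i}\), whereas Lemma~\ref{lem:jin semi} applies to a single matrix sequence augmented by rank-one terms. To fix it, I would rewrite \(\minH_{\tau,i}=\Hb_\tau+\sum_{j<i}\dot\mu(\cdot)\bb_j\bb_j^\top\) and apply the semi-telescoping to the weighted vectors \(\sqrt{\dot\mu}\,\bb_j\) with the ambient matrix \(\Hb_\tau\) directly. I would lower-bound \(\lambda_{\min}(\Hb_\tau)\ge\lambda_{\min}(\Vb_\tau)/\kappa\) through the warm-up, accepting the \(\sqrt\kappa\) loss that appears in the target bound.
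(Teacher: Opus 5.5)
Your reduction $\xi_\tau(\xb_\tau)\le 2\sum_{i}[\gamma^{\mathrm{pre}}_{\tau,i}(\xb_\tau)-\overline\gamma_{\tau,i}(\xb_\tau)]_+$ matches the paper. So does your first-order bound $\beta U_\mu(h_{\tau,i}-\bar h_{\tau,i})$, with $h_{\tau,i}:=\|\bb_{\tau,i}\|_{\Hb_\tau^{-1}}$ and $\bar h_{\tau,i}:=\|\bb_{\tau,i}\|_{\minH_{\tau,i}(\xb_\tau)^{-1}}$.

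The $\kappa$-sandwich through $\Wb_\tau,\Wb_{\tau,i}$ in your middle step cannot work, as you partly noticed. $\Hb_\tau\succeq\kappa^{-1}\Wb_\tau$ and $\minH_{\tau,i}(\xb_\tau)\succeq\kappa^{-1}\Wb_{\tau,i}$ only give upper bounds on $h$ and on $\bar h$. Bounding $h-\bar h$ needs a lower bound on $\bar h$. Any two-sided sandwich leaves mismatched constants ($\sqrt\kappa$ versus $U_\mu^{-1/2}$) and a residual proportional to $\sum\|\bb_{\tau,i}\|_{\Wb_\tau^{-1}}$ that does not telescope.

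Your closing fix is essentially the paper's argument and should replace the sandwich, not supplement it. That fix is Lemma~\ref{lem:jin semi} with ambient matrix $\Hb_\tau$ and vectors $\sqrt{\dot\mu}\,\bb_j$, then telescoping, then $\lambda_q(\Hb_{\tau_{\rm warm}+1})\ge L/(\kappa d)$. The paper first replaces $\bar h$ by $\|\bb_{\tau,i}\|_{\Hb_{\tau+1}^{-1}}$ and then applies the lemma to the pair $\Hb_\tau,\Hb_{\tau+1}$.

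Either way, you must state the comparison $\minH_{\tau,i}(\xb_\tau)\preceq\minH_{\tau,m+L+1}(\xb_\tau)\preceq\Hb_{\tau+1}$. It holds exactly under the lemma's hypothesis, with no ``replacement''. Since $\Hb_\tau\preceq\Hb_{\tau,i+1}$, we get $\|\thetab_{\tau,i+1}-\thetab_\tau\|_{\Hb_\tau}\le\|\thetab_{\tau,i+1}-\thetab^*\|_{\Hb_{\tau,i+1}}+\|\thetab^*-\thetab_\tau\|_{\Hb_\tau}\le 2\beta$. Hence $\thetab_{\tau,i+1}\in\Ccal_\tau^{(2\beta)}$, and $\dot\mu(\bb_{\tau,i}^\top\thetab_{\tau,i}^{-,\xb_\tau})\le\dot\mu(\bb_{\tau,i}^\top\thetab_{\tau,i+1})$, the weight actually added to $\Hb_{\tau+1}$. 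This is why the hypothesis covers every $i\in[m+1,m+L+1]$ and why the certificate set has radius $2\beta$.

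The second-order step is a genuine gap. Lemma~\ref{lem:app-cumulative-shrunken-widths} bounds a product of two norms, where the $\Wb$-sandwich is legitimate. Your quantity is $h(h-\bar h)=\bar h(h-\bar h)+(h-\bar h)^2$, which still contains the difference. Cauchy--Schwarz with the elliptical potential handles only the factor $\bar h$, leaving $\sum(h-\bar h)^2\le\sum(h^2-\bar h^2)$ to be bounded separately. The paper does this by trace telescoping, again via $\minH_{\tau,i}(\xb_\tau)\preceq\Hb_{\tau+1}$:
\[
\sum_{\tau>\tau_{\rm warm}}\sum_{i}\bigl(h_{\tau,i}^2-\bar h_{\tau,i}^2\bigr)\le R^2L\sum_{\tau>\tau_{\rm warm}}\operatorname{tr}\bigl(\Hb_\tau^{-1}-\Hb_{\tau+1}^{-1}\bigr)\le R^2L\operatorname{tr}\bigl(\Hb_{\tau_{\rm warm}+1}^{-1}\bigr)\le R^2\kappa d^2 .
\]
Multiplied by $\nu U_\mu\beta^2$ with $\beta^2=\widetilde\Ocal(d)$, this gives $\widetilde\Ocal(\kappa d^3)$, not the $\kappa\beta^2\cdot\widetilde\Ocal(d)$ you claim.

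The trace bound is essentially tight. Suppose $\Hb_{\tau_{\rm warm}+1}\approx (L/(\kappa d))\Ib_d$ and each of the next $d$ blocks places all $L$ unit features along a fresh eigendirection. Then each such block contributes about $\kappa(d-\log(1+d))$ to $\sum_i(h^2_{\tau,i}-\bar h^2_{\tau,i})$. So a $\kappa d^2$ term is out of reach of this chain of inequalities.

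The paper's own final display has the same problem. It multiplies $\beta^2=\Ocal(d)$ by $\Ocal(\kappa d^2)$ and records the result as $\kappa d^2$, which appears to drop a factor of $d$. What its argument, and a repaired version of yours, actually establishes is $\widetilde\Ocal(\sqrt{\kappa L}\,d^2+\kappa d^3)$. This is still free of $T$, so it stays lower order in the final regret bound.
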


\begin{proof}[Proof of Lemma~\ref{lem:app-cumulative-excess}]
Fix a certified cycle $\tau\ge{\tau_{\rm warm}+1}$ and write $\bb_{\tau,i}:=\bb_i(\xb_\tau)$. Define $h_{\tau,i}:=\|\bb_{\tau,i}\|_{\Hb_\tau^{-1}}$ and $\bar h_{\tau,i}:=\|\bb_{\tau,i}\|_{(\bar\Hb_{\tau,i}^{\xb_\tau})^{-1}}$. Since $\underline{\Hb}_{\tau,i}({\xb_\tau})\succeq\Hb_\tau$, we have $h_{\tau,i}\ge \bar h_{\tau,i}$. Also, by the definition of $\thetab_{\tau,i}^{+,\xb_\tau}$, $\dot\mu(\bb_{\tau,i}^\top\thetab_{\tau,i}^{+,\xb_\tau})\ge \dot\mu(\bb_{\tau,i}^\top\thetab_\tau)$. Therefore,
\begin{align}
        \gamma_{\tau,i}^{\mathrm{pre}}({\mathbf{x}}_{\tau})
    -
   \overline\gamma_{\tau,i}(\xb_\tau)
    &\le
    \beta U_\mu(h_{\tau,i}-\bar h_{\tau,i})
    +
    {\nu U_\mu\over 2}\beta^2(h_{\tau,i}^2-\bar h_{\tau,i}^2).
    \label{eq:app-single-excess-with-constants}
\end{align}
By the domination part of Lemma~\ref{lem:app-selected-block-decomposition} and the definition of $\xi_\tau$,
\begin{align}
\label{eq:app-excess-S1-S2}
    \sum_{\tau={\tau_{\rm warm}+1}}^{T/(m+L)}\xi_\tau(\xb_\tau)
    &\le 
    \sum_{\tau={\tau_{\rm warm}+1}}^{T/(m+L)}\left[
    \Gamma_\tau^{\mathrm{safe}}(\xb_\tau)
    +
    \sum_{i=m+1}^{m+L}\gamma_{\tau,i}^{\mathrm{pre}}({\mathbf{x}}_{\tau})
    -
    2\sum_{i=m+1}^{m+L}
   \overline\gamma_{\tau,i}(\xb_\tau)
    \right]_+ \notag\\
    &\le
    2\sum_{\tau={\tau_{\rm warm}+1}}^{T/(m+L)}\sum_{i=m+1}^{m+L}
    \left[
    \gamma_{\tau,i}^{\mathrm{pre}}({\mathbf{x}}_{\tau})
    -
   \overline\gamma_{\tau,i}(\xb_\tau)
    \right]_+ \notag\\
    & \le 2\beta U_\mu \sum_{\tau={\tau_{\rm warm}+1}}^{T/(m+L)}\sum_{i=m+1}^{m+L}(h_{\tau,i}-\bar h_{\tau,i})+\nu U_\mu\beta^2\sum_{\tau={\tau_{\rm warm}+1}}^{T/(m+L)}\sum_{i=m+1}^{m+L}(h_{\tau,i}^2-\bar h_{\tau,i}^2).
\end{align}

 On $\Omega$, since $\Hb_\tau\preceq\Hb_{\tau,i+1}$ and
$\|\thetab_{\tau,i+1}-\thetab_\tau\|_{\Hb_\tau}
    \le
    \|\thetab_{\tau,i+1}-\thetab^*\|_{\Hb_\tau}
    +
    \|\thetab^*-\thetab_\tau\|_{\Hb_\tau}
    \le 2\beta$, we have $\thetab_{\tau,i+1}\in\Ccal_\tau^{(2\beta)}$. Hence, by the definition of $\thetab_{\tau,i}^{-,\xb_\tau}$, $\dot\mu(\bb_{\tau,i}^\top\thetab_{\tau,i}^{-,\xb_\tau})\le \dot\mu(\bb_{\tau,i}^\top\thetab_{\tau,i+1})$. Thus the conservative full-block matrix satisfies $\underline{\Hb}_{\tau,m+L+1}({\xb_\tau})
    \preceq
    \Hb_{\tau+1}$. Since $\underline{\Hb}_{\tau,i}({\xb_\tau})\preceq\underline{\Hb}_{\tau,m+L+1}({\xb_\tau})$ for all $i \in [m+1, m+L]$, we have

\begin{align*}
    \sum_{\tau={\tau_{\rm warm}+1}}^{T/(m+L)}\sum_{i=m+1}^{m+L}(h_{\tau,i}-\bar h_{\tau,i})
    &\le\sum_{\tau={\tau_{\rm warm}+1}}^{T/(m+L)}\sum_{i=m+1}^{m+L}\|\bb_{\tau,i}\|_{\Hb_\tau^{-1}}
    -\|\bb_{\tau,i}\|_{(\underline{\Hb}_{\tau,m+L+1}({\xb_\tau}))^{-1}} \\
    & \le \sum_{\tau={\tau_{\rm warm}+1}}^{T/(m+L)}\sum_{i=m+1}^{m+L} \|\bb_{\tau,i}\|_{\Hb_\tau^{-1}}
    -\|\bb_{\tau,i}\|_{\Hb_{\tau+1}^{-1}} \\
    & \le 
    \sum_{\tau={\tau_{\rm warm}+1}}^{T/(m+L)}\sum_{i=m+1}^{m+L}R (\| {\bb_{\tau,i}/\|\bb_{\tau,i}\|_2}\|_{\Hb_\tau^{-1}}
    -\|{\bb_{\tau,i}/\|\bb_{\tau,i}\|_2}\|_{\Hb_{\tau+1}^{-1}}) \\
    & \le \sum_{\tau={\tau_{\rm warm}+1}}^{T/(m+L)}\sum_{i=m+1}^{m+L}R \sum_{q=1}^{d}
    \left(
        {2\over\sqrt{\lambda_q(\Hb_\tau)}}
        -
        {2\over\sqrt{\lambda_q(\underline{\Hb}_{\tau,m+L+1}({\xb_\tau}))}}
    \right) \\    
    & \le 2RL\sum_{q=1}^{d}{1\over\sqrt{\lambda_q(\Hb_{{\tau_{\rm warm}+1}})}}, \numberthis \label{eq:S1}
\end{align*}
where $\lambda_1(\Hb),\ldots, \lambda_d(\Hb)$ are the $d$ eigenvalues of $\Hb$ for any gram matrix $\Hb$. 

For the second term, the same comparison gives
\begin{align}
    \sum_{\tau={\tau_{\rm warm}+1}}^{T/(m+L)}\sum_{i=m+1}^{m+L} (h_{\tau,i}^2-\bar h_{\tau,i}^2)
    &\le
    \sum_{\tau={\tau_{\rm warm}+1}}^{T/(m+L)}\sum_{i=m+1}^{m+L}\bb_{\tau,i}^\top
    \left(\Hb_\tau^{-1}-(\underline{\Hb}_{\tau,m+L+1}({\xb_\tau})\right)
    \bb_{\tau,i}
    \notag\\
    &\le
    LR^2\sum_{\tau={\tau_{\rm warm}+1}}^{T/(m+L)}\sum_{i=m+1}^{m+L}\operatorname{tr}\left(\Hb_\tau^{-1}-(\underline{\Hb}_{\tau,m+L+1}({\xb_\tau}))^{-1}\right)
    \notag\\
    &\le
    LR^2\sum_{\tau={\tau_{\rm warm}+1}}^{T/(m+L)}\sum_{i=m+1}^{m+L}\operatorname{tr}\left(\Hb_\tau^{-1}-\Hb_{\tau+1}^{-1}\right) \notag\\
    &\le
    LR^2\operatorname{tr}(\Hb_{{\tau_{\rm warm}+1}}^{-1})
    =
    LR^2\sum_{q=1}^{d}{1\over\lambda_q(\Hb_{{\tau_{\rm warm}+1}})}. \label{eq:S2}
\end{align}
Applying Eq.~\eqref{eq:S1} and Eq.~\eqref{eq:S2} to Eq.~\eqref{eq:app-excess-S1-S2},
\begin{equation}
\label{eq:app-excess-before-stop}
    \sum_{\tau={\tau_{\rm warm}+1}}^{T/(m+L)}\xi_\tau(\xb_\tau)
    \le
    4\beta U_\mu RL
    \sum_{q=1}^{d}{1\over\sqrt{\lambda_q(\Hb_{{\tau_{\rm warm}+1}})}}
    +
    \nu U_\mu\beta^2R^2L
    \sum_{q=1}^{d}{1\over\lambda_q(\Hb_{{\tau_{\rm warm}+1}})}.
\end{equation}
By the $H$-warm-start stopping rule, $\lambda_q(\Hb_{{\tau_{\rm warm}+1}})\ge \lambda_q(\Vb_{{\tau_{\rm warm}+1}})/\kappa \ge L/\kappa d$ for all $q$. Therefore,
\begin{equation}
\label{eq:app-excess-after-stop}
    \sum_{\tau={\tau_{\rm warm}+1}}^{T/(m+L)}\xi_\tau(\xb_\tau)
    \le
    {4\beta U_\mu RLd\over\sqrt{L/\kappa d}}
    +
    {\nu U_\mu\beta^2R^2Ld\over L/\kappa d}.
\end{equation}
Since, by Lemma~\ref{lem:zhang_confi}, $\beta=O\big(\nu RC_\theta\sqrt{d(\nu RC_\theta^2+\log(RT))}\big)$ for $\lambda \ge 14d\eta\nu^2R^2$,
\begin{align}
    \sum_{\tau={\tau_{\rm warm}+1}}^{T/(m+L)}\xi_\tau(\xb_\tau)
    \le& O\big(\nu RC_\theta\sqrt{d(\nu RC_\theta^2+\log(RT))}\big)\times O(U_\mu R d^{3/2}\sqrt{\kappa L}) \\ &+ 
    O\big(\nu^3 R^3C_\theta^4d+\nu^2 R^2C_\theta^2d\log(RT)\big)\times O(\nu U_\mu R^2 \kappa d^{2}) \\
    & = \widetilde{\Ocal}(d^2\sqrt{\kappa L} +\kappa d^2). 
\end{align}
\end{proof}

\section{Approximation rate of proxy reward}
\label{appx:approximation}
\citet{clerici2024linear} propose a cyclic strategy for the linear bandit setting with memory, together with upper and lower bounds on the approximation error. We first present the following proposition, which establishes the corresponding upper bound. This result refines Proposition~2 of \citet{clerici2024linear} in two respects.

First, we correct a minor error in the proof of Proposition~2 of \citet{clerici2024linear}. They claim that ``there exists a block of length $L$ in the optimal sequence whose average expected reward is higher than $\mathrm{OPT}/T$.'' However, this claim is valid only when $T$ is divisible by $L$; otherwise, a simple counterexample exists. For instance, let $(r_t^*)_{t=1}^{3}=(2,1,2)$. Then the average reward of the optimal sequence is $5/3$, whereas the average reward of any block of length $2$ is $3/2$, which is strictly smaller than $5/3$. We address this issue by explicitly accounting for the rewards obtained in the remaining $T \bmod L$ rounds.

Second, \citet{clerici2024linear} express the approximation error as $\mathrm{OPT}-r(\tilde{\xb})$, namely, the difference between the cumulative optimal reward and the cumulative expected reward obtained by cyclically playing the proxy-optimal block $\tilde{\xb}$. In contrast, measuring the gap with respect to the cumulative proxy reward, $\mathrm{OPT}-\tilde{r}(\tilde{\xb})$, removes an unnecessary inequality in the regret analysis and leads to a sharper final bound. Therefore, we state the approximation-error bound in terms of the difference between the cumulative optimal reward and the cumulative proxy reward obtained by cyclically playing $\tilde{\xb}$.

Finally, as stated in our problem formulation, we assume without loss of generality that $T$ is a multiple of $m+L$. This assumption does not affect the regret order.

\begin{proposition}[Upper bound on approximation error, Proposition 2 of \citet{clerici2024linear}]
\label{prop:proxy_upper}
For any $L>0$, when playing  cyclically proxy optimal block $\tilde{\mathbf{x}}$, the cumulative proxy reward satisfies
\begin{equation*}
    \mathrm{OPT}
    -
    {T \over m+L}
    \tilde{r} (\tilde{\mathbf{x}})
    \;\le\;
    \frac{mR_\mu}{m+L}\, T+{L R_\mu}.
\end{equation*}
\end{proposition}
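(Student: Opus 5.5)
Let $\xb^*=(\xb_1^*,\ldots,\xb_T^*)$ be an optimal action sequence attaining $\mathrm{OPT}(T)$, with per-round rewards $r_t^*:=\mu(\langle \xb_t^*,\Ab(\xb_{t-m}^*,\ldots,\xb_{t-1}^*)\thetab^*\rangle)$, so $\mathrm{OPT}=\sum_{t=1}^T r_t^*$ and $|r_t^*|\le R_\mu$ by Eq.~\eqref{eq:glm regret reward upper bound}. The plan is an averaging argument over windows of length $L$. Write $T=kL+s$ with $0\le s<L$. The last $s$ rounds contribute at most $sR_\mu\le LR_\mu$; this is exactly how we repair the divisibility issue noted above. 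It therefore suffices to show
\[
\sum_{t=1}^{kL} r_t^* \le \frac{T}{m+L}\tilde r(\tilde\xb)+\frac{mR_\mu}{m+L}T .
\]

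Partition $[kL]$ into the $k$ consecutive windows $W_j=\{(j-1)L+1,\ldots,jL\}$. Some window $W_{j^*}$ has reward sum $S_{j^*}\ge \frac{1}{k}\sum_{t\le kL}r_t^*$. From it build a block $\xb'\in\Xcal^{m+L}$: the first $m$ entries are the $m$ optimal actions immediately preceding $W_{j^*}$ (using $\xb_j=\zero$ for $j\le0$, via the same convention as the paper's memory state), and the last $L$ entries are the actions of $W_{j^*}$. Each action in positions $m+1,\ldots,m+L$ of $\xb'$ then sees exactly the same $m$ predecessors as in the optimal sequence, so $\tilde r(\xb')=S_{j^*}$. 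Proxy optimality of $\tilde\xb$ gives
\[
\tilde r(\tilde\xb)\ge \frac{1}{k}\sum_{t\le kL} r_t^* .
\]
The first-$m$ padding of $\xb'$ uses actions that are zero vectors when $j^*=1$. This is only legitimate if $\zero\in\Xcal$ or the block is read with the initial memory; I would handle it by taking windows starting from $t=m+1$ instead, and absorbing the first $m$ rounds into the $R_\mu$ slack terms.

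Next, convert the factor $1/k$ into $\frac{1}{m+L}$. We have $\frac{T}{m+L}\tilde r(\tilde\xb)\ge \frac{kL}{m+L}\cdot\frac{1}{k}\sum r_t^*=\frac{L}{m+L}\sum_{t\le kL}r_t^*$. Hence
\[
\sum_{t\le kL} r_t^*-\frac{T}{m+L}\tilde r(\tilde\xb)\le \frac{m}{m+L}\sum_{t\le kL}r_t^*\le \frac{m}{m+L}kLR_\mu\le\frac{mR_\mu}{m+L}T.
\]
Adding the tail term $LR_\mu$ gives the claim.

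The main obstacle is sign and boundary bookkeeping. Rewards may be negative, so the inequality $\tilde r(\tilde\xb)\ge\frac1k\sum r^*$ must be multiplied by the nonnegative factor $T/(m+L)\ge kL/(m+L)$. When $\sum r^*<0$ the step replacing $T$ by $kL$ goes the wrong way. I would fix this by splitting on the sign of $\tilde r(\tilde\xb)$. If it is negative, bound $\frac{T-kL}{m+L}|\tilde r(\tilde\xb)|\le \frac{sLR_\mu}{m+L}\le LR_\mu$, which can be charged to the tail slack. If that is too loose, I would shift $\mu$ by a constant so that rewards are nonnegative, noting that both sides change consistently only up to $O(LR_\mu)$.
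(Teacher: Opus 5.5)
Your main line is the paper's argument: drop the fewer than $L$ leftover rounds at cost $LR_\mu$, take the best of the $\lfloor T/L\rfloor$ disjoint length-$L$ windows of the optimal sequence, prepend its $m$ predecessors to get a block whose proxy reward equals the window sum, invoke proxy-optimality of $\tilde\xb$, and rescale by $T/(m+L)$. When $\tilde r(\tilde\xb)\ge 0$ it is complete, and it is more careful than the paper's write-up. The paper silently needs the same sign condition: its step $\frac{\mathrm{OPT}-LR_\mu}{\lfloor T/L\rfloor}\ge \frac{L}{T}(\mathrm{OPT}-LR_\mu)$ requires $\mathrm{OPT}\ge LR_\mu$. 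It also ignores the zero-padded first window. Your fix of starting the windows at $t=m+1$ does close. The $m$ discarded rounds also shrink the $\frac{m}{m+L}$-weighted term: writing $T=m+k'L+s'$, you get $\frac{mT}{m+L}R_\mu+\frac{(m+s')L}{m+L}R_\mu$, and $s'<L$.

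The step that does not close as written is your negative-case patch. You charge $\frac{s}{m+L}|\tilde r(\tilde\xb)|\le LR_\mu$ to the same $LR_\mu$ slack that already paid for the last $s$ rounds. Combined with your bound $\frac{m}{m+L}\sum_{t\le kL}r_t^*\le\frac{mkL}{m+L}R_\mu$, this gives $\frac{mT}{m+L}R_\mu+\frac{2sL}{m+L}R_\mu$. That is the claim only when $2s\le m+L$, which fails in the relevant regime $L\gg m$ with $s$ near $L$. Shifting $\mu$ loses the same factor $2$ on the $LR_\mu$ term. The repair is to skip averaging and use the paper's standing assumption $T=n(m+L)$. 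Take the windows $\{m+(j-1)L+1,\ldots,m+jL\}$ for $j\le k':=\lfloor (T-m)/L\rfloor$. Each window, together with its $m$ in-sequence predecessors, is a block in $\Xcal^{m+L}$, so each window sum is at most $\tilde r(\tilde\xb)$. Also $k'=\lfloor n+(n-1)m/L\rfloor\ge n$. With $s':=T-m-k'L$,
\[
\mathrm{OPT}-n\,\tilde r(\tilde\xb)\le (m+s')R_\mu+(k'-n)\,\tilde r(\tilde\xb)\le \bigl(m+s'+(k'-n)L\bigr)R_\mu=(T-nL)R_\mu=\frac{mT}{m+L}R_\mu ,
\]
using only $\tilde r(\tilde\xb)\le LR_\mu$. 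This holds for either sign of $\tilde r(\tilde\xb)$ and handles the boundary at the same time. It even removes the $+LR_\mu$ term.
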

\begin{proof}[Proof of Proposition~\ref{prop:proxy_upper}]
    Let $(\xb_t^*)_{t\in[T]}$ be the true optimal sequence, $R_\mu$ be the maximum reward, and consider blocks of size $L$, i.e. $\zb_1:=(\xb_1,\ldots, \xb_L), \zb_2:=(\xb_2,\ldots, \xb_{L+1}),\ldots, \zb_{T-L+1}:=(\xb_T-L+1,\ldots, \xb_T)$. Since the sum of reward for the blocks $\zb_1, \zb_{L+1}, \ldots \zb_{L([{T \over L }]-1)+1}$ is $\text{OPT}-\sum_{t=L\cdot [T/L]+1}^T\mu(\xb_{t}^\top \Ab_{t-1}\thetab^*) \ge \text{OPT}-LR_\mu$, there is at least one block $\zb_{t_*}$ such that $\sum_{t=t_*}^{t_*+L-1}\mu (\xb_{t_*}^\top \Ab_{t_*-1} \thetab^*) \ge {\text{OPT}-LR_\mu \over [T /L]} \ge {L \over T}(\text{OPT}-LR_\mu )$. Consider the block $\xb_* =(\xb_{t_*-m},\ldots,\xb_{t_*}, \ldots, \xb_{t_*+L-1})$ of size $m+L$. Then, by the definition of proxy reward and proxy optimal block, $\tilde{r}(\tilde{\xb})\ge \tilde{r}(\xb_*) \ge {L \over T}(\text{OPT}-LR_\mu )$. Therefore,  
    \begin{align*}
        \mathrm{OPT}
    -
    {T \over m+L}
    \tilde{r} (\tilde{\mathbf{x}})
    \;&\le\mathrm{OPT}- {T \over m+L}\times{L \over T}(\text{OPT}-LR_\mu )\\
    & \le \mathrm{OPT}-{L \over m+L}\mathrm{OPT}+{L^2 R_\mu \over m+L}\\
    & \le {m \over m+L}\mathrm{OPT}+{L^2 R_\mu \over m+L}\\
    & \le {mR_\mu \over m+L}T+{L R_\mu }
    \end{align*}
\end{proof}

The following theorem shows that the above approximation bound is tight. 
\begin{proposition}[Lower bound on approximation error, Proposition 3 of \citet{clerici2024linear}]
\label{prop:proxy_lower}
Let $\mathrm{OPT}$ denote the optimal cumulative expected reward over horizon $T$. Then, there exists a problem instance and a proxy optimal block $\tilde{a}$ such that the cumulative proxy reward  when playing  cyclically $\tilde{\mathbf{a}}$ satisfies
\begin{equation*}
    \mathrm{OPT}
    -
     {T \over m+L}
    \tilde{r} (\tilde{\mathbf{a}})
    \;\ge\;
    \frac{mR_\mu}{m+L} T
\end{equation*}
\end{proposition}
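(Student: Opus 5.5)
We prove the lower bound by an explicit rotting instance with memory length $m$, in which the optimal sequence never triggers the memory penalty, while every block of length $m+L$ must start from an empty memory and therefore wastes its first $m$ rounds.

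\textbf{Instance.} Take $d=m+1$, $\Xcal=\{\eb_1,\ldots,\eb_{m+1}\}$, $\thetab^*=C_\theta\sum_{k}\eb_k/\|\sum_k\eb_k\|_2$ (all coordinates equal, say $c>0$), and $\gamma\to-\infty$ (or $\gamma$ very negative). Then the memory matrix is diagonal, with entry $(1+n_k)^{\gamma}$ on coordinate $k$, where $n_k$ is the number of occurrences of $\eb_k$ among the last $m$ actions. Playing $\eb_k$ therefore yields score $c(1+n_k)^\gamma$. This score equals $c$ if $\eb_k$ is absent from the memory window, and is at most $c2^{\gamma}\approx 0$ otherwise. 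A round-robin over the $m+1$ arms keeps every arm fresh, so $\mathrm{OPT}\ge T\mu(c)$.

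\textbf{Proxy block.} Fix a candidate block. Its first $m$ actions only form the memory for position $m+1$ and are not rewarded in $\tilde r$. Each rewarded action scores $\mu(c)$ or at most $\mu(c2^\gamma)$, so $\tilde r(\tilde{\mathbf a})\le L\mu(c)$. The quantity to compare against is then
\[
\mathrm{OPT}-\tfrac{T}{m+L}\tilde r(\tilde{\mathbf a})\ \ge\ T\mu(c)-\tfrac{TL}{m+L}\mu(c)=\tfrac{m}{m+L}T\mu(c).
\]

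\textbf{Matching $R_\mu$.} It remains to identify $\mu(c)$ with $R_\mu$. In the rotting case $R=1$ and $R_\mu=\max\{|\mu(C_\theta)|,|\mu(-C_\theta)|\}$. Take $\thetab^*$ such that the score of a fresh arm attains $C_\theta$. This fails for all-equal coordinates with $\|\thetab^*\|=C_\theta$, since then $c=C_\theta/\sqrt{m+1}$.

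\textbf{Main obstacle.} Reconciling the instance with the definition of $R_\mu$ is the hardest step. I would try two routes:
\begin{itemize}
\item If $\mu$ is nonnegative and increasing, use $d=1$ and $m$ repeated unit actions, with a linear-model-style instance in which $\mu(0)=0$ and the rewarded actions score $\mu(C_\theta)$. Following \citet{clerici2024linear}'s Proposition 3, the cyclic block pays zero on its $m$ warm-up rounds while the optimal sequence does not.
\item Otherwise, choose $\thetab^*=C_\theta\eb_1$ and let the optimal sequence interleave $\eb_1$ with zero-score filler arms. The filler arms have reward $\mu(0)$, and the bound must then be stated with $R_\mu-\mu(0)$ replacing $R_\mu$ when $\mu(0)\ne0$.
\end{itemize}
If neither route fits exactly, I would present the instance for the identity or nonnegative link and note that the constant is $\mu(C_\theta)$, matching $R_\mu$ when $\mu\ge0$.
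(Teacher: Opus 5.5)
There is a genuine gap: you never exhibit an instance in which the constant is actually $R_\mu$. Your skeleton is right: make the optimal sequence collect the top per-round mean at every round, and use that $\tilde r$ counts only $L$ of every $m+L$ rounds. Your round-robin instance is also valid, but it only yields $\frac{m}{m+L}T\mu(C_\theta/\sqrt{m+1})$. Neither fallback repairs this.

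\textbf{Route 2.} It proves a different statement, with $R_\mu-\mu(0)$ in place of $R_\mu$.

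\textbf{Route 1.} It leaves $\gamma$ unspecified, and both nonzero signs fail. For $\gamma<0$, repeated unit actions rot to score $(m+1)^{\gamma}C_\theta<C_\theta$. For $\gamma>0$, $R_\mu=\mu((m+1)^{\gamma}C_\theta)$, but the first $m$ rounds start from a partially empty memory, so the gap falls short of $\frac{m}{m+L}TR_\mu$ by an additive constant. Its justification (the cyclic block ``pays zero on its $m$ warm-up rounds'') concerns the actual reward of cyclic play. That is what \citet{clerici2024linear} bound, but this paper deliberately restates the approximation error with $\tilde r$, which does not count those rounds at all. So the choice of the first $m$ actions and the condition $\mu(0)=0$ are irrelevant here.

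The root problem is that rotting works against you. For $\gamma<0$ and $m\ge1$, a score of $C_\theta$ at round $t$ forces $\|\thetab^*\|_2=C_\theta$, $\xb_t=\thetab^*/C_\theta$, and $\thetab^*\perp\xb_{t-s}$ for all $s\le m$. Hence no two consecutive rounds can both attain it, which is why you had to spread $\thetab^*$ over $m+1$ directions and lost the factor $1/\sqrt{m+1}$.

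The missing observation is that this bound is pure bookkeeping, so memory should be switched off. Take $\gamma=0$ (allowed by the model, with $R=1$), $\thetab^*=C_\theta\eb_1$, and $\eb_1\in\Xcal$. Playing $\eb_1$ forever gives $\mathrm{OPT}=T\mu(C_\theta)$. Every block has $\tilde r\le L\mu(C_\theta)$, with equality for the all-$\eb_1$ block, so
\[
\mathrm{OPT}-\tfrac{T}{m+L}\,\tilde r(\tilde{\mathbf a})=\tfrac{m}{m+L}\,T\mu(C_\theta).
\]
This is exactly the claim whenever $\mu(C_\theta)\ge|\mu(-C_\theta)|$: the identity link of \citet{clerici2024linear}, the logistic link, or any nonnegative increasing link.

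Your caveat about the sign of $\mu$ is legitimate, and in fact sharp. Split the optimal sequence into its $T/(m+L)$ consecutive chunks of length $m+L$. The last $L$ rounds of each chunk earn exactly the proxy reward of that chunk, hence at most $\tilde r(\tilde{\mathbf a})$. So the left-hand side is at most the reward of the chunks' first $m$ rounds, i.e.\ at most $\frac{m}{m+L}T\mu(RC_\theta)$. The statement can therefore only hold when $\mu(RC_\theta)\ge|\mu(-RC_\theta)|$.

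Finally, the paper offers no proof of its own here and simply defers to \citet{clerici2024linear}.
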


\section{Auxiliary lemmas}
\label{ap_sec:lemmas}

    \begin{lemma}[Theorem 1 in~\citet{zhang2025generalized}]\label{lem:zhang_confi}
        Let $\delta \in (0,1]$. Set the step size to $\eta=1+\nu RC_{\thetab}$ and the regularization parameter to $\lambda =\max\{14d\eta \nu^2 R^2, 6 \eta \nu RC_{\thetab} U_\mu/g(\phi)\}$. 
        For all $\tau \in [T/(m+L]$ and $i \in [m+1, m+L]$, with probability at least $1-\delta$, we have $\thetab^* \in \Ccal_{\tau,i}(\delta)$:
        \begin{align*}
            \Ccal_{\tau,i}(\delta):=\left\{ \thetab \in \RR^d
            : \lVert \thetab_{\tau,i} - \thetab \rVert_{\Hb_{\tau,i}} \le \beta(\delta)   
            \right\},
        \end{align*}
        where $\beta(\delta)=\mathcal{O}\big(\nu RC_{\thetab} \sqrt{d( \nu RC_{\thetab}^2 + \ln\frac{R^2 T}{\delta}})\big)$.
    \end{lemma}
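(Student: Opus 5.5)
This lemma is Theorem~1 of \citet{zhang2025generalized} transplanted to the block setting. The plan is to reduce our block-structured process to their sequential online-GLM setting and then invoke their argument essentially verbatim.

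\emph{Step 1: reduction.} Flatten the executed rounds into a single sequence indexed by pairs $(\tau,i)$ with $i\in\{m+1,\ldots,m+L\}$, ordered lexicographically. At each such step the learner observes a feature $\bb_{\tau,i}=\Ab_{\tau,i-1}^\top\xb_{\tau,i}$ and a reward $y_{\tau,i}$ with $\EE[y_{\tau,i}\mid\Fcal]=\mu(\bb_{\tau,i}^\top\thetab^*)$.

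The key observation is that $\bb_{\tau,i}$ is measurable with respect to the information available before $y_{\tau,i}$ is revealed. Indeed, the whole block is chosen from pre-block data, and the memory matrix depends only on past actions. Hence the noise $\eta_{\tau,i}$ is a martingale difference with respect to the flattened filtration. Its conditional variance is $g(\phi)\dot\mu(\bb_{\tau,i}^\top\thetab^*)$ by the canonical exponential-family structure of Appendix~\ref{appx:canonical-family}. Moreover, $\|\bb_{\tau,i}\|_2\le R$ and $\|\thetab^*\|_2\le C_\theta$.

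The OMD recursion in Eq.~\eqref{eq:omd_update} is exactly the update of \citet{zhang2025generalized} run on this flattened sequence. The only change is feature norm $R$ instead of $1$, which can be absorbed by rescaling: either set $\bb\mapsto\bb/R$ and $\thetab^*\mapsto R\thetab^*$, or track $R$ through the constants. This explains the $R$ factors in $\eta$, $\lambda$ and $\beta$.

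\emph{Step 2: replay their proof.} First, use the one-step OMD inequality with the local quadratic $\widetilde\Hb_{\tau,i}$. Then apply self-concordance (Assumption~\ref{assm:self concordance}) to compare the true loss with its second-order expansion on $\Bcal_d(C_\theta)$; this is where $\eta=1+\nu RC_\theta$ and the lower bound $\lambda\ge 14d\eta\nu^2R^2$ enter. Summing telescopically then bounds $\|\thetab_{\tau,i}-\thetab^*\|^2_{\Hb_{\tau,i}}$ by three pieces: a regularization term $\lambda C_\theta^2$, a martingale term $\sum\eta_{s}\bb_s^\top(\thetab_s-\thetab^*)$, and a log-determinant term $d\log(1+R^2T/(d\lambda))$.

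\emph{Step 3: the martingale term, which is the main obstacle.} This term must be controlled uniformly over all $T$ steps with a variance-adaptive (Freedman-type) inequality. Two properties are needed for that inequality to apply:
\begin{itemize}
\item the predictable factor $\bb_s^\top(\thetab_s-\thetab^*)$ must be $\Fcal$-measurable before $y_s$ is observed, and
\item its magnitude must be bounded, by $2RC_\theta$ through the projection onto $\Bcal_d(C_\theta)$.
\end{itemize}
Both hold in the flattened order, because within a block $\thetab_{\tau,i}$ is computed only from $y_{\tau,m+1},\ldots,y_{\tau,i-1}$. The fact that actions were chosen without seeing these rewards only makes the features more predictable, not less.

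A union bound (or the stopping-time argument used by \citet{zhang2025generalized}) over the at most $T$ steps then gives, with probability at least $1-\delta$, $\beta(\delta)^2=\Ocal\big(\nu^2R^2C_\theta^2 d(\nu RC_\theta^2+\ln(R^2T/\delta))\big)$ simultaneously for every $(\tau,i)$, including the endpoints $i=m+1$ and $i=m+L+1$ used elsewhere in the paper.
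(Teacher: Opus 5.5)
Your proposal is correct and follows the paper, which gives no separate proof of this lemma and simply imports Theorem~1 of \citet{zhang2025generalized}. Your flattening of the executed blocks into one lexicographically ordered sequence, together with the check that $\bb_{\tau,i}$, $\widetilde\Hb_{\tau,i}$ and $\thetab_{\tau,i}$ are all fixed before $y_{\tau,i}$ is revealed, is exactly the implicit justification that citation relies on. One small addition: if you rescale to unit-norm features, you should also take $\lambda\mapsto\lambda/R^2$ so that the $\Hb$-norms coincide.

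One further minor inaccuracy: \citet{zhang2025generalized} handle the stochastic part through a mix-loss argument over a predictable Gaussian mixture, not through a Freedman bound on $\sum_s\eta_s\bb_s^\top(\thetab_s-\thetab^*)$. That step still needs precisely the predictability and the conditional exponential-family law of $y_{\tau,i}$ that your reduction supplies, so the conclusion is unaffected.
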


\begin{lemma}[Theorem 2 of \citet{abbasi2011improved}]
\label{lem:linear confidence}
Let $\hat{\thetab}_t$ be a ridge estimator of $\theta^*$ with regularization parameter $\lambda$. Assume that $\|\thetab^*\| \le C_\theta$, noise is conditionally $\sigma$-sub-gaussian, and the $l_2$ norm features are bounded by $R$. Then, for any $\delta>0$, with probability at least $1-\delta$, for all $0 \le t \le T$, $\theta^*$ lies in the set 
\begin{equation*}
    C_t(\delta) := \left\{ \thetab \in \RR^d : \|\hat{\thetab}_t- \thetab \|_{V_t} \le \sigma \sqrt{2 \log \left({1 \over \delta}\right)+ d \log \left({1 + {R^2 T \over \lambda d }}\right)}+\sqrt{\lambda}C_\theta\right\}. 
\end{equation*}
\end{lemma}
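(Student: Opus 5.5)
We would follow the standard route of \citet{abbasi2011improved}: decompose the ridge error into a noise part and a regularization bias, and control the noise part with a self-normalized martingale inequality. The memory setting enters only through the features: after flattening the blocks into one sequence $s=1,2,\ldots$, the features are $\bb_s=\Ab_{s-1}\xb_s$ with $\|\bb_s\|_2\le R$. Each $\bb_s$ is determined before its reward is revealed; within a block, all features are fixed before any of that block's feedback. So if $\Gcal_{s}$ denotes the $\sigma$-field generated by all features up to $s+1$ and all rewards up to $s$, then $\bb_s$ is $\Gcal_{s-1}$-measurable and $\eta_s$ is conditionally $\sigma$-sub-gaussian given $\Gcal_{s-1}$. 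This is exactly the filtration structure the lemma needs.

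\textbf{Step 1 (error decomposition).} Write $\Vb_t=\lambda\Ib+\sum_{s\le t}\bb_s\bb_s^\top$ and $\Sb_t=\sum_{s\le t}\eta_s\bb_s$. Since $\hat\thetab_t=\Vb_t^{-1}\sum_s y_s\bb_s$, we get $\hat\thetab_t-\thetab^*=\Vb_t^{-1}\Sb_t-\lambda\Vb_t^{-1}\thetab^*$. Hence
\[
\|\hat\thetab_t-\thetab^*\|_{\Vb_t}\le\|\Sb_t\|_{\Vb_t^{-1}}+\lambda\|\thetab^*\|_{\Vb_t^{-1}}\le\|\Sb_t\|_{\Vb_t^{-1}}+\sqrt\lambda C_\theta,
\]
where the last step uses $\Vb_t^{-1}\preceq\lambda^{-1}\Ib$. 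This accounts for the $\sqrt\lambda C_\theta$ term in the radius.

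\textbf{Step 2 (self-normalized bound).} This is the main step. For fixed $\xb\in\RR^d$, the process $M_t(\xb)=\exp\big(\langle\xb,\Sb_t\rangle/\sigma-\tfrac12\|\xb\|^2_{\Vb_t-\lambda\Ib}\big)$ is a nonnegative supermartingale with $\EE M_0\le1$ by conditional sub-gaussianity. We would integrate $\xb$ against the Gaussian $\mathcal N(0,\lambda^{-1}\Ib)$ (the method of mixtures). The mixture $\bar M_t$ is again a supermartingale, and a Gaussian computation gives the closed form
\[
\bar M_t=\Big(\tfrac{\det\lambda\Ib}{\det\Vb_t}\Big)^{1/2}\exp\Big(\tfrac{1}{2\sigma^2}\|\Sb_t\|^2_{\Vb_t^{-1}}\Big).
\]
To obtain uniformity over all $t$, we apply a stopping-time argument together with Ville's maximal inequality, so $\PP(\exists t:\bar M_t\ge1/\delta)\le\delta$. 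On the complement,
\[
\|\Sb_t\|_{\Vb_t^{-1}}^2\le2\sigma^2\log\!\Big(\tfrac{\det(\Vb_t)^{1/2}}{\delta\,\lambda^{d/2}}\Big)\quad\text{for all }t.
\]
We would take care that the stopping argument respects the block filtration; by the measurability observation above, this causes no problem.

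\textbf{Step 3 (determinant bound).} By AM--GM on the eigenvalues, $\det\Vb_t\le(\operatorname{tr}\Vb_t/d)^d\le(\lambda+tR^2/d)^d$. Therefore $\log(\det\Vb_t/\lambda^d)\le d\log(1+R^2T/(\lambda d))$. Substituting this into Step 2 and combining with Step 1 gives the stated radius $\sigma\sqrt{2\log(1/\delta)+d\log(1+R^2T/(\lambda d))}+\sqrt\lambda C_\theta$.
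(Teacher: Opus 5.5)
Your proposal is correct, and it is the same argument that underlies the cited result. The paper gives no proof of its own; it imports Theorem 2 of \citet{abbasi2011improved}, which is exactly your chain: the ridge error decomposition, the self-normalized bound via Gaussian mixing and a maximal inequality, and the trace--determinant bound $\det \Vb_t \le (\lambda + tR^2/d)^d$. Your check that each block feature $\bb_s$ is predictable with respect to the flattened filtration is the only memory-specific ingredient needed, and you handle it correctly.
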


\begin{lemma}[Lemma 11 of \citet{abbasi2011improved}]
\label{lem:elliptical potential}
    Let $(\bb_t)_{t\ge 1}$ be a sequence in $\RR^d$ with $\|\bb_t\|_2 \le R$ for all $t\ge1$ and $V_t:=\lambda \Ib_d + \sum_{s=1}^{t-1}\bb_s \bb_s^\top$. If $\lambda \ge \max \{1, R^2 \}$, then 
    \begin{equation*}
        \sum_{t=1}^T \|\bb_t\|_{V_t^{-1}}^2 \le 2d \log \left(1 + {R^2 t \over \lambda d}\right).  
    \end{equation*}
\end{lemma}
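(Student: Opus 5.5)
The statement is the elliptical potential lemma (Lemma 11 of \citet{abbasi2011improved}). I will prove it by the standard determinant-telescoping argument.

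\textbf{Step 1: one-step determinant identity.} Since $V_{t+1}=V_t+\bb_t\bb_t^\top = V_t^{1/2}(\Ib_d+V_t^{-1/2}\bb_t\bb_t^\top V_t^{-1/2})V_t^{1/2}$, the matrix determinant lemma gives
\begin{equation*}
\det V_{t+1}=\det V_t\,\bigl(1+\|\bb_t\|_{V_t^{-1}}^2\bigr).
\end{equation*}
Iterating from $V_1=\lambda\Ib_d$ yields
\begin{equation*}
\sum_{t=1}^T\log\bigl(1+\|\bb_t\|_{V_t^{-1}}^2\bigr)=\log\frac{\det V_{T+1}}{\lambda^d}.
\end{equation*}

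\textbf{Step 2: linearize the logarithm.} Because $V_t\succeq\lambda\Ib_d$ and $\lambda\ge R^2$, every summand satisfies $\|\bb_t\|_{V_t^{-1}}^2\le R^2/\lambda\le1$. On $[0,1]$ we have $x\le 2\log(1+x)$, since $\log(1+x)\ge x/2$ there by concavity (the inequality holds at $x=0$ and at $x=1$, where $\log 2>1/2$). Hence
\begin{equation*}
\sum_t\|\bb_t\|_{V_t^{-1}}^2\le 2\log\frac{\det V_{T+1}}{\lambda^d}.
\end{equation*}

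\textbf{Step 3: bound the determinant.} By AM--GM on the eigenvalues,
\begin{equation*}
\det V_{T+1}\le\bigl(\operatorname{tr}(V_{T+1})/d\bigr)^d\le(\lambda+TR^2/d)^d.
\end{equation*}
This gives
\begin{equation*}
\sum_{t=1}^T\|\bb_t\|_{V_t^{-1}}^2\le 2d\log\Bigl(1+\frac{R^2T}{\lambda d}\Bigr),
\end{equation*}
which is the claim (the statement's $t$ inside the logarithm should be read as $T$, or the bound holds for each prefix).

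The only delicate point is Step 2. It is exactly where the condition $\lambda\ge\max\{1,R^2\}$ is needed: this condition keeps each summand in $[0,1]$, which is what permits replacing $x$ by $2\log(1+x)$. Without it one would instead use $\min\{1,x\}\le 2\log(1+x)$ and carry a separate term.
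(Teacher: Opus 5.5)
Your proof is correct and is the standard determinant-telescoping argument behind Lemma 11 of \citet{abbasi2011improved}; the paper gives no proof of its own and simply cites that result. You are right to read the $t$ inside the logarithm as $T$, and note that your argument only uses the $\lambda\ge R^2$ part of the hypothesis (to keep each summand in $[0,1]$), so the additional requirement $\lambda\ge 1$ is not needed for this bound.
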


\begin{lemma}[Lemma 3 of \citet{jin2021shrinking}]
\label{lem:jin semi}
    Let $\Ab \in \RR^{d \times d}$ be any symmetric positive definite matrix, and $\ub_1, \ub_2, \ldots, \ub_L \in \RR^d$ be any vectors. Let $\lambda_1, \lambda_2, \ldots, \lambda_d$ be eigenvalues of $\Ab$, and $\nu_1, \nu_2, \ldots, \nu_d$ be the eigenvalues of $\Ab + \sum_{k=1}^L\ub_k \ub_k^\top$. Then, for any $\xb \in \RR^d$ such that $\|\xb\|_2=1$, 
    \begin{equation*}
        \|\xb\|_{\Ab^{-1}}-\|\xb\|_{(\Ab + \sum_{k=1}^L\ub_k \ub_k^\top)^{-1}} \le \sum_{d'=1}^d {2 \over \sqrt{\lambda_{d'}}}-\sum_{d'=1}^d {2 \over \sqrt{\nu_{d'}}}.
    \end{equation*}
\end{lemma}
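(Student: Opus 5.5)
The statement to prove is Lemma~\ref{lem:jin semi}: for a positive definite $\Ab$ with eigenvalues $\lambda_{d'}$, the eigenvalues $\nu_{d'}$ of $\Ab+\sum_k \ub_k\ub_k^\top$, and a unit vector $\xb$,
\[
\|\xb\|_{\Ab^{-1}}-\|\xb\|_{\Bb^{-1}}\le \sum_{d'}\Bigl(\tfrac{2}{\sqrt{\lambda_{d'}}}-\tfrac{2}{\sqrt{\nu_{d'}}}\Bigr),
\qquad \Bb:=\Ab+\sum_k \ub_k\ub_k^\top .
\]
The plan is to interpolate between $\Ab$ and $\Bb$ along a path of positive definite matrices. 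We then compare the derivative of the left-hand quantity with the derivative of a spectral function on the right.

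Set $\Mb(s):=\Ab+s\Ub$ for $s\in[0,1]$, where $\Ub:=\sum_k\ub_k\ub_k^\top\succeq 0$, and define $f(s):=\|\xb\|_{\Mb(s)^{-1}}$. Then
\[
f'(s)=-\frac{\xb^\top \Mb^{-1}\Ub\Mb^{-1}\xb}{2f(s)} .
\]
On the spectral side let $g(s):=\sum_{d'}2\lambda_{d'}(\Mb(s))^{-1/2}=2\operatorname{tr}(\Mb(s)^{-1/2})$. Its derivative is $g'(s)=-\operatorname{tr}(\Mb^{-3/2}\Ub)$. The identity $\frac{d}{ds}\operatorname{tr}\phi(\Mb)=\operatorname{tr}(\phi'(\Mb)\dot\Mb)$ makes this computation legitimate despite noncommutativity. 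The claim $f(0)-f(1)\le g(0)-g(1)$ then follows by integration, provided we show $-f'(s)\le -g'(s)$ pointwise. That is, for every PD $\Mb$ and PSD $\Ub$ we need
\[
\frac{\xb^\top\Mb^{-1}\Ub\Mb^{-1}\xb}{2\sqrt{\xb^\top\Mb^{-1}\xb}}\le \operatorname{tr}(\Mb^{-3/2}\Ub).
\]
By linearity in $\Ub$, it suffices to treat rank one $\Ub=\ub\ub^\top$. The inequality then reads $(\ub^\top\Mb^{-1}\xb)^2\le 2\|\xb\|_{\Mb^{-1}}\,\|\Mb^{-3/4}\ub\|_2^2$.

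The key pointwise step comes next. Write $\ub^\top\Mb^{-1}\xb=(\Mb^{-3/4}\ub)^\top(\Mb^{-1/4}\xb)$ and apply Cauchy--Schwarz:
\[
(\ub^\top\Mb^{-1}\xb)^2\le \|\Mb^{-3/4}\ub\|^2\,\xb^\top\Mb^{-1/2}\xb .
\]
It remains to show $\xb^\top\Mb^{-1/2}\xb\le 2\sqrt{\xb^\top\Mb^{-1}\xb}$ for unit $\xb$. This holds even without the factor $2$, by Jensen. Expand $\xb$ in the eigenbasis of $\Mb$ with weights $p_j=x_j^2$, which sum to $1$. Then $\sum_j p_j\mu_j^{-1/2}\le(\sum_j p_j\mu_j^{-1})^{1/2}$ by concavity of the square root. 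This is the only place where $\|\xb\|_2=1$ is used. Integrating over $s\in[0,1]$ gives the lemma, in fact with constant $1$ in place of $2$.

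The main obstacle is justifying the trace-derivative formula for $g$ and making sure the pointwise comparison needs no commutation between $\Mb$ and $\Ub$. The formula is standard: it follows from the integral representation $\Mb^{-1/2}=\frac{1}{\pi}\int_0^\infty t^{-1/2}(\Mb+t\Ib)^{-1}dt$, differentiated under the integral sign and combined with the cyclicity of the trace. I would verify it that way rather than by eigenvalue perturbation, because the latter breaks down at repeated eigenvalues. Everything else is a finite-dimensional computation.
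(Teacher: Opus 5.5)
Your proof is correct, and it does more than the paper does: the paper gives no argument for this lemma and simply imports it from \citet{jin2021shrinking}. Each step of your interpolation $\mathbf{M}(s)=\mathbf{A}+s\mathbf{U}$ holds up:
\begin{itemize}
\item Both derivative formulas are right. The resolvent representation of $\mathbf{M}^{-1/2}$, combined with cyclicity of the trace, justifies $g'(s)=-\operatorname{tr}(\mathbf{M}^{-3/2}\mathbf{U})$ without any commutation between $\mathbf{M}$ and $\mathbf{U}$.
\item The reduction to rank one is legitimate, because for fixed $\mathbf{M}$ both sides of the pointwise inequality are linear in $\mathbf{U}$.
\item The Cauchy--Schwarz and Jensen steps give $(\mathbf{u}^\top\mathbf{M}^{-1}\mathbf{x})^2\le \|\mathbf{M}^{-3/4}\mathbf{u}\|_2^2\,\|\mathbf{x}\|_{\mathbf{M}^{-1}}$ for unit $\mathbf{x}$. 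Hence $-f'(s)\le -\tfrac12 g'(s)$.
\end{itemize}

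What your route buys is a self-contained proof with constant $1$ in place of $2$. This constant is already tight for $d=1$. To recover the lemma as stated, you need $g(0)-g(1)\ge 0$. This follows from $g'(s)=-\sum_k\|\mathbf{M}^{-3/4}\mathbf{u}_k\|_2^2\le 0$; state it explicitly rather than leaving it implicit. The sharper constant would halve the $2LRd/\sqrt{\lambda}$ term in Eq.~\eqref{eq:difference} and the analogous term in Eq.~\eqref{eq:S1}, though no rate changes.

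A still shorter route to the same constant-$1$ bound uses operator monotonicity. Since $\mathbf{A}\preceq\mathbf{B}$, the L\"owner--Heinz theorem gives $\mathbf{D}:=\mathbf{A}^{-1/2}-\mathbf{B}^{-1/2}\succeq 0$. Then $\|\mathbf{A}^{-1/2}\mathbf{x}\|_2-\|\mathbf{B}^{-1/2}\mathbf{x}\|_2\le\|\mathbf{D}\mathbf{x}\|_2\le\lambda_{\max}(\mathbf{D})\le\operatorname{tr}(\mathbf{D})=\sum_{d'}\bigl(\lambda_{d'}^{-1/2}-\nu_{d'}^{-1/2}\bigr)$. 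Your calculus argument in effect trades that theorem for the trace-derivative identity plus an elementary Jensen step.
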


\section{Additional Experimental Details}
\label{appx:experiment-details}

We provide additional details for the experiments in Section~\ref{sec:experiments}.
For the experiments, we use a general memory matrix
\[
    \Ab_{t-1}
    =
    \left(
        \Ab_0+
        \sum_{s=1}^{m}
        \xb_{t-s}\xb_{t-s}^{\top}
    \right)^\gamma ,
\]
where \(\Ab_0\succeq 0\) is an initial memory matrix.
The action set consists of finite unit-norm feature vectors in \(\RR^d\), and the same action set and parameter \(\thetab^*\) are used for all algorithms within each environment.
For logistic rewards, rewards are sampled from a Bernoulli distribution with mean
\[
    \mu(\langle \xb_t,\Ab_{t-1}\thetab^*\rangle),
    \qquad
    \mu(z)=\frac{1}{1+\exp(-z)}.
\]
For linear rewards, the identity link is used.

\paragraph{Logistic rotting environment.}
We set \(\gamma=-5\), memory length \(m=5\), and feature dimension \(d=10\).
The action set consists of \(30\) arms sampled uniformly from the unit sphere.
The true parameter \(\thetab^\star\) is chosen from this set.

\paragraph{Logistic rising environment.}
Following the experimental setup of~\citet{clerici2024linear}, we use \(\gamma=1\), \(m=2\), and \(d=2\).
The action set is the unit ball \(\Bcal_d:= \{ \xb \in \RR^d : \| \xb \|_2 \le 1 \}\).
As mentioned by~\citet{clerici2024linear}, \texttt{Oracle Greedy} is optimal when \(\Ab_0=\Ib_d\).
To study a nontrivial rising setting, we follow the same idea as the previous work and use a non-isotropic initial matrix
\(\Ab_0=\eb_1\eb_1^\top\).
For each run, we sample \(\epsilon\sim\mathrm{Unif}[0.001,0.1]\) and set
$
    \thetab^*
    =
    [\sqrt{\epsilon},\sqrt{1-\epsilon}]^\top .
$

\paragraph{Linear rotting environment.}
We consider a linear rotting environment with \(\gamma=-5\), \(m=30\), and \(d=10\).
The action set is the unit ball \(\Bcal_d\), and \(\thetab^*\) is sampled from \(\Bcal_d\).

\paragraph{Computational resources.}
All experiments were run on a MacBook Air with an Apple M3 chip, \(8\) CPU cores, and \(16\)GB unified memory.
The operating system was macOS 15.1.
Each experiment was run locally on CPU.

\end{document}